\documentclass[aps,twocolumn,superscriptaddress,longbibliography,10pt,pre]{revtex4-2}

\usepackage{amsmath,amsfonts,bm}









\def\eqref#1{equation~\ref{#1}}









\def\1{\bm{1}}






\def\rmB{{\mathbf{B}}}





\DeclareMathAlphabet{\mathsfit}{\encodingdefault}{\sfdefault}{m}{sl}
\SetMathAlphabet{\mathsfit}{bold}{\encodingdefault}{\sfdefault}{bx}{n}


\def\gB{{\mathcal{B}}}
\def\gC{{\mathcal{C}}}

\def\gG{{\mathcal{G}}}

\def\gM{{\mathcal{M}}}

\def\gS{{\mathcal{S}}}
\def\gT{{\mathcal{T}}}



\def\sR{{\mathbb{R}}}

\def\sZ{{\mathbb{Z}}}










\DeclareMathOperator*{\argmin}{arg\,min}

\usepackage{xcolor}
\definecolor{darkblue}{rgb}{0,0,0.6}
\usepackage[colorlinks,linkcolor=darkblue,citecolor=darkblue,urlcolor=darkblue]{hyperref}
\usepackage{amsmath}
\usepackage{amsfonts}
\usepackage{amsthm}
\usepackage{verbatim}
\usepackage{graphicx}
\usepackage{esint}
\usepackage{amssymb}
\usepackage{mathrsfs}
\usepackage{mathtools}
\usepackage{bm}
\usepackage{bbm}
\usepackage{circledsteps}
\usepackage{tikz,standalone}
\usetikzlibrary{matrix,positioning}
\usepackage{physics}
\usepackage{mathtools}
\newcommand{\beq}{\begin{equation}}
\newcommand{\eeq}{\end{equation}}

\usepackage{soul}
\usepackage{url}
\usepackage{cancel}
\usepackage{enumitem}
\usepackage{multirow}
\usepackage{comment}
\usepackage{booktabs}
\usepackage[capitalize,noabbrev]{cleveref}
\usepackage[english]{babel}
\babeladjust{autoload.bcp47 = on}
\DeclareMathOperator{\atantwo}{atan2}
\def\eqsp{\;}

\newcommand{\iinter}[2]{[\![#1, #2]\!]}

\newcommand{\selectparticle}[2]{#1_{(#2)}}
\newcommand{\tangenttorus}[1]{\mathcal{T}_{#1} \torus}
\newcommand{\torusdist}[2]{\mathrm{d}_{\torus}(#1,#2)}

\newcommand{\torusmod}[1]{\left(#1\right) ~\%~ L}

\newcommand{\torus}{\gM}
\newcommand{\invgroup}{G_{\gC}}
\newcommand{\pstar}{p_{\star}}
\newcommand{\pbase}{p_{\text{base}}}
\newcommand{\rmd}{\mathrm{d}}

\theoremstyle{plain}
\newtheorem{theorem}{Theorem}[section]
\newtheorem{proposition}[theorem]{Proposition}
\newtheorem{lemma}[theorem]{Lemma}
\newtheorem{corollary}[theorem]{Corollary}
\theoremstyle{definition}
\newtheorem{definition}[theorem]{Definition}

\theoremstyle{remark}

\begin{document}

\title{Boltzmann generators for amorphous particle systems}

\author{Louis Grenioux}

\thanks{These authors contributed equally.}

\affiliation{CMAP, CNRS, École polytechnique, Institut Polytechnique de Paris, 91120 Palaiseau, France}

\affiliation{Center for Computational Mathematics, Flatiron Institute, New York, NY, USA}

\author{Leonardo Galliano}

\thanks{These authors contributed equally.}

\affiliation{Laboratoire de Physique de l’École normale supérieure ENS, Université PSL, CNRS,  Sorbonne Université, Université de Paris, 75005 Paris, France}

\affiliation{Dipartimento di Fisica, Universit\`a di Trieste, Strada Costiera 11, 34151, Trieste, Italy}

\author{Ludovic Berthier}
\affiliation{Gulliver, CNRS UMR 7083, ESPCI Paris, PSL Research University, 75005 Paris, France}

\author{Giulio Biroli}
\affiliation{Laboratoire de Physique de l’École normale supérieure ENS, Université PSL, CNRS, Sorbonne Université, Université de Paris, 75005 Paris, France}

\author{Marylou Gabrié}
\affiliation{Laboratoire de Physique de l’École normale supérieure ENS, Université PSL, CNRS, Sorbonne Université, Université de Paris, 75005 Paris, France}

\date{\today}

\begin{abstract}
Sampling configurations in thermodynamic equilibrium is a long-standing challenge in statistical physics. Boltzmann generators address this problem by employing generative models to propose independent configurations, which are then reweighted via importance sampling using exact likelihood evaluations. Recent Boltzmann Generators based on continuous normalizing flows and flow matching have achieved significant success for particle systems and biomolecules. However, these approaches have not been extended to amorphous materials (glasses), for which equilibrium sampling is notoriously slow. Because of their disordered structure, the invariances and geometrical constraints of amorphous materials differ from those of crystals and biomolecules, preventing the direct use of existing generative models. Here, we develop Boltzmann Generators tailored to amorphous materials by building the required equivariances directly into Riemannian stochastic interpolants. Our framework incorporates periodic boundary conditions and particle symmetries using equivariant graph neural networks. Numerical experiments demonstrate that enforcing physical symmetries significantly improves the accuracy of Boltzmann Generators, but also reveal an intrinsic limitation of the continuous-flow formulation: accumulated numerical errors during likelihood integration break time-reversibility, compromising exact thermodynamic reweighting. These results reveal a fundamental challenge for continuous-flow generative models in statistical mechanics and call for alternative approaches that preserve exact thermodynamic consistency. 
\end{abstract}

\maketitle

\section{Introduction}

Sampling configurations at thermodynamic equilibrium is a long-standing challenge in statistical physics. The ability to generate equilibrium samples is essential for progress across a wide range of scientific applications, from biomolecular function and drug discovery to materials design \cite{liu2001monte, stoltz2010free, frenkel2023understanding,ohno2018computational}. In particular, accurate equilibrium sampling underlies the modeling of protein folding and binding, as well as crystal nucleation and growth \cite{parrinello1980crystal, matsumoto2002molecular, noe2009constructing,lindorff2011fast,buch2011complete}, with implications spanning global health and energy-storage technologies \cite{deringer2020modelling}. Consequently, developing methods for accurate and unbiased Boltzmann sampling remains a central objective for applications in chemistry, biology, and materials science.

The difficulty of Boltzmann sampling stems from the structure of the energy landscapes that define the equilibrium measure. For many systems of interest, the effective energy is high-dimensional, rugged, and populated by numerous metastable states sometimes separated by large free energy barriers. As a consequence, classical simulation-based methods such as Molecular Dynamics (MD) and Markov chain Monte Carlo (MCMC) can become trapped in local minima and require a computationally prohibitive number of steps to mix between relevant modes, thus producing strongly correlated samples and large statistical inefficiencies.

A recent line of work proposes to address this challenge by sampling through a trainable generator with tractable likelihoods, enabling importance-sampling reweighting to the target energy and unbiased estimation of observables. Early advances leveraged normalizing flows \citep{noe2019boltzmann, albergoFlowbasedGenerativeModels2019} and autoregressive models \citep{wuSolvingStatisticalMechanics2019}, and subsequent work has explored more expressive ODE-based continuous normalizing flows (CNFs) \cite{kohler2020equivariant, klein2023equivariant, rehman2025falconfewstepaccuratelikelihoods}. Overall, this paradigm, sometimes refered to as Boltzmann generators (BGs) following \cite{noe2019boltzmann}, has yielded promising results for molecular systems and many-particle benchmarks.

Among the broad class of particle systems, amorphous materials remain a particularly challenging frontier. Amorphous particle systems are disordered arrangements of interacting particles (possibly with multiple species) that lack the atomic periodicity of crystals. Prominent examples include structural glasses and glass-forming liquids \citep{berthier2011theoretical}. A major obstacle to their theoretical understanding is the extremely long equilibration timescales, even at moderate system sizes. This severely limits the reach of standard numerical sampling approaches \citep{berthier2023modern}. Swap Monte Carlo extends the accessible regime, but remains restricted to specific models \citep{ninarello2017models}. These limitations make amorphous materials a natural and important target for equilibrium samplers based on learnable generators.

A critical requirement for extending Boltzmann generators to amorphous materials is to explicitly encode the geometry and invariances of the underlying physical system into the models. Prior work has developed efficient equivariant BGs \citep{kohler2020equivariant, klein2023equivariant}, but these models were not designed to capture the specific invariances and geometric constraints arising from periodic boundary conditions (PBCs) used to simulate amorphous systems. More recently, equivariant CNFs trained by maximum likelihood have been applied to amorphous materials \citep{Jung2024normalizingflows}, but at a prohibitive computational cost due to the necessity to estimate their likelihoods in training, as will be further discussed below. Diffusion-based models provide a cheaper, alternative and can generate visually realistic amorphous samples \citep{yang2025generative}. However, their lack of a tractable likelihood prevents principled reweighting and therefore precludes exact Boltzmann sampling.

Motivated by these limitations, we focus on stochastic interpolants (SIs), a generalization of flow matchings (FMs) \citep{albergo2023building,albergo2023stochastic,lipman2023flow}. These methods admit a cheap training objective while retaining an expressive CNF generation mechanism, allowing (i) straightforward incorporation of symmetries via equivariant velocity fields \citep{kohler2020equivariant,satorras2021en_nf} and (ii) tractable likelihoods that enable importance weight corrections and unbiased estimation of expectations of observables \citep{chen2019neural}. Moreover, recent extensions to Riemannian manifolds provide a principled route to handle periodic boundary conditions by working directly with the appropriate geometry \citep{chen2024flow,wu2025riemannian}, as already exploited for crystalline settings \citep{miller2024flowmm}. Building on these ideas, our goal is to tailor Boltzmann generators to amorphous materials by enforcing the relevant geometric structure and invariances by design.
Guided by these goals, we make the following contributions:
\vspace{-0.6em}
\begin{itemize}[leftmargin=5mm]
	\item We introduce the equivariant Riemannian stochastic interpolant (ERSI) framework, which extends Riemannian stochastic interpolants by incorporating invariance constraints tailored to amorphous particle systems.
	\item We prove that the optimal marginal paths and velocity fields induced by our objective respect the full symmetry group relevant to multi-species amorphous systems on the torus, providing novel theoretical guarantees within the stochastic interpolant framework.
	\item We adapt the graph neural network architecture of \cite{satorras2021en_gnn} to respect the full set of symmetries relevant for amorphous materials.
	\item We apply our framework to two canonical models of multi-component metallic glass formers and benchmark against symmetry- and geometry-agnostic baselines. Our approach yields higher-quality generations and improved agreement of physical observables after importance-sampling reweighting, while exhibiting improved scalability with system size.
    \item However, our study also uncovers a fundamental limitation to the use of CNF generators for reweighting due to the discretization errors arising in computing the models likelihood, raising doubts about its general applicability for applications in statistical physics.
\end{itemize}

Incorporating invariances into push-forward generative models has been studied in \cite{kohler2019equivariantflowssamplingconfigurations, kohler2020equivariant, Wirnsberger2022atomistic, Midgley2023se3, Bilos2024scalable}, with \cite{klein2023equivariant} adapting FM to build equivariant transport maps, deriving results akin to our theoretical results
{, but limited to linear actions on $\sR^n$}. In parallel, Denoising Diffusion Probabilistic Models (DDPMs) were extended with equivariances \citep{xu2022geodiff, Hoogeboom2022equivariant}, enabling generation but not likelihood evaluation. All these advances relied on equivariant GNNs \citep{satorras2021en_gnn, satorras2021en_nf}, which also underpin our approach. Interestingly, later works showed that strong generative performance could be obtained even without explicit invariance constraints \citep{martinkus2023abdiffuser, Chu2024anallatom, joshi2025allatom}, suggesting that symmetries enhance efficiency and generalization, though they may not be strictly required for generation. Our experiments below demonstrate that incorporating invariances is advantageous.

Generative modeling for materials has already attracted significant attention, in particular for crystalline structures. Like the systems considered here, crystals exhibit a number of symmetries and non-Euclidean representations. Most approaches build on Riemannian extensions of DDPMs \citep{yang2023scalable, jiao2023crystal, jiao2024space, Zeni2025agenerative, levy2025symmcd}, while works such as \cite{miller2024flowmm, Sriram2024flowllm} resonate more closely with our setting by embedding invariances into Riemannian extensions of FMs. In this body of work, success is typically assessed by visual or structural fidelity to the training configurations, which makes the task substantially easier than checking the fidelity to the equilibrium Boltzmann distribution. Moreover, unlike amorphous materials, crystals can be described by unit cells and fractional coordinates, due to their atomic periodicity, which entails qualitatively distinct invariance structures.

The paper is organized as follows. In \cref{sec:preliminaries} we recall the characteristics of particle mixtures and provide some background on generative models for sampling particle systems. In \cref{sec:ersi}, we present our equivariant Riemannian stochastic interpolant framework. In \cref{sec:methods}, we present the detailed setup of our numerical evaluation of ERSI, before presenting the results of our investigation in \cref{sec:numerical-results}.

\section{Generative models for multi-component particle systems}

\label{sec:preliminaries}

\subsection{Modeling mixtures of particles}

\subsubsection{Definitions and equilibrium distribution}

We consider classical models of multi-component particle systems, which describe a system of interacting particles at a constant temperature and volume. The microscopic state of the system is fully specified by a configuration $C = (s, X)$, where $s$ encodes the discrete chemical species of each particle from a finite set $\gS$, and $X$ denotes their spatial coordinates.

To minimize finite-size effects and accurately mimic bulk behavior, simulations must employ periodic boundary conditions. Mathematically, this corresponds to embedding the particle coordinates on a $d$-dimensional flat torus $\torus = [0,L)^d$, where $L$ is the linear size of the simulation box.
This defines a Riemannian manifold, where particles exiting one face of the box smoothly re-enter from the opposite side. The natural metric on $\torus$ is the nearest image distance
\begin{equation}\label{eq:torus:dist}
    \torusdist{\selectparticle{X}{i}}{\selectparticle{X}{j}} = \min_{k \in \sZ^d} \norm{\selectparticle{X}{i}-\selectparticle{X}{j} + kL}\eqsp,
\end{equation}
where $\selectparticle{X}{i} \in \torus$ denotes the coordinates of the $i$-th particle. 
Note that $\mathrm{d}_{\torus}$ is a distance on $\torus$ but not on $\sR^d$ since it does not satisfy the triangular inequality.
The full state space of the system is the product manifold $\gC = \gS^N\times\torus^N$.
On the torus, particle coordinates are not unique: for any $(s, X) \in \gC$, all configurations of the form $(s', X')$ with $s' = s, \; X' = X + kL, \;\; k \in \sZ^{Nd}$ represent the same physical state. This ambiguity is lifted by applying component-wise the modulo operator
$$
    A \;\% \; L = A - \left\lfloor \frac{A}{L} \right\rfloor L\eqsp, \qquad A \in \sR\eqsp,
$$
mapping all equivalent coordinates back into the fundamental domain. 

The thermodynamic equilibrium of the system is determined by a potential energy function $U:\mathcal C\to\mathbb R$. For typical particle models, this is expressed as a sum of pairwise interactions based on inter-particle distances,
\begin{align}\label{eq:potential}
    U(s, X) \;=\; \sum_{i=1}^N \sum_{j<i}^N \mathrm{W}\left(s_i, s_j, \torusdist{\selectparticle{X}{i}}{\selectparticle{X}{j}}\right)\eqsp,
\end{align}
where $\mathrm{W} : \gS \times \gS \times \sR^+ \to \sR$ is the interaction potential. At temperature $T$, the {equilibrium distribution} is the Boltzmann measure
\begin{equation}\label{eq:boltzmann}
    \pstar(\rmd s, \rmd \text{vol}_X) \;=\; \frac{1}{\mathcal{Z}}\exp\left(-\frac{U(s,X)}{k_{\rm B}T}\right) \, \rmd s \, \rmd \text{vol}_X\eqsp,
\end{equation}
where $k_{\rm B}$ is the Boltzmann constant, $\rmd \text{vol}_X$ is the volume element over $\torus^N$ and the partition function ${\mathcal{Z} = \int \exp\left(-U(s,X) / k_{\rm B}T\right) \rmd s\,\rmd \text{vol}_X}$ ensures the normalization. 
This joint distribution allows species to vary freely and correspond to sampling from a ``semi-grand" canonical ensemble, where the total number of particles is fixed but the composition is not.  Since standard models of glass-forming mixtures are defined at a fixed composition, we will also be interested in sampling from $\pstar$ conditioned on a composition, where the number of particles per species is strictly conserved. Sampling from this conditioned distribution corresponds to sampling the canonical ensemble. Our objective is to build a generative model that approximates $\pstar$ (or its conditional on the composition) to allow efficient sampling of equilibrium configurations.

\subsubsection{Symmetries}

\label{sec:invariances}

\begin{figure}[t!]
    \centering
    \includegraphics[width=\linewidth]{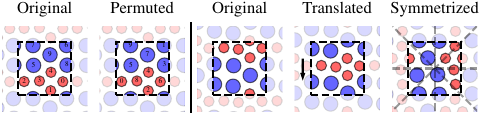}
    \caption{Illustration of invariance group actions on a particle configuration. The system contains two particle species with different effective diameters.}
    \label{fig:invariances}
\end{figure}

For a particle mixture at equilibrium, the potential energy, and consequently the Boltzmann distribution, are invariant under transformations that preserve relative distances and species composition. In a periodic box, these symmetries correspond to a specific set of transformations on the configuration space $\mathcal C$, denoted collectively by the group $\invgroup$ (illustrated in \Cref{fig:invariances}). This group comprises:
  
\begin{itemize}[leftmargin=5mm]
    \item \textbf{Particle permutations}, exchanging the indices of any two identical particles.
    For any permutation $\sigma \in S_N$, this transformation swaps both the species assignment and the spatial coordinates of the particles:
    $$
        g_{\sigma}(s, X)= \begin{pmatrix}
            s_{\sigma(1)}, \ldots, s_{\sigma(N)} \\
            \selectparticle{X}{\sigma(1)}, \ldots, \selectparticle{X}{\sigma(N)}
        \end{pmatrix}\eqsp.
    $$
    \item \textbf{Translations}, shifting all particle coordinates by the same vector and wrapping them back onto $\torus$. Denoting by $\mathbf{1}_N \in \sR^N$ the vector with all coordinates equal to $1$, for any $u \in \sR^d$:
    $$
        g_u(s, X) = \begin{pmatrix}
            s \\
            \torusmod{X + \mathbf{1}_N \otimes u}
        \end{pmatrix}\eqsp,
    $$
    \item \textbf{Box symmetries}, combining permutations of axes and sign flips of coordinates along these axes.
    Mathematically, this corresponds to multiplying the coordinates by a signed permutation matrix $M$ in the $d$-dimensional hyperoctahedral group $B_d$:
    $$
        g_M(s, X) = \begin{pmatrix}
            s\\
            \torusmod{(\mathrm{I}_N \otimes M) X}
        \end{pmatrix}\eqsp,
    $$
    where $\mathrm{I}_N$ is the identity matrix of size $N \times N$. 
    While isolated particle systems such as single molecules, like proteins, may exhibit full rotational invariance, restricting coordinates to a flat torus reduces these symmetries to signed permutations of the coordinate axes.
\end{itemize}
\vspace{-0.6em}
We prove in \Cref{app:invariances} that $\pstar$ defined in~\Cref{eq:boltzmann} and its conditional on the composition are $\invgroup$-invariant, which means that applying any action $g\in\invgroup$ to a configuration leaves its probability density unchanged.

\subsection{Boltzmann generators}

We refer to Boltzmann generators for the set of enhanced sampling schemes designed to produce unbiased equilibrium configurations using normalizing flows \cite{noe2019boltzmann,albergoFlowbasedGenerativeModels2019,nicoliAsymptoticallyUnbiasedEstimation2020,nicoliEstimationThermodynamicObservables2021, gabrieAdaptiveMonteCarlo2022, kohler2019equivariantflowssamplingconfigurations}. 
Normalizing flows are generative models that consist in learning a deterministic invertible transport map that pushes a simple, tractable base distribution $\pbase$ (such as an ideal gas) towards a proposal distribution $\hat q$ that closely approximates a complex target $\pi$ (such as the equilibrium Boltzmann distribution). Thanks to the invertibility of the transport map, the likelihood  $\hat q$ of the generated samples can be evaluated exactly through a change of variable formula (see the continuous version below) and the proposal distribution can be formally corrected to the true target measure via importance sampling. Namely, denoting by $X \in \mathcal{X}$ the configurations in this generic setup, the expectation value of an observable $\phi(X)$ under the target distribution $\pi$ is estimated using the self-normalized importance sampling estimator:
\begin{equation}
\label{eq:snis}
    \overline{\phi}_{\pi}^R=\sum_{i=1}^R\phi(X_i)\,\frac{\overline w(X_i)}{\sum_{j=1}^R\overline w(X_j)}\eqsp,\quad X_i\sim\hat q~~\text{i.i.d.}\eqsp,
\end{equation}
where $\overline w(X_i)=\overline \pi(X_i)/\hat q(X_i)$ is the unnormalized importance weight of the $i$-the configuration requiring the knowledge of the unnormalized density $\overline \pi$ of $\pi$. 
This estimator implicitly assumes that the support of the target distribution is completely contained within the support of the model proposal.
While the self-normalized estimator $\overline{\phi}_{\pi}^R$ introduces a finite-sample bias, it remains asymptotically consistent and converges to the exact target expectation as the number of proposal samples $R \to \infty$.

A highly expressive framework for parameterizing these transport maps is Continuous Normalizing Flows (CNFs)  \cite{chen2019neural, grathwohl2018scalable}. In a CNF, the transformation is defined by an ordinary differential equation (ODE),
\begin{equation}\label{eq:neuralode}
    \rmd \hat{X}_t = \hat v(t, \hat{X}_t)\rmd t, \quad \hat{X}_0 \sim \pbase\eqsp,
\end{equation}
where $\hat v$ is a learned time-dependent velocity field for $t\in[0,1]$ and the output of the map is $\hat{X}_1$, the configuration reached at the final time. 

This formulation yields a tractable likelihood by integration of the instantaneous change-of-variables formula \citep[Theorem 1]{chen2019neural}. Specifically, the logarithm of the density $\hat q_t$ of $\hat X_t$ evolves along the ODE
\begin{align}\label{eq:ode_likelihood}
    \frac{{\rmd}}{{\rmd} t} \log \hat q_t(\hat X_t) \;=\; -\operatorname{div}\hat v(t,\hat X_t), \\
    \quad \log \hat q_0(\hat X_0) = \log \pbase(\hat X_0) \notag\eqsp,
\end{align}
from which $\hat{q}=\hat{q}_1$ can be computed.

While access to this exact likelihood theoretically allows the model to be trained directly via maximum likelihood estimation, doing so is computationally prohibitive for high-dimensional many-body systems. The bottleneck is twofold. First, evaluating the divergence term  in \cref{eq:ode_likelihood} requires computing the trace of the velocity field's Jacobian, demanding expensive auto-differentiation at every step (although this can be somewhat mitigated by restricting the model to specific, less expressive architectures \citep{kohler2019equivariantflowssamplingconfigurations}). Second, maximum-likelihood training is fundamentally not ``simulation-free'' as it requires numerically integrating the ODE and tracking its gradients at every single optimization step. Whether this is achieved through memory-efficient techniques like the adjoint method \citep{chen2019neural} or by discretizing the ODE prior to optimization \citep{Gholaminejad2019anode}, the fundamental requirement of repeatedly simulating the flow trajectories renders traditional maximum likelihood training of CNFs exceptionally slow. The recently introduced stochastic interpolant framework offers an alternative, as discussed next.

\subsection{Stochastic interpolants}
\label{sec:si}
{Stochastic interpolants} (SI) \citep{albergo2023building, albergo2023stochastic} are a generative modeling framework generalizing {flow matching} (FM) \citep{liu2023flow, lipman2023flow} designed to overcome the limitations of training CNFs via maximum likelihood. SI relies on an interpolation process $(X_t)_{t \in [0,1]}$ between the base distribution ($X_0 \sim \pbase$) and the target distribution ($X_1 \sim \pstar$). This process is defined through an interpolation function $X_t = I(t, X_0, X_1)$ satisfying the boundary conditions $I(0, X_0, X_1) = X_0$ and $I(1, X_0, X_1) = X_1$. SI then seek to estimate a time-dependent velocity field $\hat v$ such that the process $(\hat X)_{t\in[0,1]}$ defined as the integration of the ODE \Cref{eq:neuralode} shares the same time-marginal distributions as $(X_t)_{t\in[0,1]}$.
An exact solution $v^\star_t$ is given by the conditional expectation of the path velocities,
\begin{align}\label{eq:vstar}
    v^\star(t, x) = \mathbb{E}\left[\partial_t I(t, X_0, X_1)\mid X_t=x\right]\eqsp.
\end{align}
While this conditional expectation is analytically intractable, $v^\star$ is also a minimizer of a mean-squared regression loss $v^\star \in \argmin_{\hat{v}} \mathcal{L}(\hat{v})$ with
\begin{equation}\label{eq:SIloss}
    \mathcal{L}(\hat{v}) \;=\; \int_0^1 \mathbb{E}\left[\norm{\hat{v}(t, X_t) -\partial_t I(t, X_0, X_1)}^2\right] \rmd t\eqsp.
\end{equation}
This optimization problem allows to build an empirical loss from samples of $\pbase$ and $\pstar$ to learn a velocity field $\hat{v}$ to approximate $v^\star$, completely avoiding the expensive divergence computation during training by maximum likelihood.

Recently, \cite{chen2024flow} extended FM (SI with linear interpolants) to manifold-supported distributions, introducing Riemannian flow matching. This was shortly followed by \cite{wu2025riemannian}, who formulated Riemannian stochastic interpolants (RSI) as the corresponding manifold generalization of SI. In standard Euclidean space, the simplest interpolation is a linear path. The natural generalization to non-Euclidean geometries is to interpolate instead along the shortest path on the manifold, namely a geodesic.

Following \cite{wu2025riemannian}, the geodesic path on the flat torus can be compactly written using the manifold's exponential and logarithmic maps as $I(t, X_0, X_1) = \exp_{X_0}\left(t\log_{X_0}(X_1)\right)$. These geometric operations have direct physical interpretations. The logarithmic map computes the shortest displacement vector between two configurations using the nearest-image convention,
\begin{equation}
    \log_{X_0}(X_1) = \torusmod{X_1 - X_0 + \frac{L}{2}} - \frac{L}{2}\eqsp,
\end{equation}
while the exponential map applies this displacement and wraps the coordinates back into the primary simulation box, 
\begin{equation}
    \exp_{X}(V) = \torusmod{X + V}\,.
\end{equation}
Because the velocity along this geodesic is constant and equal to the initial displacement $\log_{X_0}(X_1)$, the training objective from \Cref{eq:SIloss} simplifies directly to
\begin{align}\label{eq:loss_torus}
    \mathcal{L}(\hat{v}) \;=\; \int_0^1 \mathbb{E}\left[\norm{\hat{v}(t, X_t) - \log_{X_0}\left(X_1\right)}^2\right] \rmd t\eqsp.
\end{align}

\subsection{Equivariant flows}
Learning an accurate approximation of a complex, high-dimensional target distribution with a CNF formulation typically demands highly expressive vector fields, which in turn require vast parameterizations and large amounts of training data. However, the efficiency of this regression task can be drastically improved by guaranteeing by design that the generated distribution $\hat q$ strictly inherits the physical invariances of the target $\pi$.
Equivariant normalizing flows \cite{kohler2020equivariant,satorras2021en_nf} build on this principle. The central result is that coupling a symmetry-invariant base distribution $\pbase$ with an equivariant vector field ensures that the resulting time-dependent density $\hat q_t$ remains strictly invariant under the symmetry group at all times $t \in [0, 1]$.
Formally, a velocity field ${v}$ is equivariant with respect to a symmetry group $G$ if, for any transformation $g \in G$ it satisfies 
\begin{equation}
    v(t,g(X)) = J_g(X)\,v(t,X)\eqsp,
\end{equation}
where $J_g$ denotes the Jacobian of the transformation $g$.

When training via maximum likelihood over the family of equivariant velocity fields with a fixed invariant base distribution, as in \cite{kohler2019equivariantflowssamplingconfigurations,Jung2024normalizingflows}, imposing the equivariance of $\hat{v}$ within the neural network architecture is sufficient: by construction, the generated density inherits the invariance of the base at all times, guaranteeing that the final distribution belongs to the correct invariance class.

In this work, however, we will consider regression-based objectives of the form \Cref{eq:SIloss}, which are computationally cheaper but impose additional structure: the intermediate distributions are no longer free-form, but are instead constrained to follow a prescribed stochastic interpolant path. This raises two questions : (i) are the intermediate densities $(q_t)_{t \in [0,1]}$ of the interpolation process $(X_t)_{t \in [0,1]}$ invariant under the symmetry group? and (ii) if so, is the unique minimizer $v^\star$ of the regression objective equivariant? We address both questions in the following section focusing on the case of particle mixtures.

\section{Equivariant riemannian stochastic interpolants (ERSI)}

\label{sec:ersi}

To correctly model a glass-forming mixture with stochastic interpolants, the generated density must be defined on $\gC = \mathcal{S} \times \torus$ and be invariant under the physical symmetries introduced in \Cref{sec:invariances}. As argued below, this requires not only invariant initial and final distributions, but also that the interpolation path itself be explicitly equivariant. We then show how to adapt an equivariant graph neural network (GNN) architecture \cite{satorras2021en_gnn} to the torus to guarantee that the learned generative process respects these symmetries.

\subsection{Equivariance of the optimal velocity field}

\label{sec:equivariance_optimal_velocity}

Considering the group $\invgroup$ of symmetries of the Boltzmann distribution $\pstar$, we say an interpolation function is $\invgroup$-equivariant if
\begin{equation}
    I(t, g(C_0), g(C_1)) = g(I(t, C_0, C_1))\eqsp, \quad \forall t \in [0,1]\eqsp.
\end{equation}
Physically, this means that applying any symmetry transformation $g \in \invgroup$ to both the initial and final configurations must result in an identical transformation of the entire intermediate trajectory. 

We establish that if both the base distribution $\pbase$ and the target distribution $\pstar$ are $\invgroup$-invariant, employing a $\invgroup$-equivariant interpolant guarantees that the exact target velocity field $v^\star$ defined in \Cref{eq:vstar} is $\invgroup$-equivariant (see \Cref{app:ersi} for the proof).  Moreover, we also show that the time-marginal densities of the interpolation process $q_t$ remain $\invgroup$-invariant at all times. 

These results show that the equivariance of the interpolant is crucial to ensure the consistency of the regression task in \cref{eq:SIloss} where the learned velocity field $\hat{v}$ is also equivariant. To the best of our knowledge, these are original theoretical contributions to the FM and SI literature. Crucially, they show that, unlike in the optimal transport setting of \cite{klein2023equivariant}, aligning the boundary distributions alone is insufficient. 

\subsection{Designing the interpolant}

Based on the requirements derived above, we now specify our choice for the interpolation path. For the full configuration space $\gC$, comprising both the continuous spatial coordinates and the discrete chemical species, the most natural interpolation is the shortest path, or geodesic, on $\gC$ \cite{wu2025riemannian}. Assuming the discrete species are lifted into continuous variables during the flow, the geodesic interpolant decomposes independently across the two spaces (\citep{miller2024flowmm}, see Appendix A.3):
\begin{equation}\label{eq:geodesic_interpolant}
    I(t,(s_0,X_0),(s_1,X_1)) =
    \begin{pmatrix}
        (1-t)s_0 + t s_1 \\
        \exp_{X_0}\big(t\log_{X_0}(X_1))
    \end{pmatrix}\eqsp,
\end{equation}
where the exponential and logarithmic maps are defined in \Cref{sec:si}. The first component represents the geodesic in the Euclidean space of the species variables, which corresponds to a linear interpolation. The second component is the geodesic on the flat torus, which displaces particles along the shortest nearest-image vector between $X_0$ and $X_1$.
We prove in \Cref{prop:geodesic_interpolant_is_equi} that this interpolant is $\invgroup$-equivariant.

In practice, typical models of amorphous materials fix the composition of the mixture. We therefore target the canonical Boltzmann distribution conditioned on a specific particle composition. To enforce this in our framework, we keep the chemical species fixed along the interpolation path, setting $s_0=s_1$.
Consequently, the base distribution $\pbase$ is constructed as the product of a uniform spatial distribution over the torus $\torus^N$ and a uniform distribution over all particle permutations that match the required target composition. During training, a permutation can always be found such that the initial and final species exactly align. As a result, the chemical composition remains constant along the entire interpolation path. 

\subsection{Graph Neural Network architecture}

\label{sec:equi_model}

\begin{figure*}[t!]
    \centering
    \includegraphics[width=1.\linewidth]{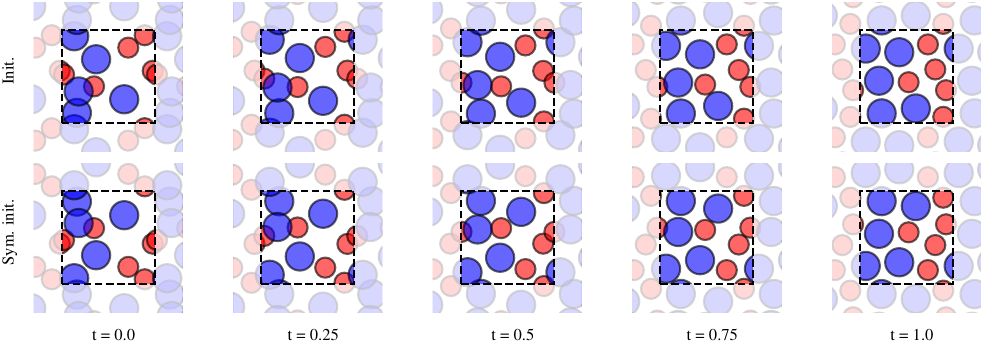}
    \caption{
    Two trajectories of particle configurations generated by integrating \Cref{eq:neuralode} with an equivariant velocity field $\hat v$ started from symmetric initial configurations (first column). The intermediate configurations
    are related by the same initial transformation.}
    \label{fig:two_trajectories}
\end{figure*}

In order to approximate the $\invgroup$-equivariant $v^\star$,  we parameterize the velocity field $\hat{v}$ using a family of GNNs, following recent work on equivariant architectures \citep{kohler2020equivariant, satorras2021en_gnn, jiao2023crystal, miller2024flowmm}. The construction adapts the architecture in \cite{satorras2021en_gnn} to the torus geometry. Position variables are initialized in the input configuration $\selectparticle{X^{0}}{i} = \selectparticle{X}{i}$ and particle features embed time and particle species $H^0_i = \left(t,s_i\right)$. The GNN architecture then iterates from layer $k$ to layer $k+1$
\begin{align*}
    \quad M^k_{ij} &= \hat{\phi}_e(H_i^k, H_j^k, \torusdist{\selectparticle{X}{i}}{\selectparticle{X}{j}}^2)\eqsp, \\
    P_i^k &= \sum_{i \neq j} \hat{\phi}_m(M_{ij}^k) M_{ij}^k, \quad H^{k+1}_i = \hat\phi_h(H_i^k, P_i^k)\eqsp,\\
     \selectparticle{X^{k+1}}{i} &= \exp_{\selectparticle{X^k}{i}} \left(\sum_{j \neq i} \frac{\log_{\selectparticle{X^k}{j}}\selectparticle{X^k}{i}}{\torusdist{\selectparticle{X}{i}}{\selectparticle{X}{j}} + 1} \hat{\phi}_d(M_{ij}^k)\right)\eqsp,
\end{align*}
where the $\hat \phi_\cdot$ functions are neural networks such that $\hat{\phi}_e$ outputs edges features in $\sR^n$ representing pairwise interactions between particles, $\hat{\phi}_m$ transforms edge messages while preserving dimension before aggregation, $\hat{\phi}_h$ updates the particle features and $\hat{\phi}_d : \mathbb{R}^n \to \mathbb{R}^d$ decodes interaction features into displacements on the torus. For a GNN of depth $K$,
the velocity field is finally obtained as
\begin{align}\label{eq:architecture}
    \hat{v}(t, C) = \begin{pmatrix}
        \mathbf{0}_{N \times d_s} \\
        \log_{\selectparticle{X}{1}}\selectparticle{X^K}{1}, \ldots, \log_{\selectparticle{X}{N}}\selectparticle{X^K}{N}
    \end{pmatrix}\eqsp,
\end{align}
where the null component corresponds to the preserved species-composition along the interpolation.

Unlike the linear $E(n)$ symmetries considered by the original architecture of \cite{satorras2021en_gnn}, the symmetry group $\invgroup$ is nonlinear. Extending equivariance from linear Euclidean spaces to $\invgroup$ requires some care.
We establish this rigorously in \Cref{app:ersi}, proving that this modified architecture defines a Lipschitz-bounded, $\invgroup$-equivariant velocity field on $\gC$.
To illustrate this, \Cref{fig:two_trajectories} compares two ODE trajectories whose initial configurations are related by a symmetry transformation. Because the learned velocity field is equivariant, this relationship is preserved throughout the entire integration. At any intermediate time $t$, the configurations in the two paths remain related by the exact same initial symmetry operation. 

Finally, the combination of a symmetry-preserving interpolation path with a geometry-aware equivariant architecture defines the generative modeling framework used in this work, which we refer to as the Equivariant Riemannian Stochastic Interpolant (ERSI). 

\section{Models and methods}

\label{sec:methods}

In this section, we describe the physical models and additional implementation choices made in our evaluation of the ERSI used in \Cref{sec:numerical-results}.

\subsection{Two model systems}

\label{sec:systems}

We consider two-dimensional systems composed of $N=10$ and $N=44$ particles in a square box with periodic boundary conditions at constant temperature $T$ in the canonical ensemble.

The first system \citep{Bernu1987soft, Perera1999Stability}, is a binary (50:50) mixture interacting via the inverse power law interaction potential
$$
    \mathrm{W}_{\mathrm{IPL}}(s_1, s_2, r) =
    \begin{cases}
    \epsilon \left(\dfrac{\sigma_{s_1 s_2}}{r}\right)^{12} + W_0, & r < r^{\mathrm{cut}}_{s_1s_2}, \\[4pt]
    0, & \text{otherwise},
    \end{cases}\eqsp,
$$
with $\sigma=\begin{pmatrix} 1.0 & 1.2 \\ 1.2 & 1.4 \end{pmatrix}$ and $\epsilon = 1$. The value $W_0$ is chosen to shift the potential continuously to zero at $r^{\mathrm{cut}}_{s_1s_2} = 2.5 \sigma_{s_1s_2}$.
The number density is fixed to $N/L^2 = 0.5$ and the temperature to $T=0.1$, corresponding to a dense fluid state.

The second system \cite{jung2023predicting, jung2025numerical} is a  ternary $\left(5/11,\,3/11,\,3/11\right)$ variation of the Kob-Andersen mixture Lennard-Jones \cite{Kob1995testing} introduced for efficient simulations. The interaction is defined as
\begin{widetext}
\begin{equation*}
    \mathrm{W}_{\mathrm{KA}}(s_1, s_2, r) =
    \begin{cases}
        4\epsilon_{s_1 s_2} \mathrm{W}_{\mathrm{LJ}}(s_1, s_2, r) 
        + \mathrm{W}_0 
        + \mathrm{W}_2\left(\frac{r}{\sigma_{s_1 s_2}}\right)^2 
        + \mathrm{W}_4\left(\frac{r}{\sigma_{s_1 s_2}}\right)^4\eqsp,
        & r < r^{\mathrm{cut}}_{s_1s_2}, \\[6pt]
        0, & \text{otherwise}\eqsp,
    \end{cases}
\end{equation*}
\end{widetext}
with 
\begin{gather*}
     \mathrm{W}_{\mathrm{LJ}}(s_1, s_2, r) = \left[\left(\frac{\sigma_{s_1 s_2}}{r}\right)^{12} - \left(\frac{\sigma_{s_1 s_2}}{r}\right)^{6}\right], \\
     \epsilon = \begin{pmatrix}
        1.0 & 1.5 & 0.75 \\
        1.5 & 0.5 & 1.5 \\
        0.75 & 1.5 & 0.75
    \end{pmatrix}, ~~
    \sigma = \begin{pmatrix}
        1.0 & 0.8 & 0.9 \\
        0.8 & 0.88 & 0.8 \\
        0.9 & 0.8 & 0.94
    \end{pmatrix},
\end{gather*}
$r^{\mathrm{cut}}_{s_1s_2} = 2.5 \sigma_{s_1s_2}$, and correction terms $(\mathrm{W}_0, \mathrm{W}_2, \mathrm{W}_4)$ chosen as in \cite{jung2023predicting}.
The number density is fixed at $N/L^2=1.192075$, and we consider two temperatures, $T = 1.0$, which marks the onset of glassy dynamics and $T = 0.32$, representing a deeply supercooled state.

\subsection{Training dataset}

\label{sec:dataset}

We assume access not only to the target energy function $U$, but also to a dataset of configurations distributed according to the equilibrium measure $\pstar$. This auxiliary data set enables an initial learning phase in which a generative model is trained to define a proposal distribution that can later be corrected via importance sampling or even improved in an adaptive loop \cite{gabrieAdaptiveMonteCarlo2022}.

We generated the training datasets with the Metropolis–Hastings Monte Carlo algorithm~\citep{metropolis1953}. Starting from particles uniformly distributed in the box, the update kernel consists in selecting one particle randomly with equal probability and attempting a displacement drawn from a Gaussian distribution centered at the current position with standard deviation $0.065$~\citep{dfrenkel96:mc}. The move is then accepted or rejected according to the standard Metropolis criterion. We define one unit of time as $N$ (number of particles) attempted moves. 
For the binary system at temperature $T=0.1$ and for the ternary system at temperature $T=1.0$, we run 100 independent chains, each initialized from a different random configuration, for $10^4$ time units to reach equilibrium. In this time scale, the potential energy rapidly relaxes to a steady value, and the self-intermediate scattering function~\citep{berthier2011theoretical} (a standard time-correlation function used to quantify structural relaxation in liquids) decays to zero within $10^4$ time units, confirming that the system is fully equilibrated. From these equilibrated configurations, each chain was then propagated for an additional $10^7$ time units, storing one configuration every $10^4$ steps.

For the ternary model at temperature $T=0.32$, we augmented the displacement moves with particle swap moves~\citep{ninarello2017models}, applied with probability $p_{\text{swap}}$. In a swap proposal, two particles of different species are randomly selected and the two species are exchanged. The proposal is then accepted or rejected according to the usual Metropolis criterion. Swap moves are known to dramatically accelerate equilibration in this specific system~\citep{jung2023predicting}. In this setting, we run 100 independent chains for $5\times 10^4$ time units to reach equilibrium, followed by $5\times 10^7$ time units for data collection, storing one configuration every $5\times 10^4$ steps.

Both procedures yield a total of $10^5$ uncorrelated equilibrium configurations, which we use as training data. The datasets are available on Zenodo \cite{grenioux_2025_17966995}.

\subsection{Equivariant optimal transport}

To improve training efficiency and shorten the transport trajectories, recent works replace the independent endpoint sampling $(X_0, X_1) \sim \pbase \otimes \pstar$ with a coupling closer to the optimal transport (OT) coupling $(X_0, X_1) \sim \Pi(\pbase, \pstar)$ \citep{tong2024improving, Albergo2024ot}. To ease computations, the OT problem is solved between mini-batches of samples from $\pbase$ and $\pstar$  used when computing the training objective (\ref{eq:loss_torus}), as described in \citep{Fatras2021Unbalanced}, which is an idea widely adopted for particle systems \citep{klein2023equivariant, Song2023equivariant, irwin2024efficient}. Given two configurations $C_0=(\tilde  s , X_0)$ and $C_1=(\tilde  s, X_1)$ with identical composition, each is partitioned by species, and within each group the Hungarian algorithm is applied using $\mathrm{d}_{\torus}$ (see \Cref{eq:torus:dist}) as cost. This OT-based matching, consistent with the particle-permutation invariance, shortens the interpolation paths significantly. While previous work \cite{klein2023equivariant} has accounted for additional invariances in the OT coupling, we found that further applying discrete box rotations to optimally align the spatial orientations of $C_0$ and $C_1$ before interpolation substantially increased the computational overhead without producing measurable improvements in the thermodynamic observables. Therefore, in the results presented below, the OT coupling is restricted to particle permutations.

\subsection{Baseline methods}

\label{sec:benchmark_methods}

To isolate the respective roles of spatial geometry and coordinate invariances, we evaluate the performance of ERSI against two reference baselines designed to separately relax these two constraints.

The first reference model is a geometry-aware but symmetry-agnostic baseline, which we refer to as Riemannian Stochastic Interpolants (RSI). This model accounts for periodic boundary conditions by employing the geodesic interpolation from \Cref{eq:geodesic_interpolant} directly on the flat torus $\torus^N$ as ERSI, but it does not structurally enforce the invariances of the group $\invgroup$. The velocity field is parameterized by a standard multilayer perceptron (MLP) that maps raw coordinate arrays without permutation or translation constraints, allowing one to test whether a symmetry-agnostic network can implicitly infer these physical conservation laws from the training configurations.

The second reference model is a geometry-agnostic Equivariant Flow Matching (EFM) baseline, following the general formulation of \cite{klein2023equivariant}. This method uses the equivalent graph neural network introduced in \cite{satorras2021en_gnn} to enforce the full $E(d)$ symmetry group of permutations, translations, and rotations within flat Euclidean space $\sR^{Nd}$. By neglecting the toroidal topology of the periodic box, this baseline allows us to directly monitor the physical consequences of ignoring boundary conditions during the generative process.

For a fair comparison, the total number of trainable parameters across the three generative models is matched, and all models are trained for the same number of epochs (see \Cref{app:impl_details} for details). 

\subsection{Relevant physical observables}

\label{sec:observables}

To assess the physical validity of the generated configurations, we monitor three standard structural and thermodynamic observables. 
First, we compute the average potential energy 
\begin{equation}
\langle U\rangle =\mathbb{E}_{\pstar}[\mathrm{U}_\star]
\end{equation}
of the generated configurations. Second, we evaluate the specific heat capacity at constant volume,
\begin{equation}
c_V = \frac{\mathbb{E}_{\pstar}[\mathrm{U}_\star^2] - \mathbb{E}_{\pstar}[\mathrm{U}_\star]^2}{N k_{\rm B} T^2} \eqsp,
\end{equation}
which probes energy fluctuations and is notoriously sensitive to sampling inaccuracies \citep{flenner2006hybrid,Jung2024normalizingflows}. 
Third, we determine the liquid structure via the radial distribution function \citep{barrat2003basic},
\begin{equation}
g(r) = \mathbb{E}_{\pstar}\left[ \frac{L^2}{N^2} \sum_{i=1}^N \sum_{j \neq i}^N \delta\left(r - \torusdist{\selectparticle{X}{i}}{\selectparticle{X}{j}}\right) \right] \eqsp,
\end{equation}
which measures the density profile around a reference particle.

The statistical efficiency of the self-normalized reweighting protocol is determined by the configuration-space overlap between the proposal density $\hat q$ and the target measure $\pstar$. If the proposal distribution deviates significantly from the target, the importance weights become highly unevenly distributed, causing the variance of the estimator to diverge. We track this sampling quality by measuring the normalized effective sample size (ESS), defined via the inverse participation ratio of the normalized weights $\overline w(C_i) = e^{-\mathrm{U}_\star(C_i)/k_{\rm B} T}/\hat{q} (C_i)$:
\begin{equation}
\mathrm {ESS}(C_{1:R})=\frac{\left(\sum_{i=1}^R\overline w(C_i)\right)^2}{R \sum_{i=1}^R\overline w_i(C_i)^2}\eqsp, \text{ for i.i.d. } C_i \sim \hat{q} \, .
\end{equation}
The value of the ESS is bounded between $1/R$ and 1. 
This metric is inversely proportional to the relative variance of the importance weights, directly reflecting the asymptotic variance of the self-normalised estimator in \Cref{eq:snis}~\cite{Agapiou2017Importance}. A low ESS indicates an imbalanced weight distribution where the physical averages are dominated by a very small fraction of rare configurations, signaling that the proposal distribution does not generate a sufficient number of relevant states of the target system.

\section{Numerical results}

\label{sec:numerical-results}

\subsection{Role of geometric and symmetry constraints in the binary mixture}

\label{sec:ablation}

\begin{figure*}[t!]
    \begin{center}
        \includegraphics[width=1.\linewidth]{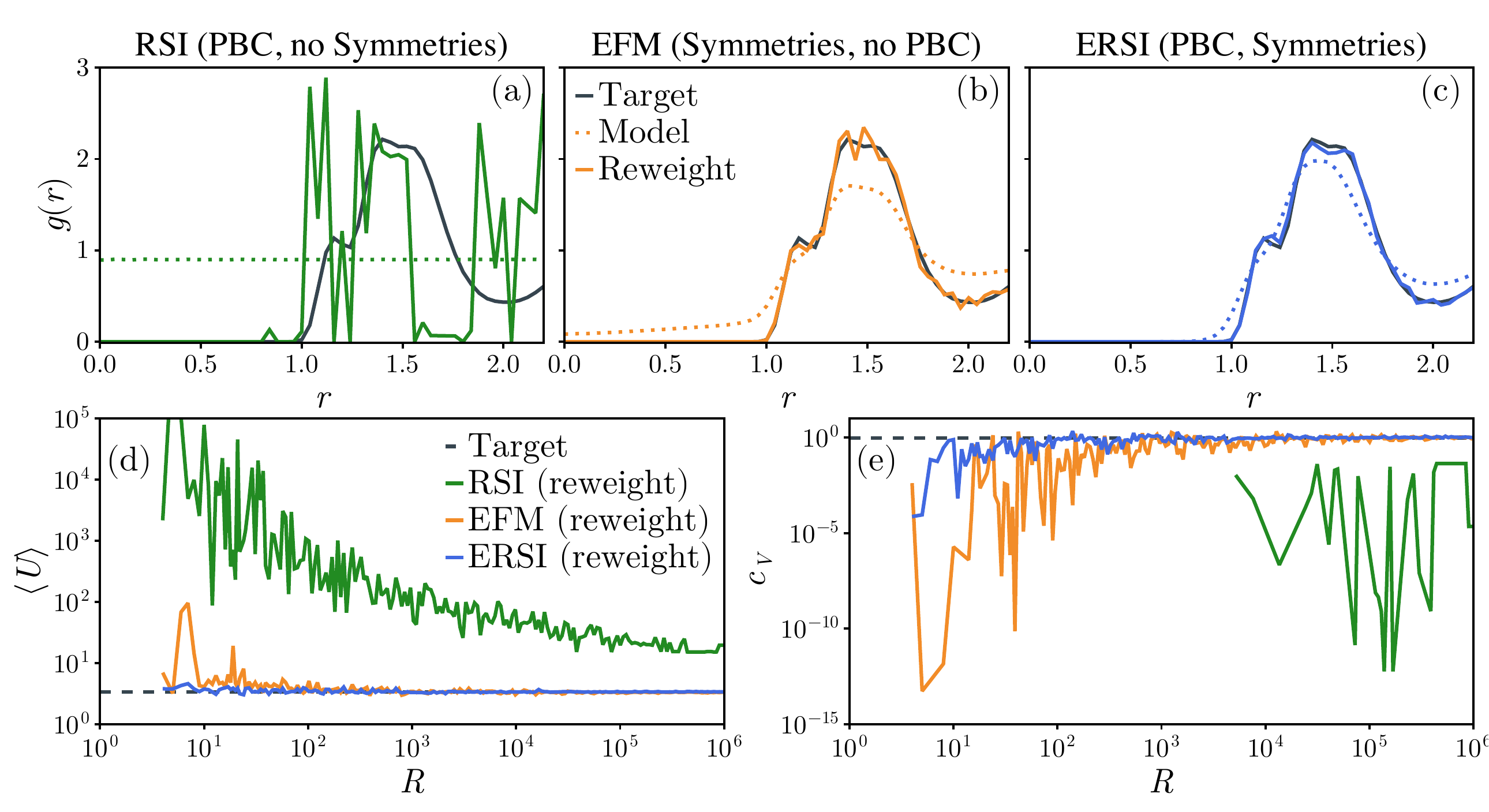} 
    \end{center}
    \caption{Physical observables for the $N=10$ binary mixture. Radial distribution function $g(r)$ computed for (a) RSI, (b) EFM, and (c) ERSI. Dotted, solid colored and solid black lines represent uncorrected model proposals, reweighted estimates, and reference target values, respectively. Convergence of (d) the mean potential energy $\langle U\rangle$ and (e) the specific heat $c_V$ as a function of the number of generated configurations $R$.}   
    \label{fig:N10} 
\end{figure*}

\begin{figure}[t!]
    \centering
    \includegraphics[width=1.\linewidth]{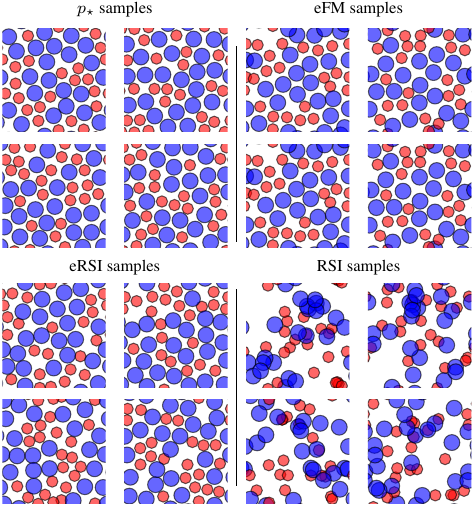}
    \caption{Typical particle configurations for the $N=44$ binary mixture. Snapshots obtained from the target distribution ($\pstar$) are compared with proposals generated by the EFM, RSI, and ERSI models. The symmetry-agnostic RSI baseline fails to move particles resulting in random configurations, while the geometry-agnostic EFM model generates realistic local packing but exhibits particle overlaps across the periodic boundaries. The ERSI framework simultaneously preserves both the local liquid structure and the boundary constraints.}
    \label{fig:samples_N44}
\end{figure}

\begin{figure*}[t!]
    \centering
    \includegraphics[width=1.\linewidth]{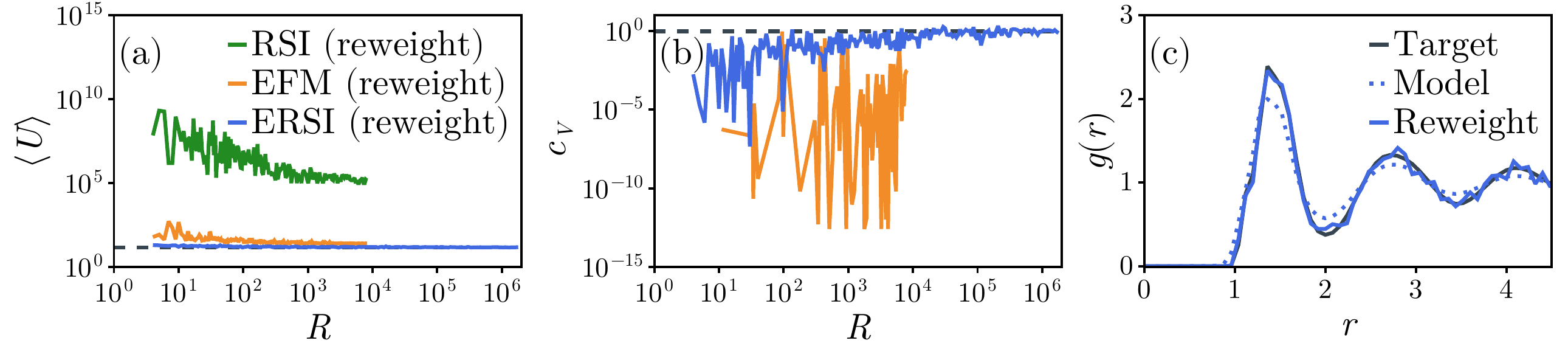} 
    \caption{Physical observables for the $N=44$ binary mixture. Convergence of (a) the mean potential energy $\langle U\rangle $ and (b) the specific heat $c_V$ as a function of the number of generated configurations $R$. (c) Radial distribution function $g(r)$ computed from $1.8\times10^6$ configurations generated by the ERSI framework. Reference baselines are truncated or omitted where unphysical boundary configurations or numerical instabilities prevent reliable estimate.}
    \label{fig:N44} 
\end{figure*}

We evaluate the performance of the Equivariant Riemannian Stochastic Interpolant (ERSI) by comparing it against the two reference baselines introduced in \Cref{sec:benchmark_methods}: the geometry-agnostic but symmetry-aware EFM model, and the geometry-aware but symmetry-agnostic RSI model. The efficiency of each architecture is verified by tracking the convergence of the quantities introduced in \Cref{sec:observables} as a function of the number of generated samples $R$. For all models, expectations are computed both directly from generated configurations and by reweighting the generated proposal samples through the importance sampling estimator in \Cref{eq:snis}. Reference equilibrium values, denoted as the target, are obtained directly from the training set. 

We first examine a binary mixture containing a small number, $N=10$, particles. As shown in \Cref{fig:N10}, the symmetry-agnostic RSI baseline fails to capture meaningful structure, yielding a completely flat radial distribution function $g(r)$. This indicates that a simple architecture cannot implicitly infer the particle permutation and translation invariances from the training data alone. As a consequence, reweighted estimates of physical quantities are incorrect even for large number of samples. In contrast, the geometry-agnostic EFM performs better: the average potential energy $\langle U\rangle $ and the specific heat $c_V$ can be recovered through reweighting, along with a noisy estimate of $g(r)$. The proposed ERSI framework displays the highest sample efficiency, converging rapidly to the exact target values for all quantities with significantly fewer samples than the EFM model.

To investigate how these architectural choices scale with the system size, we increase the number of particles to $N=44$. Moving to a larger system increases the dimensionality and complexity of the configuration space, exposing the scaling limitations of the baselines. Typical snapshots of the generated configurations are presented in \Cref{fig:samples_N44}, alongside typical training configurations from $\pstar$. RSI fails to move particles significantly, producing configurations that remain close to the uniform base distribution, hence its poor performance. Augmenting the training data with random actions from $\invgroup$ to help the RSI model learn the invariances was attempted, but did not improve performance. EFM generates more realistic configurations, but many particle overlaps occur near the box edges because periodic boundary conditions are ignored. These overlaps cause three issues: (i) averages obtained without reweighting display unphysical characteristics, such as a much higher value of $g(r<1)$, (ii) many samples are discarded during the correction, as configurations with overlapping particle pairs receive zero weight in \Cref{eq:snis}, and (iii) the likelihood of overlaps at the boundaries increases with system size, limiting the scalability of the EFM model. In contrast, configurations produced by ERSI correctly incorporate periodic boundary conditions and do not suffer from this problem, resulting in realistic configurations. 
This is quantitatively confirmed in \Cref{fig:N44}, where ERSI is the only method that allows to recover correct estimates of physical observables. 

\begin{figure*}[t!]
    \centering
    \includegraphics[width=1.\linewidth]{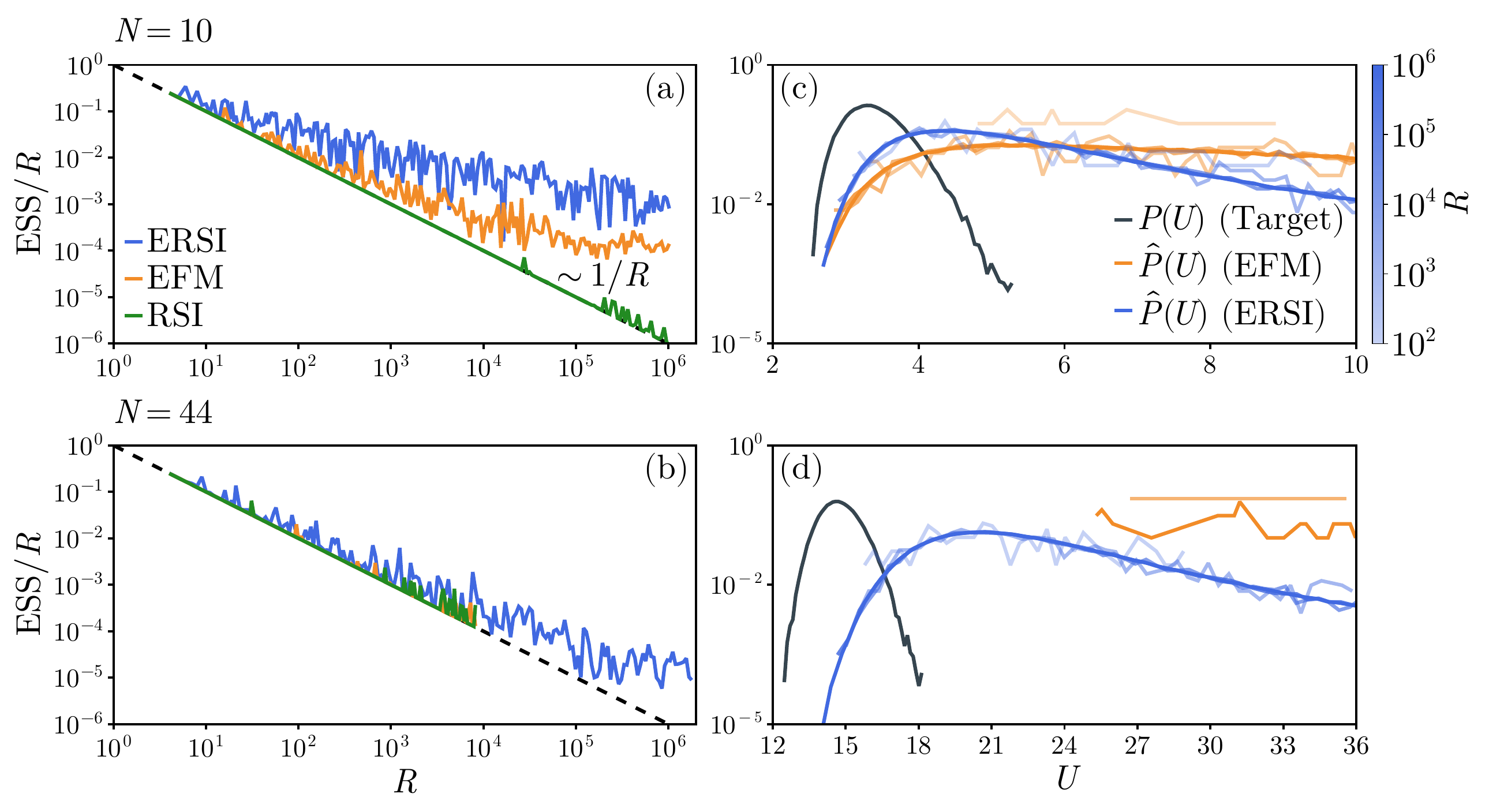} 
    \caption{Sampling efficiency and energy distribution overlap.
    (a–b) Normalized effective sample size $\mathrm{ESS}/R$ as a function of the number of generated configurations $R$ for (a) $N=10$ and (b) $N=44$ particles. Results are shown for the symmetry-agnostic RSI baseline (green), the geometry-agnostic EFM baseline (orange), and the proposed ERSI framework (blue). Dashed black lines indicate the asymptotic $1/R$ scaling regime where a single configuration dominates the reweighted estimator. 
    (c–d) Corresponding potential energy histograms $P(U)$ showing the overlap with the equilibrium target reference (dark grey solid curve) for different sample volumes $R$ (indicated by the color bar). For clarity, unphysical high-energy configurations exceeding twice the maximum target energy are omitted from the histograms; this truncation discards $100\%$, $\approx 84\%$, and $\approx 3\%$ of the proposals generated at $N=10$ by RSI, EFM, and ERSI, respectively. For $N=44$, the maximum sample size for RSI and EFM is bounded at $R=8\times10^3$ as generating more samples for these poor baselines does not provide further statistical benefit.
    } 
    \label{fig:iplESS} 
\end{figure*}

The difference in reweighting efficiency can be quantified by monitoring the normalised effective sample size ($\mathrm{ESS}/R$) as a function of the number of generated proposals $R$. In our simulations, we observe that the normalised ESS initially decays as $1/R$ over a wide range of sample sizes before eventual stabilisation on a finite plateau at large $R$. This transient regime implies that for small $R$, the importance sampling estimator is structurally dominated by a singular configuration carrying nearly all the statistical weight.
For the small system ($N=10$), Fig.~\ref{fig:iplESS}(a) shows that the crossover scale $\overline{R}$ at which the estimator escapes this $1/R$ scaling is strongly model-dependent. For the symmetry-agnostic RSI baseline, no crossover is observed, and the estimator remains trapped in the $1/R$ regime over the entire sampling range. For the geometry-agnostic EFM model, the plateau is reached at $\overline{R} \approx 10^4$, whereas the proposed ERSI framework achieves stabilization much earlier, at $\overline{R} \approx 10^3$, demonstrating a significantly higher sample efficiency. 

To rationalize this observation, we compare the energy distributions of samples from the three models with that of the target, shown in \Cref{fig:iplESS}(c). For clarity, we discard configurations with energies larger than twice the maximum observed in the target dataset. The discarded fraction is 100\% for RSI (hence no data are shown for this model), about 84\% for EFM, and about 3\% for ERSI. For small $R$, the two distributions $P(U)$ (target) and $\hat P(U)$ (model) overlap poorly: $\hat P(U)$ is concentrated at higher energies and only begins to penetrate the region of significant $P(U)$ weight from the right tail. As $R$ increases, rare samples from this region appear with very small $\hat P(U)$ but relatively large $P(U)$, producing very large reweighting factors. A few such configurations dominate the averages, driving the $1/R$ behavior of the ESS. Once $R$ exceeds $\overline{R}$, $\hat P(U)$ has infiltrated sufficiently deep into the bulk of $P(U)$ so that ratios $P(U)/\hat P(U)$ are less extreme, and the ESS stabilizes to a plateau. 
We repeat the analysis for $N=44$ in \Cref{fig:iplESS}, using up to $R=8{\times}10^3$ samples for RSI and EFM and up to $R=10^6$ samples for ERSI.
This choice is practical: since the baselines produce heavily overlapping configurations that carry vanishing statistical weight, extending their sample sizes provides no additional statistical benefit.
The results are qualitatively the same, although EFM samples are significantly worse compared to $N=10$ due to overlapping particles at the boundaries of the physical domain (see Fig.~\ref{fig:samples_N44}).

\subsection{Equilibrium sampling of the ternary mixture}

Having demonstrated that incorporating explicit geometric and symmetry constraints into the generative model significantly increases sampling efficiency for a binary mixture, we extend our analysis to a more complex glass-forming model: the ternary mixture described in \Cref{sec:systems}. This system is advantageous because equilibrium reference configurations can be produced at temperatures deep into the supercooled regime using swap Monte Carlo moves (see \Cref{sec:dataset}). This enables the generation of large training datasets at low temperatures where conventional molecular dynamics struggles to equilibrate, which allows for assessing the model's performance in a regime where relaxation dynamics is very slow. 

\begin{figure*}[t]
    \centering
    \includegraphics[width=1.\linewidth]{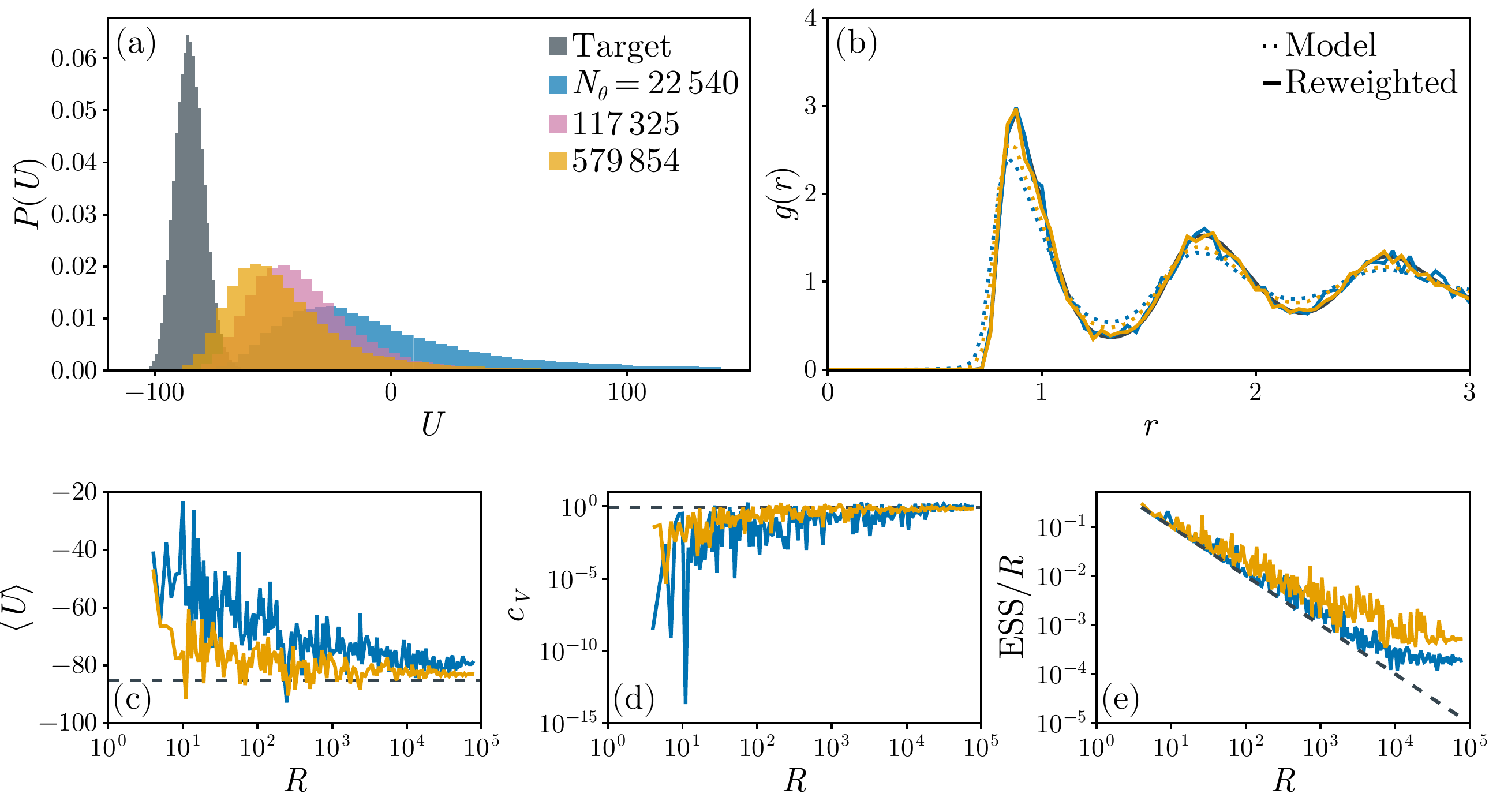} 
    \caption{Comparison of model sizes for the ternary mixture at $T=1.0$. 
    (a) Potential energy distribution $P(U)$ comparing the target reference (gray) with raw model proposals generated by architectures of varying number of trainable parameters $N_\theta$.
    (b) Radial distribution function $g(r)$ comparing the target reference (gray), the unweighted model proposals (dotted lines), and the reweighted estimates (full lines) across different model sizes.
    (c-d) Convergence of the reweighted mean potential energy $\langle U\rangle$ (c) and specific heat $c_V$ (d) as a function of the number of configurations $R$. 
    (e) Evolution of the normalized effective sample size $\mathrm{ESS}/R$ with $R$.
    }   
    \label{fig:model_size} 
\end{figure*}
We first focus on a state point at temperature $T = 1.0$, which corresponds approximately to the onset temperature of slow dynamics for this mixture. In this state point, we systematically investigate the effect of network capacity on generative performance, evaluating whether scaling up the number of model parameters improves the convergence of the reweighted estimators.
In Fig.~\ref{fig:model_size}(a), we plot the potential energy distribution $P(U)$ of model proposals generated by networks of varying sizes against the target reference ensemble. 
Figure~\ref{fig:model_size}(b) shows the corresponding radial distribution function $g(r)$, comparing the unweighted proposals (dotted lines) and the reweighted estimates (solid lines) against the target reference (gray). These data confirm that increasing the network capacity (ranging from approximately $22\times10^3$ parameters for the smallest model to $58\times10^4$ parameters for the largest) systematically leads to a closer agreement with the target structure. The improved quality of these proposal samples directly accelerates the convergence of the reweighted quantities. As displayed in Figs.~\ref{fig:model_size}(c) and (d), the estimators for both the mean potential energy $\langle U\rangle$ and the specific heat $c_V$ converge towards the target values at lower values of $R$ with larger architectures. This improved efficiency is also confirmed by the evolution of the normalized effective sample size in Fig.~\ref{fig:model_size}(e), where the crossover scale $\overline R$ at which the ESS departs from the $1/R$ regime and stabilizes on a finite plateau is lowered by a factor of 10 for the largest network compared to the smallest network. This improved performance, however, comes at the cost of approximately $1.5$ times slower training and approximately $6$ times slower sampling on an NVIDIA A100 GPU.

\begin{figure*}[t!]
    \centering
    \includegraphics[width=1.\linewidth]{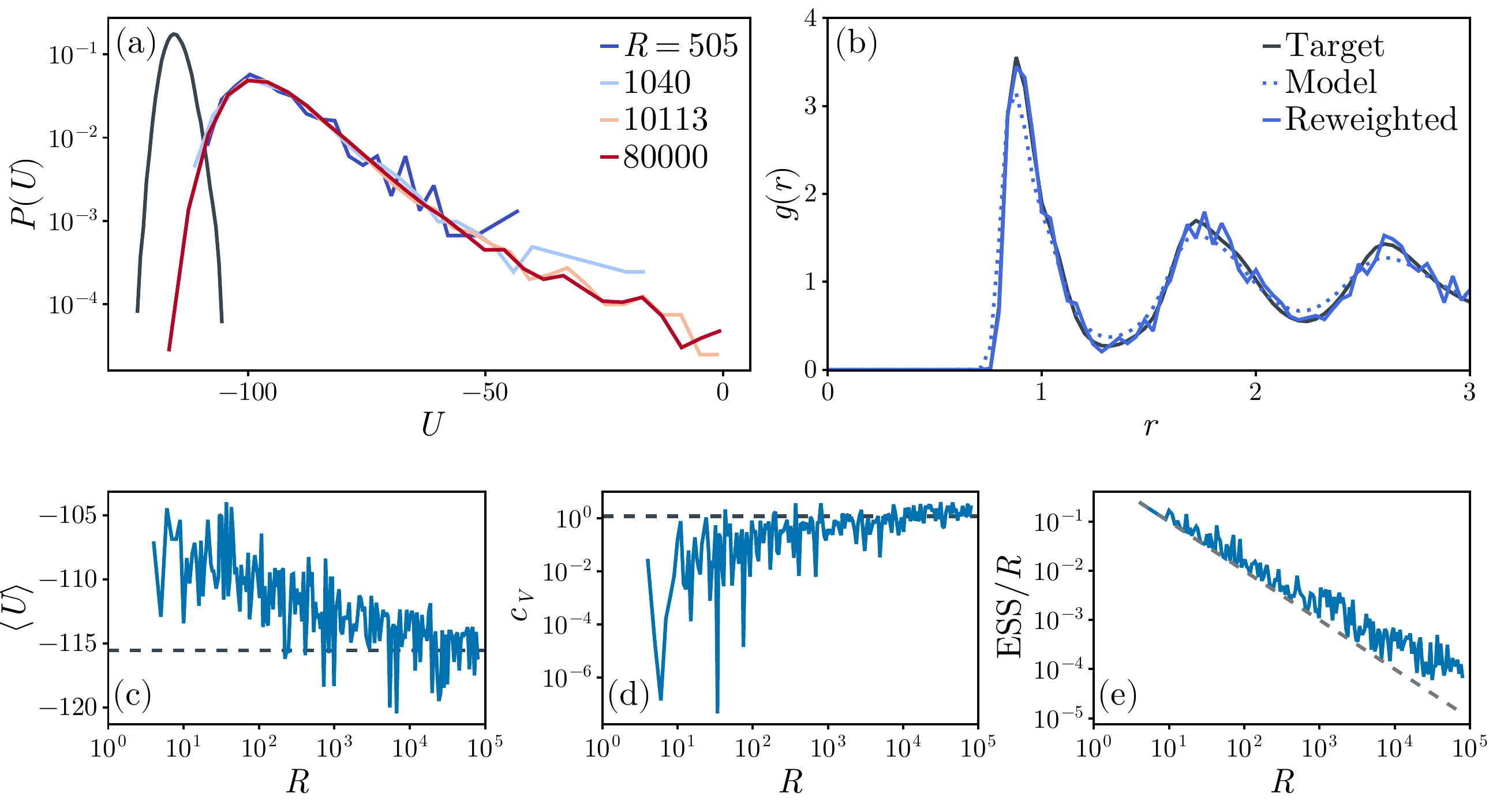} 
    \caption{Sampling efficiency of the ternary mixture in the supercooled regime at $T=0.32$. 
    (a) Potential energy distribution $P(U)$, comparing the target reference (gray) with the raw model proposals for different sample sizes $R$.
    (b) Radial distribution function $g(r)$ comparing the equilibrium reference (gray), the unweighted model proposals (dotted blue), and the reweighted estimates (solid blue).
    (c–d) Convergence of the reweighted mean potential energy $\langle U\rangle$ (c) and specific heat $c_V$ (d) as a function of the number of generated configurations $R$. 
    (e) Evolution of the normalized effective sample size with $R$ settling at approximately $10^{-4}$.
    }  
    \label{fig:jbb32} 
\end{figure*}

We now turn to a significantly more challenging regime by lowering the temperature to $T = 0.32$, which corresponds to a supercooled state point where the structural relaxation time of the system increases by a factor of approximately 300 compared to the onset temperature at $T = 1.0$. Numerical tests indicated that the network architecture with $58\times10^4$ parameters fails to achieve sufficient accuracy to allow for reweighting. Consequently, we scale up the network capacity to an architecture with approximately $10^6$ parameters.

The results are summarized in Fig.~\ref{fig:jbb32}. As shown in the potential energy distribution $P(U)$ in Fig.\ref{fig:jbb32}(a), the overlap between the model proposals and the equilibrium target is not large. The two distributions intersect only through their tails, where the low-energy proposals match the high-energy target samples.
This poor overlap leads to low statistical efficiency when reweighting physical quantities. The radial distribution function $g(r)$ in Fig.~\ref{fig:jbb32}(b) shows that the reweighted estimator successfully corrects the unweighted model proposals to match the target curve, though the resulting estimate remains somewhat noisy. Figures~\ref{fig:jbb32}(c) and (d) show the convergence of the reweighted estimators for the mean potential energy $\langle U\rangle$ and the specific heat $c_V$ as a function of $R$. Both quantities exhibit relatively large fluctuations up to large sample sizes $R \gtrsim 10^4$. 
This slow convergence across both structural and thermodynamic quantities is reflected in the normalized effective sample size $\mathrm{ESS}/R$ shown in Fig~\ref{fig:jbb32}(e), which follows the $1/R$ decay over most of the sample sizes, before plateauing to approximately $10^{-4}$ above $\overline{R} \approx 10^4$. This trend indicates that the reweighted averages are dominated by a small fraction of rare, low-energy configurations, and this explains the statistical noise in the pair correlation function. 

\subsection{Uncovering a fundamental limitation of continuous flows}

\subsubsection{Pre-filtering configurations before evaluating the density} 

Boltzmann Generators based on continuous normalizing flows offer an elegant route to exact reweighting, but their practical application to challenging scenarios reveals a severe computational bottleneck. Generating configuration by integrating \Cref{eq:neuralode} is relatively cheap, but evaluating the exact generated density $\hat{q}(C)$ requires solving a coupled ODE system with \Cref{eq:ode_likelihood}, which requires computing the divergence of the parameterized velocity field at each integration step. This divergence computation scales quadratically with the number of particles, and using the approximation trick of Hutchinson was shown to be too crude to perform statistical reweighting \cite{hoffmann2026boltzmann}. As a result, in challenging settings, where many samples are required to populate the rare low-energy states, the integration of the coupled ODE giving the likelihood becomes computationally prohibitive.

To circumvent this limitation, we can leverage the fact that, in our application, the majority of generated proposals have much higher energies compared to target samples and thus carry vanishingly small importance weights that do not contribute to equilibrium averages. We can thus implement a simple filtering strategy, where we could generate a large number of configurations using \Cref{eq:neuralode} without computing their densities. We then evaluate their potential energies $U$ (this is numerically cheap) in order to restrict the expensive density evaluation to a small fraction $\alpha$ corresponding to the lowest-energy samples, i.e. to samples that have a chance to contribute to the equilibrium average. The density $\hat q$ for these selected configurations is obtained by integrating \Cref{eq:ode_likelihood} backward from $t=1$ to $t=0$. A similar approach was formally introduced in \cite{tan2024scalable}, where analytical bounds for the resulting truncation error were derived. Here, we evaluate the robustness of this protocol empirically using our results for the ternary mixture at $T=0.32$. Specifically, we fix a target threshold for the retained partition function mass, $\mathcal{Z}_\alpha/\mathcal{Z} \approx \sum_{i=1}^{R_\alpha}\overline{w}(C_i)/\sum_{i=1}^R\overline{w}(C_i)$, and track the minimum fraction of configurations $\alpha$ required to satisfy this threshold across different total sample sizes $R$.
As shown in Fig.~\ref{fig:filtering}(a), retaining only the lowest-energy $10\%$ of the generated configurations consistently preserves more than $99.9\%$ of the total partition function across sample sizes $R$. The corresponding truncated potential energy histogram and the log-weight distributions before and after filtering are displayed in Figs.~\ref{fig:filtering}(b) and (c) for a total of $8\times10^4$ samples. These distributions confirm that the filtering process safely removes configurations with negligible statistical weight without altering the largest weights, which would otherwise introduce an uncontrolled bias.

\begin{figure*}[t!]
\centering
\includegraphics[width=1.\linewidth]{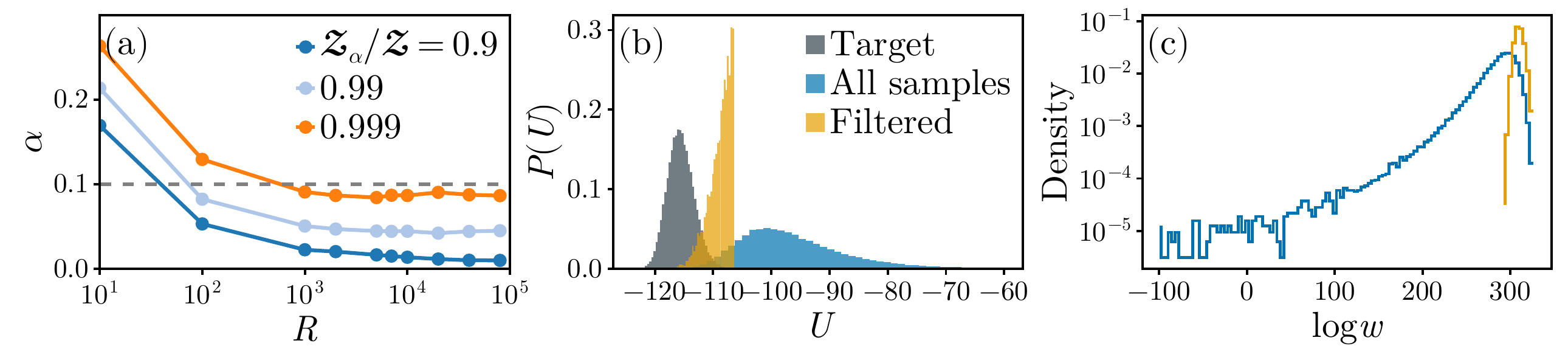}
\caption{Energy-filtering protocol.
(a) Fraction $\alpha$ of lowest-energy samples needed to preserve a given fraction $\mathcal Z_\alpha/\mathcal Z$ of the estimated target partition function as a function of the total number of samples $R$.
(b) Truncated potential energy histogram showing the discarded high-energy configurations.
(c) Distribution of the unnormalized importance weights before and after applying the $10\%$ filtering threshold.}
\label{fig:filtering}
\end{figure*}

\begin{figure*}[t!]
\centering
\includegraphics[width=1.\linewidth]{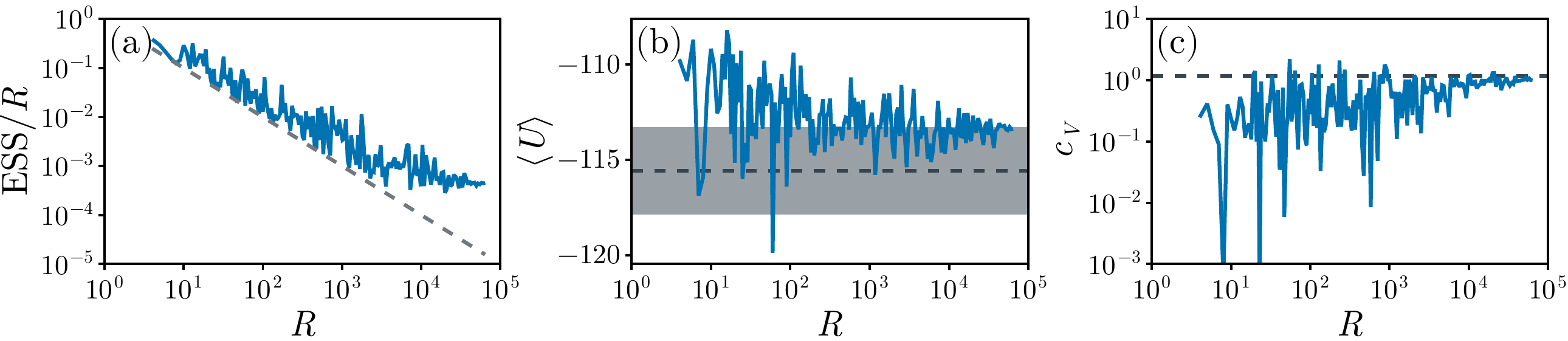}
\caption{Reweighting with filtered samples.
(a) Evolution of the normalized effective sample size $\mathrm{ESS}/R$ as a function of $R$ with energy filtering.
(b) Convergence of the reweighted mean potential energy $\langle U\rangle$ using the filtered samples. The shaded gray band denotes the equilibrium energy fluctuations, exposing a systematic numerical bias.
(c) Convergence of the specific heat $c_V$ as a function of $R$.}
\label{fig:filtering_results}
\end{figure*}

Motivated by this validation, we generate $64\times10^4$ independent samples from \Cref{eq:neuralode} and evaluate the density $\hat{q}(C)$ via backward integration of \Cref{eq:ode_likelihood}, which is approximately 150 times more expensive, only for the $10\%$ lowest-energy configurations. The statistical effect of this selection is illustrated in Fig.~\ref{fig:filtering_results}(a) where the normalized ESS stabilizes near $10^{-3}$ rather than at $10^{-4}$, recall Fig.~\ref{fig:jbb32}(e).

These results demonstrate that pre-filtering configurations using the potential energy before performing the more costly computation of their density is a valid strategy, that allows for a much faster and more efficient convergence of equilibrium measurements via importance sampling. 

\subsubsection{Fundamental limitation of density estimation}

However, this increased statistical resolution reveals that the estimate for $\langle U\rangle$ and $c_V$ is systematically biased. Figures~\ref{fig:filtering_results}(b-c) show that while the plateau values remain within the fluctuations of the target data, the estimators clearly converge to a value that is slightly larger than the true target mean. Since our prior analysis demonstrated that the filtering protocol itself introduces no detectable bias, this systematic shift must represent an intrinsic limitation of the flow architecture that becomes visible only because the statistical variance has been considerably reduced. 

We found that the source of this bias lies in the numerical integration of the flow trajectories needed to estimate densities. In a high-dimensional configuration space, integrating trajectories under a finite numerical tolerance introduces small truncation errors at each step. This numerical drift breaks time-reversibility: comparing the importance weights obtained via forward and backward integration for the same physical configurations reveals a mean relative discrepancy of approximately $10\%$. Crucially, both integration directions are equally affected and one of them cannot be assumed to be better than the other, as both represent distinct numerical approximations of the same underlying continuous flow~\cite{grenioux2026diffusion}. 

The numerical drift appears as the interplay between system complexity and model capacity. Sampling at lower temperatures $T$ requires deeper, heavier networks to capture finer details of the underlying energy landscape.  These highly parameterized velocity fields exhibit strong local curvature and numerical stiffness, making ODE trajectories more prone to discretization errors that accumulate across all dimensions. The numerical errors are therefore larger at lower temperatures, because the integration space is larger. For highly demanding tasks such as sampling glass-forming mixtures, these accumulated errors alter the weight distribution, injecting a systematic bias that compromises the reliability of exact reweighting.

In addition, attempting to suppress these errors by tightening the ODE solver tolerances rapidly becomes unfeasible: evaluating the exact density requires computing the divergence $\nabla \cdot \hat{v}$ at each step via $d N$ backward passes. For stiff flows where the required number of solver steps is already large, tightening tolerances drastically increases the computational cost.
Because this drift cannot be overcome by brute-force integration, it constitutes a fundamental bottleneck for continuous-time Boltzmann generators. We conclude that the strategy will be limited by the large number of parameters needed for larger and more complex systems.

\section{Conclusion and perspectives}

We have developed the Equivariant Riemannian Stochastic Interpolant (ERSI) framework, a generative modeling approach tailored to sample equilibrium configurations of bulk amorphous particle systems with multiple species. In doing so, we establish the exact mathematical requirements under which continuous normalizing flows trained via stochastic interpolants preserve physical symmetries. Specifically, we proved that when the boundary distributions are invariant, restricting the interpolation to a geodesic path guarantees that the optimal time-marginal densities and the optimal velocity field respect the full symmetry group $\invgroup$ of the multi-component mixture on $\mathcal C$. This provides a justification for parameterizing the vector field using a geometry-aware, equivariant graph neural network. 

Benchmark simulations on dense binary and supercooled ternary mixtures demonstrate that explicitly encoding geometric structure and physical symmetries yield more accurate equilibrium sampling and more reliable estimates of physical observables, using fewer generated samples and scaling to larger system sizes than architectures that neglect these properties.

Because Boltzmann generators rely on importance sampling for thermodynamic reweighting, accessing the exact likelihood is essential. In our continuous flow formulation, tracking this density requires integrating a coupled ODE system whose computational cost is much higher than simply generating samples. This presents a significant bottleneck when gathering large sample volumes.

Even more crucially, our work reveals an intrinsic limitation of continuous normalizing flows. Since the adaptive ODE solver is imperfect, local truncation errors accumulate along the continuous trajectories. This breaks time-reversibility and corrupts the importance sampling, which eventually produces biased thermodynamic averages. The limitations becomes more acute for larger models that are needed to generate configurations of larger systems with many-body interactions. Our results therefore reveal a fundamental challenge for continuous-flow generative models in statistical mechanics and call for alternative approaches that preserve exact thermodynamic consistency.

Looking forward, our results point to interesting directions for future research. First, the systematic bias introduced by continuous ODE solvers could be bypassed by adopting discrete-time formulations. Recent developments in diffusion-based sampling~\cite{phillips2024particledenoisingdiffusionsampler, zhang2025efficient} directly discretize the equations of motion to compute the exact likelihood natively through discrete transition kernels, offering a direct route to eliminate integration errors.

Second, our approach assumes access to an initial dataset of biased samples for training the Boltzmann Generator. Relaxing this assumption and developing training strategies that do not rely on such a priori data is an important next step. Promising directions include adaptive MCMCs  \cite{gabrieAdaptiveMonteCarlo2022} possibly combined with sequential tempering \cite{wuSolvingStatisticalMechanics2019,McNaughton2020boosting,bonoPerformanceMachinelearningassistedMonte2025} as proposed for normalizing flows and autoregressive models

\section*{Acknowledgments}

This work was partially performed using HPC resources from GENCI–IDRIS (AD011015234R1, AD011015234R2, AD011015234). G.B. and M.G. acknowledge support from PRAIRIE-PSAI ANR-23-IACL-0008, managed under the France 2030 program. L.B. acknowledges the support of the French Agence Nationale de la Recherche (ANR) under grant ANR-24-CE30-0442 (project GLASSGO). Leonardo G. acknowledges co-funding from the European Union - Next Generation EU. Louis G. acknowledges funding from Hi! Paris. We thank Gerhard Jung for useful discussions, and Jérémy Diharce for his precious involvement in previous versions of this project. 

\bibliography{main.bib}

\appendix

\onecolumngrid

\newpage

\section{Target distribution's symmetry group}

\label{app:invariances}

In this section, we establish that the target Boltzmann distribution $\pstar$ is strictly invariant under the group of physical symmetry transformations $\invgroup$ introduced in \Cref{sec:invariances}. We first connect the nearest image distance $\mathrm d_{\torus}$ with the logarithmic map on the torus in \Cref{prop:torus:distance_with_log}. We then establish the $\invgroup$-equivariance of both the logarithmic and exponential maps (\Cref{prop:torus:log_is_log}, \Cref{cor:torus:log_is_log}, and \Cref{prop:torus:M_and_exp}). Finally, \Cref{cor:torus:dist_is_invar} and \Cref{prop:canonical_distribution} synthesize these geometric results to prove our main outcome: that the metric $\mathrm{d}_{\torus}$, the potential energy function $U$, the target density $\pstar$, and the corresponding canonical distribution $\pstar^n$ at fixed composition are all $\invgroup$-invariant.

\begin{definition}
    Let $G$ denote a set of group actions acting on the configuration space $\gC$. A probability density $q$ on $\gC$ is said to be $G$-invariant if, for all $g \in G$ and all $C \in \gC$, $q(g(C)) = q(C)$.
\end{definition}
\begin{lemma}\label{lemma:decomp_g}
    Let $g \in \invgroup$. The action can be decomposed as
    $$
        g : \begin{pmatrix}
            s \\
            X
        \end{pmatrix} \mapsto \begin{pmatrix}
            g^s(C) \\
            g^X(C)
        \end{pmatrix} = \begin{pmatrix}
            A_{\sigma} s \\
            (A_{\sigma} \otimes I_d) h_{M, c, L}(X)
        \end{pmatrix}\eqsp,
    $$
    $$
        h_{M, c, L}(X) = \torusmod{(I_N \otimes M) X + (\mathbf{1}_N \otimes c)}\eqsp,
    $$
    where $\sigma \in S_N$ is a permutation, $A_{\sigma}$ is the associated permutation matrix of size $N \times N$, $M$ is a signed permutation matrix of size $d \times d$ and $c$ is a vector in $\sR^{d}$. The function $h_{M, c, L}$ could be also un-vectorized as
    $$
        h_{M, c, L}(X) = \left\{\tilde{h}_{M,c,L}(\selectparticle{X}{i})\right\}_{i=1}^N, \quad \tilde{h}_{M,c,L}(y) = \torusmod{My + c}\eqsp,
    $$
    which also means that for all $i \in \iinter{1}{N}$
    $$
        \selectparticle{g^X(C)}{i} = \tilde{h}_{M,c,L}(\selectparticle{X}{\sigma(i)})\eqsp.
    $$
\end{lemma}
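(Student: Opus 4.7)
The plan is to establish the claimed decomposition on each of the three generating families of $\invgroup$ identified in \Cref{sec:invariances}—particle permutations $g_\sigma$, translations $g_u$, and hyperoctahedral symmetries $g_M$—and then prove that the family of maps of the given form is closed under composition. Since every element of $\invgroup$ is by construction a finite product of such generators, the lemma then follows by induction on the word length. The un-vectorized rewrite of $h_{M,c,L}$ as the collection $\{\tilde h_{M,c,L}(\selectparticle{X}{i})\}_{i=1}^N$, and the resulting per-particle formula for $\selectparticle{g^X(C)}{i}$, are then immediate from the block structure of $I_N\otimes M$ and of the permutation matrix $A_\sigma$.

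First I would verify the three base cases by direct unpacking. For $g_\sigma$, take $M=I_d$ and $c=0$: then $h_{M,c,L}$ reduces to the identity on the fundamental domain and $A_\sigma\otimes I_d$ permutes the particle coordinate blocks exactly as prescribed, while $A_\sigma$ permutes the species. For $g_u$, take $\sigma=\mathrm{id}$, $M=I_d$, $c=u$, which recovers $(X+\mathbf{1}_N\otimes u)\,\%\,L$. For $g_M$, take $\sigma=\mathrm{id}$ and $c=0$, which recovers $((I_N\otimes M)X)\,\%\,L$. Each check is mostly bookkeeping.

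The central step is closure under composition. Given two maps with parameters $(\sigma_i,M_i,c_i)$, $i=1,2$, I would compute $g_1\circ g_2$ and rearrange using three elementary commutations: (i) by the Kronecker mixed-product identity, $(I_N\otimes M_1)(A_{\sigma_2}\otimes I_d)=A_{\sigma_2}\otimes M_1=(A_{\sigma_2}\otimes I_d)(I_N\otimes M_1)$; (ii) $A_{\sigma_2}\mathbf{1}_N=\mathbf{1}_N$, so $\mathbf{1}_N\otimes c_1$ is left invariant by $A_{\sigma_2}\otimes I_d$; and (iii) the componentwise modulo operator $\%\,L$ commutes with the action of any particle permutation. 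Combining these moves pushes the outer $A_{\sigma_2}\otimes I_d$ past $h_{M_1,c_1,L}$ and lets me merge the two species/particle permutation blocks into $A_{\sigma_1\sigma_2}\otimes I_d$ and $A_{\sigma_1\sigma_2}$.

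The main obstacle is then showing $h_{M_1,c_1,L}\circ h_{M_2,c_2,L}=h_{M',c',L}$ for some $(M',c')$, because of the interaction between the inner $\%\,L$ and the subsequent multiplication by $I_N\otimes M_1$. Writing $(I_N\otimes M_2)X+\mathbf{1}_N\otimes c_2 = Y+KL$ with $K\in\sZ^{Nd}$ and $Y$ in the fundamental domain, one gets $(I_N\otimes M_1)Y=(I_N\otimes M_1)((I_N\otimes M_2)X+\mathbf{1}_N\otimes c_2)-L(I_N\otimes M_1)K$. Because $M_1$ is a signed permutation matrix, $(I_N\otimes M_1)K$ is still integer-valued, so this correction is annihilated by the outer $\%\,L$. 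A direct computation, using $(I_N\otimes M_1)(I_N\otimes M_2)=I_N\otimes M_1M_2$ and $(I_N\otimes M_1)(\mathbf{1}_N\otimes c_2)=\mathbf{1}_N\otimes M_1c_2$, then yields $h_{M_1,c_1,L}\circ h_{M_2,c_2,L}=h_{M_1M_2,\,M_1c_2+c_1,L}$, with $M_1M_2\in B_d$ and $M_1c_2+c_1\in\sR^d$. Closure under inverses follows for free, since the inverse of each generator is itself of the same type ($\sigma^{-1}$, $-u$, or $M^{\top}$), so induction on finite products over generators completes the argument.
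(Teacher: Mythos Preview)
Your proposal is correct. The paper does not actually supply a proof for this lemma: it states the decomposition and immediately moves on to the next result. Your strategy---verify the form on the three generating families, then show the family is closed under composition and conclude by induction on word length---is the natural one and fills this gap cleanly.

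It is worth noting that the paper does carry out essentially the same closure computation elsewhere, in the proof of \Cref{lemma:g_c_is_group}, where it composes two generic elements and reads off the composite parameters $(\sigma^1\sigma^2,\,M^1M^2,\,c^1+M^1c^2)$. There the interaction between the inner modulo and the signed permutation is handled by invoking \Cref{lemma:M_is_torus_equi} and \Cref{lemma:torus:torus_decomp}; you instead argue directly that $(I_N\otimes M_1)K\in\sZ^{Nd}$ whenever $K\in\sZ^{Nd}$ because $M_1$ is a signed permutation, which is the same fact stated more elementarily. Both routes are equivalent.

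Two very minor points of presentation. First, depending on the convention for permutation matrices (whether $(A_\sigma v)_i=v_{\sigma(i)}$ or $v_{\sigma^{-1}(i)}$), the composite permutation may be $\sigma_2\circ\sigma_1$ rather than $\sigma_1\circ\sigma_2$; this does not affect the argument since all that matters is that the product is again a permutation matrix. Second, your remark that closure under inverses ``follows for free'' from inverses of generators is fine for the inductive argument, but you could equally read off the inverse parameters $(\sigma^{-1},M^{-1},-M^{-1}c)$ directly from the composition law you derived, as the paper does in \Cref{lemma:g_c_is_group}.
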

Note that for any $g \in \invgroup$, $g$ can be written as $g = f_L \circ h_{A,b}$ where
\begin{align}\label{eq:decomp_g}
    f_L : (s, X) \mapsto \begin{pmatrix}
        s \\ X ~\%~ L
    \end{pmatrix} ~~\text{and}~~ h_{A,b} : C \mapsto A C + b\eqsp,
\end{align}
for $A$ an orthogonal matrix and $b$ a vector in $\sR^{N\abs{\gS} + Nd}$ which are defined in \Cref{lemma:decomp_g}.
The set of actions induced by $h_{M, c, L}$ on $\torus^{N}$ is denoted $\gG_{\torus^N}$ and the one induced by $\tilde{h}_{M,c,L}$ on $\torus$ is denoted $\gG_{\torus}$. From \Cref{lemma:decomp_g}, we also deduce that $A$ and $b$ in \Cref{eq:decomp_g} can be written as
\begin{align}\label{eq:A_in_g_decomp}
    A = \begin{pmatrix}
        A_{\sigma} & 0 \\
        0 & (A_{\sigma} \otimes I_d)  (I_N \otimes M)
    \end{pmatrix}, \quad b = \mathbf{1}_N \otimes c\eqsp.
\end{align}

\begin{lemma}\label{lemma:torus:torus_decomp}
    Let $X, Y \in \sR^{Nd}$. We have
    \begin{equation}
        \torusmod{\left(X ~\%~ L\right) + Y} = \torusmod{X + Y}\eqsp .
    \end{equation}
\end{lemma}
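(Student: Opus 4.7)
The plan is to reduce the vector identity to a componentwise scalar identity and then verify it directly from the definition of the modulo operator $A \% L = A - \lfloor A/L \rfloor L$. First, I would note that this operator acts componentwise, so the claim reduces to showing that for any scalars $a, b \in \sR$ and $L > 0$, one has $((a \% L) + b) \% L = (a + b) \% L$.

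Second, I would unfold the definition to obtain $(a \% L) + b = (a + b) - \lfloor a/L \rfloor L$, so that the left-hand side becomes $\bigl((a+b) - \lfloor a/L \rfloor L\bigr) \% L$. The crux is then the elementary fact that subtracting an integer multiple of $L$ leaves the modulo unchanged: for any $c \in \sR$ and any integer $k$, $(c - kL) \% L = c \% L$. This in turn follows from the standard property $\lfloor (c - kL)/L \rfloor = \lfloor c/L \rfloor - k$ of the floor function under integer shifts. Applying it with $c = a + b$ and $k = \lfloor a/L \rfloor \in \sZ$ completes the scalar identity, which lifts back to the vector statement componentwise.

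I do not anticipate a genuine obstacle: the lemma is purely algebraic and the only minor care required is the shift-invariance of the floor function under integer multiples of $L$. Its interest lies less in the proof itself than in its use as a rewriting rule letting one defer torus-wrap operations when composing transformations built from $h_{M,c,L}$ and related actions appearing in \Cref{lemma:decomp_g} and \Cref{eq:decomp_g}.
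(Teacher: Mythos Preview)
Your argument is correct and essentially identical to the paper's: both write $X \,\%\, L = X - k_X L$ with $k_X = \lfloor X/L \rfloor \in \sZ^{Nd}$, substitute to get $(X \,\%\, L) + Y = (X+Y) - k_X L$, and then invoke the shift-invariance of the modulo under integer multiples of $L$. The only cosmetic difference is that you explicitly reduce to the scalar case first, whereas the paper works directly at the vector level.
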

\begin{proof}
    We have that $X = (X ~\%~ L) + k_X L$ where $k_X \in \mathbb{Z}^{Nd}$ which leads to
    $$
        \torusmod{\left(X ~\%~ L\right) + Y} = \torusmod{X + Y - k_X} = \torusmod{X + Y}\eqsp.
    $$
\end{proof}

\begin{lemma}\label{lemma:torus:wierd_mod_lemma}
    Let $y \in \mathbb{R}$ and $\epsilon \in \{-1, +1\}$, then $\torusmod{\epsilon y} = \torusmod{\epsilon \left(y ~\%~ L\right)}$.
\end{lemma}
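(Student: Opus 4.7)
The plan is to split the proof into the two cases $\epsilon = +1$ and $\epsilon = -1$, and in each case reduce the identity to the invariance of the operation $\%~L$ under addition of integer multiples of $L$.

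For $\epsilon = +1$, the claim is simply $\torusmod{y} = \torusmod{y \% L}$, which is the idempotence of the modulo operation. Since $y \% L \in [0, L)$ by definition of $\lfloor \cdot \rfloor$, applying $\%~L$ again leaves it unchanged, so both sides equal $y \% L$.

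For $\epsilon = -1$, the key observation is that $y$ and $y \% L$ differ by an integer multiple of $L$. Concretely, write $y = (y \% L) + kL$ with $k = \lfloor y/L \rfloor \in \sZ$, so that $-y = -(y \% L) - kL$. Hence $-y$ and $-(y \% L)$ differ by $kL$. I would then invoke the scalar analogue of \Cref{lemma:torus:torus_decomp} (which is the statement that adding an integer multiple of $L$ inside $\torusmod{\cdot}$ does not change its value): taking $X = -kL$ and $Y = -(y \% L)$, we have $\torusmod{X} = 0$ so $\torusmod{X + Y} = \torusmod{(X \% L) + Y} = \torusmod{Y}$, giving $\torusmod{-y} = \torusmod{-(y \% L)}$.

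There is no real obstacle here; the main thing to be careful about is the boundary case $y \% L = 0$, where $-(y \% L) = 0$ already lies in $[0,L)$ and both sides equal $0$, so the identity still holds. One could equivalently package the whole argument in one line by noting that $\epsilon y - \epsilon (y \% L) = -\epsilon \lfloor y/L \rfloor L \in L\sZ$ for both values of $\epsilon$, and concluding directly by the $L\sZ$-periodicity of $\torusmod{\cdot}$ already used implicitly in \Cref{lemma:torus:torus_decomp}.
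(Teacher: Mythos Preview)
Your proof is correct and covers the same ground as the paper's, but the packaging is different. The paper handles the case $\epsilon=-1$ by a direct computation: it writes out $\torusmod{-y}$ and $\torusmod{-(y\%L)}$ in terms of floor functions and then invokes the identity $\lfloor x+n\rfloor - n = \lfloor x\rfloor$ for $n\in\sZ$ to match the two expressions. You instead observe that $\epsilon y$ and $\epsilon(y\%L)$ differ by an integer multiple of $L$ and appeal to the $L\sZ$-periodicity of $\torusmod{\cdot}$, which is exactly the content of \Cref{lemma:torus:torus_decomp} in scalar form. Your route is shorter and more conceptual, and since \Cref{lemma:torus:torus_decomp} is already available at this point in the paper it is a legitimate shortcut; the paper's computation is self-contained but a bit heavier. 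One small slip: in your final one-line summary the sign should be $\epsilon y - \epsilon(y\%L) = \epsilon\lfloor y/L\rfloor L$, not $-\epsilon\lfloor y/L\rfloor L$; this is harmless since either way the difference lies in $L\sZ$.
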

\begin{proof}
    If $\epsilon = 1$ this is obviously true. If $\epsilon = -1$, we are trying to prove that
    $$
        \torusmod{-y} = \torusmod{- \left(y ~\%~ L\right)}\eqsp.
    $$
    By definition, we have that
    \begin{align*}
        \torusmod{-y} &= -y - \left\lfloor\frac{-y}{L}\right\rfloor L\eqsp, \\
        \torusmod{- \left(y ~\%~ L\right)} &= -y - \left(\left\lfloor \frac{-y}{L} + k\right\rfloor - k\right) L\eqsp,
    \end{align*}
    where $k = \lfloor y / L\rfloor$. Using the fact that for any $x \in \mathbb{R}$ and $n \in \mathbb{N}$,
    $$
        \lfloor x + n \rfloor - n = \lfloor x \rfloor\eqsp,
    $$
    we get the intended result.
\end{proof}

\begin{lemma}\label{lemma:M_is_torus_equi}
    Let $M \in \gB_d$ and $X \in \sR^d$, then $\torusmod{MX} = \torusmod{M \left[X ~\%~ L\right]}$.
\end{lemma}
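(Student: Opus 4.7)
The plan is to reduce the vector statement to the scalar statement already established in \Cref{lemma:torus:wierd_mod_lemma}, by exploiting the fact that the hyperoctahedral group $\gB_d$ consists precisely of signed permutation matrices and that the modulo operator acts component-wise.

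First I would unpack the structure of $M \in \gB_d$. By definition, there exists a permutation $\tau \in S_d$ and a sign vector $(\epsilon_1, \ldots, \epsilon_d) \in \{-1, +1\}^d$ such that for any $Y \in \sR^d$ the $i$-th coordinate of $MY$ is given by $(MY)_i = \epsilon_i Y_{\tau(i)}$. In particular, for $X \in \sR^d$, the $i$-th coordinate of $MX$ equals $\epsilon_i X_{\tau(i)}$, and the $i$-th coordinate of $M(X \,\%\, L)$ equals $\epsilon_i (X_{\tau(i)} \,\%\, L)$ because $\%\, L$ is applied component-wise.

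Next, since $\torusmod{\cdot}$ also acts component-wise on vectors, it suffices to prove the equality coordinate by coordinate, that is to show
\begin{equation*}
    \torusmod{\epsilon_i X_{\tau(i)}} \;=\; \torusmod{\epsilon_i \left(X_{\tau(i)} \,\%\, L\right)}
\end{equation*}
for every $i \in \iinter{1}{d}$. This is exactly the content of \Cref{lemma:torus:wierd_mod_lemma} applied with $y = X_{\tau(i)}$ and $\epsilon = \epsilon_i \in \{-1, +1\}$. Concatenating across $i$ yields the desired vector identity.

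There is essentially no hard step here: the only subtlety lies in correctly identifying that $\gB_d$ acts as signed permutations, so that after extraction of a coordinate one never mixes entries with differing signs. Once that structural remark is in place, the argument is a direct component-wise invocation of the preceding scalar lemma.
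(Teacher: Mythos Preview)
Your proof is correct and follows essentially the same approach as the paper: both decompose $M\in\gB_d$ as a signed permutation $(MY)_i=\epsilon_i Y_{\tau(i)}$, reduce the vector identity to a coordinate-wise check, and then invoke \Cref{lemma:torus:wierd_mod_lemma} on each coordinate. The paper's version is slightly more terse but the logic is identical.
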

\begin{proof}
    For any $M \in \gB_d$, there exist $\epsilon \in \{-1,1\}^d$ and $\sigma \in \mathrm{S}_d$ such that for all $i,j \in \iinter{1}{d}$, $M_{i,j} = \epsilon(i)\delta_{\sigma(i),j}$. For all $i \in \iinter{1}{d}$, we have
    \begin{align*}
        \left(\torusmod{MX}\right)_i = \torusmod{\epsilon(i) X_{\sigma(i)}}, \quad \left( M \left[X ~\%~ L\right]\right)_i = \epsilon(i) \left[X ~\%~ L\right]_{\sigma(i)}\eqsp.
    \end{align*}
    Use \Cref{lemma:torus:wierd_mod_lemma} on each coordinate to conclude.
\end{proof}

\begin{proposition}\label{prop:torus:log_as_floor}
    The logarithmic map can be written for any $X\in \torus$ and $Y \in \tangenttorus{X}$
    $$
    $$
    $$
        \log_X(Y) = \frac{L}{2\pi}\,\atantwo\left(\sin\left(\frac{2\pi}{L}(X-Y)\right), \,\cos\left(\frac{2\pi}{L}(X-Y)\right)\right)\eqsp.
    $$
\end{proposition}
\begin{proof}
    For any $X\in \torus$ and $Y \in \tangenttorus{X}$, we have that
    \begin{align*}
        \log_X(Y) & =\torusmod{Y - X + \frac{L}{2}} - \frac{L}{2}\\
        &= Y-X-L\left\lfloor\frac{Y-X+\frac{L}{2}}{L}\right\rfloor\eqsp,\\
        &=\frac{L}{\pi}\arctan\left(\tan\left(\frac{\pi}{L}\left(Y-X\right)\right)\right)\eqsp, & \left(\text{Using }\arctan\tan\alpha=\alpha-\pi\left\lfloor\frac{\alpha}{\pi}+\frac{1}{2}\right\rfloor\right)\\
        & =\frac{L}{\pi}\arctan\left(\frac{\sin\left(\frac{2\pi}{L}\left(Y-X\right)\right)}{1+\cos\left(\frac{2\pi}{L}\left(Y-X\right)\right)}\right)\eqsp, & \left(\text{Using }\tan\alpha=\frac{\sin2\alpha}{1+\cos2\alpha}\right)\eqsp\\
        &=\frac{L}{2\pi}\,\atantwo\left(\sin\left(\frac{2\pi}{L}(X-Y)\right), \,\cos\left(\frac{2\pi}{L}(X-Y)\right)\right)\eqsp.
    \end{align*}
\end{proof}

\begin{lemma}\label{lemma:torus:argmin_and_floor}
    For all $a \in \mathbb{R}$, $-\lfloor\frac{a}{L} + \frac{1}{2}\rfloor = \argmin_{k \in \mathbb{Z}} \abs{a + kL}$ holds.
\end{lemma}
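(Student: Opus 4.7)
The plan is to reduce the problem to finding the integer nearest to $-a/L$ and then verify that the formula $k^* = -\lfloor a/L + 1/2 \rfloor$ produces such an integer. I expect the argument to be essentially elementary; the only subtlety is the standard tie-breaking at the boundary $a/L \in \mathbb{Z} + 1/2$, where the $\argmin$ is not a singleton.

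First I would normalize by writing $\abs{a + kL} = L \cdot \abs{k + a/L}$, so that minimizing over $k \in \mathbb{Z}$ amounts to finding the integer closest to $-a/L$. Next I would characterize the minimizers by a pairwise comparison: expanding the squares shows that $\abs{a + kL} \leq \abs{a + (k+1)L}$ iff $k \geq -a/L - 1/2$, and $\abs{a + kL} \leq \abs{a + (k-1)L}$ iff $k \leq -a/L + 1/2$. Together, an integer $k$ lies in $\argmin_{k \in \mathbb{Z}} \abs{a + kL}$ if and only if
\begin{equation*}
    k \in \left[-\frac{a}{L} - \frac{1}{2},\, -\frac{a}{L} + \frac{1}{2}\right].
\end{equation*}

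It then remains to verify that $k^* = -\lfloor a/L + 1/2 \rfloor$ belongs to this interval. Applying the standard bound $x - 1 < \lfloor x \rfloor \leq x$ with $x = a/L + 1/2$ gives $a/L - 1/2 < \lfloor a/L + 1/2 \rfloor \leq a/L + 1/2$, which after negation is exactly $-a/L - 1/2 \leq k^* < -a/L + 1/2$. Since $k^*$ is an integer lying in the characterizing interval, it is a minimizer, proving the claim.

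The hardest part, conceptually, is merely keeping the orientation of the inequalities straight and being transparent about the convention used at the tie-breaking boundary $a/L + 1/2 \in \mathbb{Z}$; I would briefly note that in this case both $k^*$ and $k^* + 1$ attain the minimum and the formula selects the former, consistent with the floor convention used throughout the paper (in particular inside the torus modulo operator and in Definition \ref{def:maps}).
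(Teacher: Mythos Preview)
Your proof is correct. The approach differs from the paper's in its decomposition: the paper first localizes the minimizer to the two candidates $-\lfloor a/L\rfloor$ and $-(\lfloor a/L\rfloor+1)$ and then splits into two cases according to which half of the unit interval the fractional part of $a/L$ falls in, showing in each case that $\lfloor a/L+1/2\rfloor$ coincides with the relevant candidate. You instead characterize the full $\argmin$ as the set of integers in $[-a/L-1/2,\,-a/L+1/2]$ via neighbor comparisons and then verify membership of $-\lfloor a/L+1/2\rfloor$ directly using $x-1<\lfloor x\rfloor\le x$. Your route avoids the case split and is slightly more transparent about the tie-breaking boundary $a/L+1/2\in\mathbb{Z}$, which the paper does not address explicitly; the paper's version, on the other hand, makes the two-candidate structure visible, which is what is actually used downstream in \Cref{prop:torus:distance_with_log}. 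Both arguments are equally elementary and interchangeable here.
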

\begin{proof}
    By definition, we have that
    $$
        \left\lfloor\frac{a}{L}\right\rfloor L \leq a \leq \left(\left\lfloor\frac{a}{L}\right\rfloor + 1\right)L\eqsp,
    $$
    which means that
    $$
        k^{\star} = \argmin_{k \in \mathbb{Z}} \abs{a + kL} \in \left\{-\left\lfloor\frac{a}{L}\right\rfloor, -\left(\left\lfloor\frac{a}{L}\right\rfloor +1\right)\right\}\eqsp.
    $$
    \underline{Case 1}: $k^{\star} = -\lfloor a / L\rfloor$
    This means that $a / L$ is closer to $\lfloor a/L\rfloor$ than to $\lfloor a / L \rfloor + 1$ which implies that $a/L + 1/2$ is also closer to $\lfloor a / L\rfloor$ which means that
    $$
        \left\lfloor\frac{a}{L}\right\rfloor = \left\lfloor\frac{a}{L} + \frac{1}{2}\right\rfloor\eqsp,
    $$
    and that $k^{\star} = -\lfloor a/L + 1/2\rfloor$.

    \underline{Case 2}: $k^{\star} = -(\lfloor a / L\rfloor + 1)$
    This means that $a / L$ is closer to $\lfloor a/ L \rfloor + 1$ than to $\lfloor a / L\rfloor$ which implies that $a / L + 1/2$ is also closer to $\lfloor a / L\rfloor + 1$ which means that
    $$
        \left\lfloor\frac{a}{L}\right\rfloor + 1= \left\lfloor\frac{a}{L} + \frac{1}{2}\right\rfloor\eqsp,
    $$
    and that $k^{\star} = -\lfloor a / L + 1 /2\rfloor$.
\end{proof}

\begin{lemma}\label{lemma:torus:min_decomp}
    Let $f : \mathbb{R}^k \to \mathbb{R}^+$ be defined for all $x \in \mathbb{R}^k$ as
    $$
        f(x) = \sum_{i=1}^k f_i(x_i), \quad f_i : \mathbb{R} \to \mathbb{R}^+\eqsp.
    $$
    Then the minimum of $f$ decomposes as
    $$
        \argmin_{x \in \mathbb{R}^k} f(x) = \argmin_{x_1 \in \mathbb{R}} f_1(x) \times \ldots \times \argmin_{x_k \in \mathbb{R}} f_k(x)\eqsp.
    $$
\end{lemma}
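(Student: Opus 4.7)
The plan is to prove the claimed set equality by double inclusion, exploiting the fact that each summand $f_i$ depends on only one coordinate and hence the variables decouple.

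For the inclusion $\supseteq$, I would pick any tuple $(x_1^\star,\ldots,x_k^\star)$ in the right-hand side Cartesian product, so that $f_i(x_i^\star) = \min_{t \in \sR} f_i(t)$ for every $i$. For an arbitrary $y \in \sR^k$, I would then write $f(y) = \sum_{i=1}^k f_i(y_i) \geq \sum_{i=1}^k f_i(x_i^\star) = f(x^\star)$, which shows $x^\star$ attains the infimum of $f$ on $\sR^k$, i.e.\ $x^\star \in \argmin_{x \in \sR^k} f(x)$.

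For the inclusion $\subseteq$, I would argue by contradiction: suppose $x^\star \in \argmin_{x \in \sR^k} f(x)$ but that some coordinate $x_j^\star$ fails to minimize $f_j$. Then there exists $z \in \sR$ with $f_j(z) < f_j(x_j^\star)$. Replacing the $j$-th coordinate of $x^\star$ by $z$ (and keeping all others fixed) produces a point $\tilde{x}$ with $f(\tilde{x}) = f(x^\star) - f_j(x_j^\star) + f_j(z) < f(x^\star)$, contradicting the fact that $x^\star$ minimizes $f$. Hence each $x_i^\star$ lies in $\argmin_{t \in \sR} f_i(t)$, which gives the inclusion.

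The one mildly delicate point — not really an obstacle — is the case where some $\argmin f_i$ is empty (no minimizer exists for a given coordinate). In that situation the right-hand side is empty by convention of the Cartesian product, and the same contradiction argument from the second step shows the left-hand side is empty as well, so the equality still holds trivially. Since the overall statement is a standard separability fact, no further machinery beyond these two elementary coordinate-swap arguments is required.
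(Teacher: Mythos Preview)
Your proof is correct and follows essentially the same coordinate-replacement strategy as the paper: both establish the inclusion $\subseteq$ by assuming a global minimizer has a non-optimal coordinate and swapping it to derive a contradiction, while the paper dismisses the $\supseteq$ direction as ``trivial'' where you spell it out. If anything, your version is slightly tidier --- you use a strict inequality in the contradiction (the paper writes $\leq$), and you handle the empty-$\argmin$ case explicitly --- but the underlying argument is identical.
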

\begin{proof}
    Suppose that there exist $\bar{x} \in \argmin_{x \in \mathbb{R}^k} f(x)$ and $\bar{x} \notin \argmin_{x_1 \in \mathbb{R}} f_1(x) \times \ldots \times \argmin_{x_k \in \mathbb{R}} f_k(x)$. Then there exists $i \in \{1, \ldots, k\}$ such that $\bar{x}_i \notin \argmin_{x_i \in \mathbb{R}} f_i(x)$. Let $\hat{x}_i \in \argmin_{x_i \in \mathbb{R}} f_i(x)$, then $\hat{x} \in \mathbb{R}^k$ defined as
    $$
        \hat{x}_j = \begin{cases}
            \bar{x}_j & \text{ if } j \neq i \\
            \hat{x}_i & \text{ otherwise }
        \end{cases}.
    $$
    Then $f(\hat{x}) \leq f(\bar{x})$ which contradicts the assumptions. This gives the left-right inclusion, and the right-left one is trivial.
\end{proof}

\begin{proposition}\label{prop:torus:distance_with_log}
    Let $X, Y \in \torus$, the pre-metric $\mathrm{d}_{\torus}$ can be written as
    $$
        \torusdist{X}{Y} = \norm{\log_X(Y)},\quad \text{ for any } X, Y \in \torus\eqsp.
    $$
\end{proposition}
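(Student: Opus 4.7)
The plan is to reduce the $d$-dimensional minimization defining $\torusdist{X}{Y}$ to $d$ independent one-dimensional minimizations, identify the optimal integer shifts, and then recognize the resulting expression as the norm of $\log_X(Y)$.

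First, I would square both sides (monotonicity of the square root on $\sR^+$ preserves the equality) and write
$$
\torusdist{X}{Y}^2 \;=\; \min_{k \in \sZ^d} \sum_{i=1}^d (X_i - Y_i - k_i L)^2.
$$
Since the summand indexed by $i$ depends only on $k_i$, \Cref{lemma:torus:min_decomp} (applied with $f_i(k_i) = (X_i - Y_i - k_i L)^2$) lets me exchange the minimum and the sum, yielding
$$
\torusdist{X}{Y}^2 \;=\; \sum_{i=1}^d \min_{k_i \in \sZ} (X_i - Y_i - k_i L)^2.
$$

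Next, I would apply \Cref{lemma:torus:argmin_and_floor} coordinate-wise to identify the optimizer. Setting $a_i = X_i - Y_i$, the lemma identifies the optimal shift as $k_i^{\star} = -\lfloor a_i/L + 1/2 \rfloor$, and the corresponding minimum value is
$$
\min_{k_i \in \sZ} (X_i - Y_i - k_i L)^2 \;=\; \Bigl(a_i - \bigl\lfloor a_i/L + \tfrac{1}{2}\bigr\rfloor L\Bigr)^2.
$$

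The remaining step is to recognize this expression as $\log_{X_i}(Y_i)^2$. Expanding the definition of the logarithmic map in \Cref{def:maps} component-wise,
$$
\log_{X_i}(Y_i) \;=\; (Y_i - X_i + L/2) - \bigl\lfloor (Y_i - X_i + L/2)/L \bigr\rfloor L - L/2 \;=\; (Y_i - X_i) - \bigl\lfloor (Y_i - X_i)/L + \tfrac{1}{2}\bigr\rfloor L,
$$
using the identity $\lfloor x + 1/2 \rfloor = \lfloor x + 1/2 \rfloor$ after folding the $L/2$ shift into the floor. Comparing with the optimal value above for $a_i = X_i - Y_i = -(Y_i - X_i)$, the two quantities differ only in sign, so their squares agree. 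Summing over $i$ gives $\torusdist{X}{Y}^2 = \sum_i \log_{X_i}(Y_i)^2 = \norm{\log_X(Y)}^2$, and taking square roots concludes the proof.

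The main obstacle I foresee is the bookkeeping in the last step: reconciling the sign conventions between the ``$X - Y$'' used in the distance and the ``$Y - X$'' used inside $\log_X$, and verifying that the two floor expressions truly select the same integer up to the sign flip $k_i \mapsto -k_i$. This is exactly the content of \Cref{lemma:torus:argmin_and_floor}, and the symmetry $\min_{k \in \sZ} (a - kL)^2 = \min_{k \in \sZ} (-a - kL)^2$ (obtained by reindexing $k \mapsto -k$) handles the sign ambiguity cleanly. Everything else is straightforward once the min-sum exchange of \Cref{lemma:torus:min_decomp} has reduced the problem to one dimension.
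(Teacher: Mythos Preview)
Your proposal is correct and follows essentially the same approach as the paper: both proofs reduce the $d$-dimensional minimization to coordinate-wise problems via \Cref{lemma:torus:min_decomp}, invoke \Cref{lemma:torus:argmin_and_floor} to identify the optimal integer shift, and then match the result to the explicit formula for $\log_X(Y)$. The only cosmetic difference is that the paper reindexes $k \mapsto -k$ at the outset to work with $Y-X$ directly, whereas you keep $X-Y$ and reconcile the sign at the end; either way the argument is the same.
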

\begin{proof}
    We have that $\log_X(Y) = Y - X + k_{X,Y}L$ where
    $$
        k_{X,Y} = -\left\lfloor\frac{Y - X + \frac{L}{2}}{L}\right\rfloor = -\left\lfloor\frac{Y - X}{L} + \frac{1}{2}\right\rfloor.
    $$
    Moreover, we have that
    \begin{align*}
        \torusdist{X}{Y} &= \min_{k \in \mathbb{Z}^{d}} \norm{X - Y + kL}
        = \min_{k \in \mathbb{Z}^{d}} \norm{Y - X + kL}
        = \min_{k \in \mathbb{Z}^{d}} \norm{Y - X + kL}^2\eqsp,
    \end{align*}
    which implies that
    $$
        \torusdist{X}{Y} = \min_{k \in \mathbb{Z}^{d}} \sum_{i=1}^d (Y_i - X_i + k_iL)^2\eqsp.
    $$
    Using \Cref{lemma:torus:argmin_and_floor} and the monotonicity of the square root on $\mathbb{R}^+$, we have that that for all $i \in \iinter{1}{d}$
    $$
       \argmin_{k \in \mathbb{Z}} (Y_i - X_i + k L)^2 = [k_{X,Y}]_i\eqsp.
    $$
    By \Cref{lemma:torus:min_decomp}, this implies that
    $$
        \argmin_{k \in \mathbb{Z}^{d}} \sum_{i=1}^d (Y_i - X_i + k_iL)^2 = k_{X,Y}\eqsp,
    $$
    leading to
    $$
        \mathrm d_\torus(X,Y) = \norm{Y - X + k_{X,Y}L} = \norm{\log_X(Y)}\eqsp.
    $$
\end{proof}

\begin{proposition}\label{prop:torus:dist}
    Let $X, Y \in \torus$, a pre-metric (see \cite[Section 3.2]{chen2024flow} for the definition) between $X$ and $Y$ in $\torus$ can be defined as
    \begin{equation}
        \torusdist{X}{Y} = \min_{k \in \mathbb{Z}^{d}} d^E(X, Y + kL)\eqsp,
    \end{equation}
    where $d^E(X,Y) = \norm{X - Y}$ is the euclidean distance.
\end{proposition}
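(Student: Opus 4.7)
The plan is to verify, in order, the defining properties of a pre-metric as given in \cite[Section 3.2]{chen2024flow}: non-negativity, vanishing iff $X=Y$, and non-degeneracy of the gradient on the complement of the diagonal.

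First I would show that the infimum in the definition is attained, so that writing $\min$ is justified and $\torusdist{\cdot}{\cdot}$ is continuous. Since $X,Y \in [0,L)^d$, we have $\norm{X-Y}_{\infty} < L$, and the reverse triangle inequality gives $\norm{X - Y - kL} \geq L\norm{k}_{\infty} - L = L(\norm{k}_{\infty}-1)$ for every $k \in \sZ^d$. Therefore only $k \in \{-1,0,1\}^d$ can achieve the infimum, and $\torusdist{X}{Y}$ is the minimum of finitely many continuous functions in $(X,Y)$, hence continuous and well-defined.

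Non-negativity is then immediate since $d^E \geq 0$. For the zero-distance characterization, if $\torusdist{X}{Y}=0$ there exists $k^\star \in \sZ^d$ with $X = Y + k^\star L$; coordinatewise $X_i - Y_i \in (-L,L)$ forces $k^\star_i = 0$ and thus $X=Y$, and the converse is obtained by choosing $k=0$.

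For non-degeneracy, I would fix $(X,Y)$ with $X \neq Y$, let $k^\star$ denote a minimizer, and work on the open set $U$ where $k^\star$ is the unique minimizer (the complement of the cut locus). On $U$ the function $\torusdist{\cdot}{Y}$ locally equals the smooth Euclidean distance $d^E(\cdot, Y + k^\star L)$, whose gradient $(X - Y - k^\star L)/\torusdist{X}{Y}$ is a unit vector, hence nonzero. The main obstacle is the cut locus, i.e.\ the codimension-one set where at least two integer shifts $Y+kL$ are equidistant from $X$; there $\torusdist{\cdot}{Y}$ is the pointwise minimum of several smooth branches and may fail to be classically differentiable. However, each branch still has unit-norm gradient, so the Clarke subdifferential does not contain zero whenever $X \neq Y$, and this null set does not affect the pre-metric role played by $\torusdist{\cdot}{\cdot}$ in the Riemannian flow-matching construction of \cite{chen2024flow,wu2025riemannian}, which only requires the pre-metric properties almost everywhere along interpolation paths.
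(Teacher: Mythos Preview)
Your argument is correct. The route differs from the paper's mainly in the non-degeneracy step. The paper first establishes the identity $\torusdist{X}{Y}=\norm{\log_X Y}$ (their \Cref{prop:torus:distance_with_log}, which pins down the minimizing lattice shift as $k_{X,Y}=-\lfloor (Y-X)/L+1/2\rfloor$ via \Cref{lemma:torus:argmin_and_floor} and \Cref{lemma:torus:min_decomp}) and then argues that the gradient of $\mathrm{d}_{\torus}$ is proportional to $\log_X Y$, which vanishes exactly when $X=Y$. You instead fix a local minimizer $k^\star$ and differentiate the corresponding Euclidean branch directly, obtaining the unit vector $(X-Y-k^\star L)/\torusdist{X}{Y}$; this is the same object up to sign and normalization, since $X-Y-k^\star L=-\log_X Y$. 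Your approach is more self-contained and is explicit about the cut locus, which the paper's proof leaves implicit; the paper's route, on the other hand, yields the reusable formula $\torusdist{X}{Y}=\norm{\log_X Y}$ as a byproduct (used e.g.\ in \Cref{cor:torus:dist_is_invar}). One small slip: the reverse-triangle bound $\norm{X-Y-kL}\geq L\norm{k}_\infty-L$ does not hold as written in the Euclidean norm, but the desired restriction to $k\in\{-1,0,1\}^d$ follows cleanly from the coordinate-wise decomposition of the squared distance, so the conclusion stands.
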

\begin{proof}
    Let $X, Y \in \torus$, we obviously have that $\torusdist{X}{Y} \geq 0$ checking the non-negativity. Moreover, using the fact that $d^E$ is a distance
    $$
        \torusdist{X}{Y} = 0 \iff \; \exists k \in \mathbb{Z}^{d}, d^E(X, Y + kL) = 0 \iff X = Y ~\%~ L \iff X = Y\eqsp,
    $$
    which checks positivity. Using \Cref{prop:torus:distance_with_log}, we have that $\nabla \torusdist{X}{Y} = \log_XY$, which leads to
    \begin{align*}
        \nabla \torusdist{X}{Y} \neq 0 \iff \log_XY \neq 0 & \iff \torusmod{Y - X + \frac{L}{2}} - \frac{L}{2} \neq 0 \\
        & \iff \exists ! k \in \mathbb{Z}^{Nd}, ~Y - X + \frac{L}{2} + kL - \frac{L}{2} \neq 0 \\
        & \iff \torusmod{Y - X} \neq 0 \\
        & \iff X \neq Y\eqsp.
    \end{align*}
\end{proof}

\begin{proposition}\label{prop:torus:M_and_exp}
    Let $A, B \in \mathbb{R}^{d}$, $M \in B_d$ and $u \in \mathbb{R}^{d}$ then
    $$
        \exp_{\torusmod{MA+u}}\left(MB\right) = \torusmod{M \exp_AB + u}
    $$
    holds.
\end{proposition}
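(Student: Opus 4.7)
The plan is to expand both sides using the definition of the exponential map on the torus and then collapse all the modulo operations using the lemmas already established in this appendix. Specifically, recall that $\exp_A(V) = \torusmod{A+V}$, so the left-hand side is
$$
\exp_{\torusmod{MA+u}}(MB) \;=\; \torusmod{\,\torusmod{MA+u}+MB\,}\eqsp,
$$
and applying \Cref{lemma:torus:torus_decomp} with $X=MA+u$ and $Y=MB$ immediately gives $\torusmod{M(A+B)+u}$. The right-hand side expands to
$$
\torusmod{\,M\exp_A(B)+u\,} \;=\; \torusmod{\,M\torusmod{A+B}+u\,}\eqsp,
$$
so it remains to show $\torusmod{M\torusmod{A+B}+u} = \torusmod{M(A+B)+u}$.

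First I would prove this identity in the following slightly more general form: for any $X\in\sR^d$, $M\in B_d$ and $u\in\sR^d$,
$$
\torusmod{MX+u} \;=\; \torusmod{M\torusmod{X}+u}\eqsp.
$$
Writing $X = \torusmod{X} + kL$ with $k\in\sZ^d$, the left-hand side equals $\torusmod{M\torusmod{X}+MkL+u}$. Since $M$ is a signed permutation matrix, $Mk\in\sZ^d$, so $MkL\in L\sZ^d$ and the modulo is invariant under this shift, giving the desired equality. This is essentially the argument behind \Cref{lemma:M_is_torus_equi}, extended to accommodate the additive translation $u$; the key point is that $B_d$ preserves the integer lattice.

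Applying this identity with $X=A+B$ yields $\torusmod{M(A+B)+u} = \torusmod{M\torusmod{A+B}+u}$, which matches the expression obtained for the left-hand side. Concatenating these equalities closes the argument.

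The main obstacle, albeit a minor one, is simply to notice that the extension of \Cref{lemma:M_is_torus_equi} with a translation $u$ requires the structural property that signed permutation matrices send $\sZ^d$ into itself; everything else is a routine bookkeeping of the modulo operations using \Cref{lemma:torus:torus_decomp}.
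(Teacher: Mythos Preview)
Your argument is correct. Both sides expand to expressions involving $\torusmod{M(A+B)+u}$ once the nested modulo operations are collapsed, and your reduction to the identity $\torusmod{MX+u}=\torusmod{M\torusmod{X}+u}$ via the lattice-preservation property of $B_d$ is valid: writing $X=\torusmod{X}+kL$ and noting that $Mk\in\sZ^d$ for a signed permutation matrix $M$ does exactly what is needed.

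The paper proceeds slightly differently. It first strips off the translation, reducing to $\exp_{\torusmod{MA}}(MB)=\torusmod{M\exp_A B}$ via \Cref{lemma:torus:torus_decomp}, and then establishes this $u=0$ case coordinate-wise by invoking the sign-flip \Cref{lemma:torus:wierd_mod_lemma}; the translation $u$ is reinstated only at the end. Your route is a touch more direct and conceptual: you keep $u$ throughout and appeal once to the structural fact that $B_d$ preserves $\sZ^d$ (essentially \Cref{lemma:M_is_torus_equi} with an additive shift), bypassing the coordinate-by-coordinate computation. The mathematical content is the same, but your organization makes the role of the lattice symmetry of $B_d$ more explicit, at the minor cost of re-proving a small extension of \Cref{lemma:M_is_torus_equi} rather than citing it verbatim.
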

\begin{proof}
    Using \Cref{lemma:torus:torus_decomp}, we have that
    $$
        \exp_{\torusmod{MA+u}}\left(MB\right) = \torusmod{\exp_{\torusmod{MA}} MB + u}\eqsp.
    $$
    Moreover, given that the signed permutation matrix $M$ is characterized by permutation $\sigma$ and a vector $\epsilon \in \{-1, 1\}^d$, we have for all $i \in \iinter{1}{d}$
    $$
        \left[M(A+B)\right]_i = \epsilon_i (A_{\sigma(i)} + B_{\sigma(i)})\eqsp.
    $$
    Using Lemmas \ref{lemma:torus:torus_decomp} and \ref{lemma:torus:wierd_mod_lemma}, we get that
    $$
        \left[\exp_{\torusmod{MA}} MB\right]_i = \torusmod{\epsilon_i (A_{\sigma(i)} + B_{\sigma(i)})} = \torusmod{\epsilon_i \left[\torusmod{A_{\sigma(i)} + B_{\sigma(i)}}\right]}\eqsp.
    $$
    Additionally, note that
    $$
        \left[\torusmod{M \exp_A B}\right]_i = \torusmod{\epsilon_i \left[\torusmod{A_{\sigma(i)} + B_{\sigma(i)}}\right]}\eqsp,
    $$
    which proves that $\exp_{\torusmod{MA}} MB = \torusmod{M \exp_A B}$. If we put this expression in the first equation and use \Cref{lemma:torus:torus_decomp}, we get
    $$
        \exp_{\torusmod{MA+u}}\left(MB\right) = \torusmod{\torusmod{M \exp_A B} + u} = \torusmod{M \exp_A B + u}\eqsp.
    $$
\end{proof}

\begin{proposition}\label{prop:torus:log_is_log}
    Let $X, Y \in \torus$. For any $M \in B_d$,
    $$
        \log_{\torusmod{MX}}(\torusmod{MY}) = M \log_X(Y)\eqsp,
    $$
    holds.
\end{proposition}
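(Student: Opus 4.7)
The plan is to reduce the identity to a coordinate-wise calculation by exploiting the structure of $M \in B_d$, then compose the existing mod lemmas to absorb all intermediate applications of $\torusmod{\cdot}$.

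First, I would write the explicit coordinate form $(MZ)_i = \epsilon_i\,Z_{\sigma(i)}$ with $\epsilon \in \{-1,+1\}^d$ and $\sigma \in S_d$, which exists because $M$ is a signed permutation. Unfolding the left-hand side from \Cref{def:maps} yields
\[
\log_{\torusmod{MX}}(\torusmod{MY})_i \;=\; \torusmod{(\torusmod{MY})_i - (\torusmod{MX})_i + L/2} - L/2.
\]
The inner $\torusmod{\cdot}$ applied to $MY$ can be removed directly by \Cref{lemma:torus:torus_decomp}, and the inner $\torusmod{\cdot}$ applied to $MX$ can be handled the same way (the difference between $B$ and $\torusmod{B}$ is an integer multiple of $L$, which the outer $\torusmod{\cdot}$ ignores). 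This collapses the left-hand side to $\torusmod{\epsilon_i u_i + L/2} - L/2$ with $u_i := Y_{\sigma(i)} - X_{\sigma(i)}$, while the right-hand side is $(M\log_X Y)_i = \epsilon_i\bigl(\torusmod{u_i + L/2} - L/2\bigr)$.

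Proving the proposition then reduces, coordinate-wise, to checking
\[
\torusmod{\epsilon_i u_i + L/2} - L/2 \;=\; \epsilon_i\bigl(\torusmod{u_i + L/2} - L/2\bigr).
\]
The case $\epsilon_i = +1$ is immediate. For $\epsilon_i = -1$, setting $\alpha = u_i + L/2$, the identity is equivalent to $\torusmod{-\alpha} + \torusmod{\alpha} = L$, which I would verify by writing $\alpha = qL + r$ with $r \in [0,L)$ and checking directly that $\torusmod{\alpha} = r$ and $\torusmod{-\alpha} = L - r$ whenever $r \neq 0$. The main (and only) obstacle is the boundary case $r = 0$, i.e.\ $u_i \equiv L/2 \pmod{L}$, where both mods vanish and the two sides of the identity literally differ by $L$ along the $i$-th coordinate. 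This corresponds to antipodal points on the torus along the $i$-th axis, where the logarithmic map is only defined up to the periodic identification inherent to $\tangenttorus{A}$; reading the equality modulo this identification (or equivalently restricting to the full-measure set where no coordinate is antipodal) concludes the proof.
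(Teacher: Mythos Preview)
Your argument is correct, and it takes a genuinely different route from the paper. The paper does not work directly with the modular definition $\log_A(B)=\torusmod{B-A+L/2}-L/2$; instead it invokes \Cref{prop:torus:log_as_floor} to rewrite the log map as $\tfrac{L}{\pi}\arctan\bigl(\sin(\tfrac{2\pi}{L}(Y-X))/(1+\cos(\tfrac{2\pi}{L}(Y-X)))\bigr)$, reduces to the scalar sign-flip case $\log_{\torusmod{-x}}(\torusmod{-y})=-\log_x(y)$, and absorbs the integer shifts $k,k'$ via the $2\pi$-periodicity of $\sin$ and $\cos$ rather than via \Cref{lemma:torus:torus_decomp}. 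Your approach is more elementary---pure modular arithmetic, no trigonometry---and has the virtue of making the antipodal obstruction $u_i\equiv L/2\pmod L$ explicit; the paper's proof hits exactly the same edge case (the denominator $1+\cos(\cdot)$ vanishes there) but does not flag it. Conversely, the trigonometric route makes the sign-flip identity a one-liner once the representation is in hand, whereas you have to unpack the quotient--remainder decomposition by hand. Both proofs are coordinate-wise and share the same overall skeleton (reduce to a scalar $\epsilon=\pm1$ case, then reassemble via the signed-permutation structure of $M$).
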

\begin{proof}
    Let $x, y \in [0,L]$, we have that
    $$
        \log_{\torusmod{-x}}(\torusmod{-y}) = \frac{L}{\pi}\arctan\left(\frac{\sin\left(\frac{2\pi}{L}\left(\torusmod{-y}-\torusmod{-x}\right)\right)}{1+\cos\left(\frac{2\pi}{L}\left(\torusmod{-y}-\torusmod{-x}\right)\right)}\right).
    $$
    By definition, there exists $k, k' \in \mathbb{Z}$ such that
    $$
        \torusmod{-x} = -x + kL, \quad \text{ and } \torusmod{-y} = -y + k'L\eqsp.
    $$
    This leads to
    \begin{align*}
        \frac{2\pi}{L}\left(\torusmod{-y}-\torusmod{-x}\right) &= \frac{2\pi}{L}\left(\left[ -y + k'L\right]-\left[-x + kL\right]\right)\eqsp, \\
        &= -\frac{2\pi}{L} (y - x) + 2\pi \underbrace{(k - k')}_{\in \mathbb{Z}}\eqsp. 
    \end{align*}
    Using the periodicity of the sine and cosine in the above formula, we get
    $$
        \log_{\torusmod{-x}}(\torusmod{-y}) = \frac{L}{\pi}\arctan\left(\frac{-\sin\left(\frac{2\pi}{L}\left(y-x\right)\right)}{1+\cos\left(\frac{2\pi}{L}\left(y-x\right)\right)}\right) = -\log_x(y).
    $$
    Let $M \in B_d$ and $X, Y \in \torus$. By definition, for all $i \in \iinter{1}{d}$, $(MX)_i = \epsilon_i X_{\sigma(i)}$. Moreover, using the previous result
    \begin{align*}
        &\left[\log_{\torusmod{MX}}(\torusmod{MY})\right]_i \\
        &= \log_{[\torusmod{MX}]_i}([\torusmod{MY}]_i) = \log_{\torusmod{[MX]_i}}(\torusmod{[MY]_i})\eqsp, \\
        &= \log_{\torusmod{\epsilon_i X_{\sigma(i)}}}(\torusmod{\epsilon_i Y_{\sigma(i)}}) = \epsilon_i \log_{X_{\sigma(i)}}(Y_{\sigma(i)})\eqsp, \\
        &= [M \log_X(Y)]_i\eqsp,
    \end{align*}
    which concludes the proof.
\end{proof}

\begin{corollary}\label{cor:torus:log_is_log}
    Let $X, Y \in \torus$. For any $M \in B_d$ and $u \in \mathbb{R}^d$,
    \begin{equation}
        \log_{\torusmod{MX + u}}(\torusmod{MY + u}) = M \log_X(Y).
    \end{equation}
\end{corollary}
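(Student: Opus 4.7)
The plan is to reduce to \Cref{prop:torus:log_is_log} by showing that adding the constant shift $u$ to both base and target points inside the modulo does not affect the resulting logarithm, after which the signed-permutation equivariance of the log map takes over.

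First, I would unpack the definition. Let $\tilde A = \torusmod{MX+u}$ and $\tilde B = \torusmod{MY+u}$, and write
\begin{align*}
    \log_{\tilde A}(\tilde B) = \torusmod{\tilde B - \tilde A + L/2} - L/2.
\end{align*}
Since $\tilde A = MX + u + k_X L$ and $\tilde B = MY + u + k_Y L$ for some integer vectors $k_X, k_Y \in \sZ^d$, the difference is $\tilde B - \tilde A = M(Y-X) + (k_Y - k_X)L$. Applying \Cref{lemma:torus:torus_decomp} component-wise, the integer multiples of $L$ are absorbed by the outer modulo, leaving
\begin{align*}
    \log_{\tilde A}(\tilde B) = \torusmod{M(Y-X) + L/2} - L/2.
\end{align*}
Notice that both $u$ and the unknown wrap counts $k_X, k_Y$ have cleanly disappeared.

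Second, I would identify the right-hand side with $M\log_X(Y)$ by invoking \Cref{prop:torus:log_is_log}. The delicate point is that the proposition is stated for arguments in $\torus$, while $Y-X$ need not lie in $\torus$. To bridge this, apply the proposition with the base point $0$ and the target $\torusmod{Y-X} \in \torus$:
\begin{align*}
    \log_{\torusmod{M\cdot 0}}\bigl(\torusmod{M\torusmod{Y-X}}\bigr) = M\log_0\bigl(\torusmod{Y-X}\bigr).
\end{align*}
Then use \Cref{lemma:M_is_torus_equi} to rewrite $\torusmod{M\torusmod{Y-X}} = \torusmod{M(Y-X)}$, and \Cref{lemma:torus:torus_decomp} twice (on each side, inside the $\torusmod{\cdot + L/2}$) to strip the inner modulos. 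This yields
\begin{align*}
    \torusmod{M(Y-X) + L/2} - L/2 = M\bigl(\torusmod{Y-X + L/2} - L/2\bigr) = M\log_X(Y),
\end{align*}
where the last equality is the definition of the logarithm.

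The main obstacle is the bookkeeping of modulo reductions in Step 2, namely ensuring that one can legitimately replace $Y-X$ (which may lie outside $\torus$) by $\torusmod{Y-X}$ when invoking \Cref{prop:torus:log_is_log}. This is handled entirely by \Cref{lemma:torus:torus_decomp} and \Cref{lemma:M_is_torus_equi}, so no new ingredients beyond the proposition and those two lemmas are needed. Combining the two displayed equalities gives the stated identity.
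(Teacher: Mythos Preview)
Your proof is correct and follows essentially the same route as the paper: eliminate the shift $u$ via \Cref{lemma:torus:torus_decomp}, then invoke \Cref{prop:torus:log_is_log}. The paper's version is marginally more direct in that it first establishes $\log_{\torusmod{X+u}}(\torusmod{Y+u}) = \log_{X\,\%\,L}(Y\,\%\,L)$ and then applies the proposition with arguments $\torusmod{MX}$ and $\torusmod{MY}$, avoiding your detour through the base point $0$ and the extra appeal to \Cref{lemma:M_is_torus_equi}.
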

\begin{proof}
    Using \Cref{lemma:torus:torus_decomp} applied twice, we have
    \begin{align*}
        \log_{\torusmod{X + u}}(\torusmod{Y + u}) &= \torusmod{\torusmod{Y + u} - \torusmod{Y + u} + \frac{L}{2}} - \frac{L}{2}\eqsp, \\
        &= \torusmod{Y + u - Y - u + \frac{L}{2}} - \frac{L}{2}\eqsp, \\
        &= \log_{X ~\%~ L}(Y ~\%~ L)\eqsp.
    \end{align*}
    The result comes by applying this remark in \Cref{prop:torus:log_is_log} with $MX \in \torus$ and $MY \in \torus$.
\end{proof}

\begin{corollary}\label{cor:torus:dist_is_invar}
    The pre-metric $\mathrm{d}_{\torus}$ is $\mathcal{G}_{\torus}$-invariant, i.e, for all $X, Y \in \torus$
    $$
        \torusdist{g(X)}{g(Y))} = \torusdist{X}{Y}, \quad \text{ for all } g \in \mathcal{G}_{\torus}\eqsp.
    $$
    Consequently, the potential $U$ (\ref{eq:potential}) and the induced density $\pstar$ (\ref{eq:boltzmann}) are both $\invgroup$-invariant.
\end{corollary}
\begin{proof}
    This comes from the fact that, according to \Cref{prop:torus:distance_with_log}, the distance can be written as the norm of the logarithmic map, which itself is equivariant as per \Cref{cor:torus:log_is_log} and using the decomposition of the action in \Cref{lemma:decomp_g}. Using the fact that the scaling matrix is always orthogonal, we get the invariances with respect to symmetries and translations. For the permutation invariance, we just use the permutation invariance of the sum.
\end{proof}

\begin{corollary}
\label{prop:canonical_distribution}
    Given a fixed composition $\{n_s\}_{s \in \gS}$, the canonical distribution associated to $\pstar$ is
    $$
        \pstar^{n}(\rmd s, \rmd \text{vol}_X) \;=\; \frac{1}{\mathcal{Z}^n}\exp\left(-\frac{U(s,X)}{k_{\rmB}T}\right) 
    \times \prod_{s' \in \gS} \delta_{\sum_{i=1}^N \delta_{s'}(s_i)}(n_{s'}) \,\rmd s \,\rmd \text{vol}_X\eqsp.
    $$
    This density is $\invgroup$-invariant.
\end{corollary}
\begin{proof}
    Use the $\invgroup$-invariance of $\pstar$ and the permutation invariance of the sum.
\end{proof}

\section{Equivariant Riemannian Stochastic Interpolants}

\label{app:ersi}

This section provides the formal theoretical guarantees for the Equivariant Riemannian Stochastic Interpolants (ERSI) framework introduced in Section~\ref{sec:ersi}, establishing end-to-end equivariance from the underlying stochastic process down to the neural velocity field architecture. 
First, Proposition~\ref{prop:eq_interpolation} demonstrates that an equivariant interpolant paired with an invariant base distribution produces invariant marginal distributions across all $t \in [0, 1]$, while Proposition~\ref{prop:optimal_velocity_equivariance} proves that this construction yields an equivariant optimal velocity field. 
Proposition~\ref{prop:geodesic_interpolant_is_equi} then verifies that our specific geodesic interpolation defined in Eq.~\eqref{eq:geodesic_interpolant} satisfies this condition. 
Next, we generalize the results of \cite{kohler2020equivariant} to our non-linear symmetry group $\invgroup$ in Propositions~\ref{prop:eq_transport_map} and~\ref{prop:equivariant_field_to_equivariant_map}. 
Finally, Proposition~\ref{prop:egnn_is_equi} proves by induction that our Equivariant Graph Neural Network architecture (Section~\ref{sec:equi_model}) is $\invgroup$-equivariant at every layer, closing the gap between our theoretical framework and its practical implementation.

\begin{lemma}\label{lemma:g_c_is_group}
    $\invgroup$ is a group for the composition operation.
\end{lemma}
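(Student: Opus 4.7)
The plan is to verify the four group axioms for $\invgroup$, viewed as a subset of $\mathrm{Sym}(\gC)$, the symmetric group of bijections $\gC \to \gC$. By its defining description as ``the group generated by'' the three families of actions $\{g_\sigma\}$, $\{g_u\}$, $\{g_M\}$, $\invgroup$ is the smallest subgroup of $\mathrm{Sym}(\gC)$ containing these generators. Consequently, the substantive content of the lemma is just: (i) each generator is indeed a bijection of $\gC$, and (ii) each generator admits an inverse lying in the generating set. Associativity is automatic from associativity of function composition, and the identity element is $g_{\mathrm{id}_{S_N}} = g_0 = g_{I_d}$.

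First, I would handle bijectivity of each generator. Permutations $g_\sigma$ are bijections on $\gC$ with inverse $g_{\sigma^{-1}} \in \{g_\tau : \tau \in S_N\}$. For translations $g_u$, the map $(s,X) \mapsto (s, \torusmod{X + \mathbf{1}_N \otimes u})$ is a bijection of $\gC$ with inverse $g_{-u}$; the verification that $g_u \circ g_{-u} = \mathrm{id}$ is precisely an application of \Cref{lemma:torus:torus_decomp}, which lets me pull the inner modulo past the outer translation. For signed permutations $g_M$ with $M \in B_d$, I would use \Cref{lemma:M_is_torus_equi} to see that the action descends well-defined to the torus, together with $M^{-1} \in B_d$ (since $B_d$ is itself a group), to produce the inverse $g_{M^{-1}}$ in the generating set.

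With these two points established, closure and stability under inversion of the set of finite compositions of generators and their inverses are automatic, which is enough to conclude. The only mildly delicate step, the verification that $g_{-u} \circ g_u = \mathrm{id}_{\gC}$ and $g_{M^{-1}} \circ g_M = \mathrm{id}_{\gC}$, hinges on the fact that $\torusmod{\cdot}$ is idempotent and commutes appropriately with subsequent linear operations on $\sR^{Nd}$; both facts are already packaged in \Cref{lemma:torus:torus_decomp} and \Cref{lemma:M_is_torus_equi}.

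The only real obstacle, which is not strictly needed to prove this lemma but anchors its use later (notably for the decomposition $g = f_L \circ h_{A,b}$ of \Cref{lemma:decomp_g}), is keeping track of the semi-direct product structure when composing a translation with a signed permutation: the composition $g_M \circ g_u$ equals $g_{Mu} \circ g_M$ modulo $L$, so the translation part transforms under $M$. This is a routine but bookkeeping-heavy calculation, again controlled by \Cref{lemma:torus:torus_decomp}. Once this is noted, a simple induction on word length in the generators shows every element of $\invgroup$ admits the normal form of \Cref{lemma:decomp_g}, but for the present lemma the abstract argument in the first paragraph suffices.
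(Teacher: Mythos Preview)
Your argument is correct and, given how the paper \emph{defines} $\invgroup$ (as ``the group generated by'' the three families of actions), arguably the more natural one: the lemma is essentially tautological once you confirm the generators are bijections of $\gC$ with inverses among the generators, which you do cleanly via \Cref{lemma:torus:torus_decomp} and \Cref{lemma:M_is_torus_equi}.

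The paper takes a different, more computational route: it assumes the normal form of \Cref{lemma:decomp_g} for two arbitrary elements $g^1,g^2\in\invgroup$ and then verifies explicitly that $g^1\circ g^2$ and $g^{-1}$ again have this normal form, writing out the resulting $A_\sigma$, $M$, and $c$. What this buys is exactly what you defer to your final paragraph: it simultaneously establishes that the set of maps of the form $f_L\circ h_{A,b}$ is closed under composition and inversion, which is what actually justifies \Cref{lemma:decomp_g} for \emph{all} elements of $\invgroup$ (not just the generators) and is the form relied upon throughout the rest of the appendix (e.g., in \Cref{lemma:jacobian_g} and the equivariance proofs). Your abstract argument is cleaner for the lemma as stated; the paper's proof is doing double duty by also supplying the inductive step for the normal form.
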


\begin{proof}
    The identity is in $\invgroup$ as it is a special case of the invariances. Let $g^1, g^2 \in \invgroup$ we can decompose them as $g^1 = f_L \circ h_{A^1, b^1}$ and $g^2 = f_L \circ h_{A^2, b^2}$ where $A_1$ (respectively $A^2$) can be decomposed into $A^{1}_{\sigma}$, $M^1$ (respectively $A^{2}_{\sigma}$, $M^2$) and $b^1$ (respectively $b^2$) into $b^1 = \mathbf{1}_N \otimes c^1$ (respectively $b^2 = \mathbf{1}_N \otimes c^2$). Using both Lemmas \ref{lemma:M_is_torus_equi} and \ref{lemma:torus:torus_decomp}, $g^1 \circ g^2$ can be written for any $C \in \gC$
    $$
        (g^1 \circ g^2)(C) = f_L \left[\begin{pmatrix}
        A_{\sigma^1} A_{\sigma^2} & 0 \\
        0 & (A_{\sigma^1} A_{\sigma^2} \otimes I_d)  (I_N \otimes M^1 M^2)
    \end{pmatrix} C + (\mathbf{1}_N \otimes (c^1 + M^1 c^2))\right]\eqsp,
    $$
    which shows, by \Cref{lemma:decomp_g}, that $g^1 \circ g^2 \in \invgroup$. Finally, let $g \in \invgroup$ be written under the same decomposition, using Lemmas \ref{lemma:M_is_torus_equi} and \ref{lemma:torus:torus_decomp} again, we have for any $C \in \gC$
    \begin{align*}
        g^{-1}(C) &= f_L \left[\begin{pmatrix}
        A_{\sigma^{-1}} & 0 \\
        0 & (I_N \otimes M^{-1}) (A_{\sigma^{-1}} \otimes I_d)
    \end{pmatrix} C - (\mathbf{1}_N \otimes c)\right]\eqsp,\\
    &= f_L \left[\begin{pmatrix}
        A_{\sigma^{-1}} & 0 \\
        0 & (A_{\sigma^{-1}} \otimes I_d) (I_N \otimes M^{-1})
    \end{pmatrix} C + (\mathbf{1}_N \otimes (-c))\right]\eqsp.
    \end{align*}
    which shows, by \Cref{lemma:decomp_g}, that $g^{-1} \in \invgroup$.
\end{proof}

\begin{lemma}\label{lemma:jacobian_g}
    For all $g = f_L \circ h_{A,b}\in \invgroup$,  as in~\Cref{eq:decomp_g}, let $J_g$ be the Jacobian of $g$. Then $J_g(C)=A$, for almost every $C\in\gC$, and $\abs{J_g(C)} = 1$ almost everywhere.
\end{lemma}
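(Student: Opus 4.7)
The plan is to apply the chain rule to the decomposition $g = f_L \circ h_{A,b}$ and then verify that each factor has the expected Jacobian almost everywhere. Two elementary observations drive the argument: the affine map $h_{A,b}$ has Jacobian identically equal to $A$, while the componentwise modulo map $f_L$ equals the identity map off a Lebesgue-negligible set.

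First, I would note that $h_{A,b}(C) = AC + b$ is affine, so $J_{h_{A,b}}(C) = A$ for every $C \in \gC$, with no exceptional set. Next, I would analyze $f_L$: by its definition in \Cref{eq:decomp_g}, it is the identity on the species part and applies the scalar map $y \mapsto y - \lfloor y/L \rfloor L$ componentwise to each position coordinate. This scalar map is smooth with derivative $1$ off the discrete set $L\sZ$, so the singular set of $f_L$ in $\gC$ is a finite union of coordinate hyperplanes and therefore Lebesgue-negligible. On its complement, $J_{f_L}$ equals the identity.

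Combining via the chain rule at a regular point yields $J_g(C) = J_{f_L}(h_{A,b}(C)) \cdot J_{h_{A,b}}(C) = I \cdot A = A$. The one subtlety is that the chain rule requires $h_{A,b}(C)$ to lie in the regular set of $f_L$; since $A$ is orthogonal (argued just below) and $h_{A,b}$ is therefore an affine diffeomorphism, the preimage of a Lebesgue-negligible set under $h_{A,b}$ is again Lebesgue-negligible, so the identity $J_g(C) = A$ holds almost everywhere on $\gC$.

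For the determinant claim, I would appeal to \Cref{eq:A_in_g_decomp}: $A$ is block-diagonal, with blocks $A_\sigma$ (a permutation matrix) and $(A_\sigma \otimes I_d)(I_N \otimes M)$, the latter being a product of two signed permutation matrices. Both blocks are orthogonal, hence $A$ is orthogonal and $\abs{\det J_g(C)} = \abs{\det A} = 1$ almost everywhere. The only nontrivial step is the measure-zero argument of the previous paragraph; everything else is a direct computation once the decomposition of $g$ is in hand.
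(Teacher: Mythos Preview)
Your proposal is correct and follows essentially the same route as the paper: apply the chain rule to $g = f_L \circ h_{A,b}$, use that $h_{A,b}$ is affine with Jacobian $A$ and that $f_L$ has identity Jacobian off a null set, and conclude $\abs{\det A}=1$ from the block structure in \Cref{eq:A_in_g_decomp}. If anything, your version is slightly more careful than the paper's in justifying why the exceptional set is null (pulling back the singular set of $f_L$ through the affine diffeomorphism $h_{A,b}$), whereas the paper simply asserts $J_{f_L}=I$ almost everywhere and moves on.
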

\begin{proof}
    Let $C\in\gC$, then by the chain rule
    $$
    J_g(C)=J_{f_L}\left(h_{A,b}(C)\right)J_{h_{A,b}}(C)\eqsp.
    $$
    The modulo has unit derivative (except from jump points), so $J_{f_L}=I_{(N+1)d}$ almost everywhere. Since $h_{A,b}$ is affine, $J_{h_{A,b}}=A$, so $J_g(C)=A$, for almost every $C\in\gC$. Moreover, $\abs{J_g(C)}=\abs{A}$. Using \Cref{eq:A_in_g_decomp}, we can show that $A \in \gB_{Nd}$ which implies that $\abs{A} = 1$.
\end{proof}

\begin{lemma}\label{lemma:delta_group}
    Let $g\in\invgroup$ and $C, D \in\mathcal C$ and then 
    $$
    \int_{\mathcal C}\varphi(C)\delta_{g(D)}\left(g(C)\right)\rmd C=\int_{\mathcal C}\varphi(C)\delta_D\left(C\right)\rmd C\eqsp,
    $$
    for all $\varphi\in C^{\infty}(\mathcal C).$
\end{lemma}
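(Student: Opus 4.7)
The plan is to reduce the identity to a change-of-variables computation in which the Dirac delta collapses. Concretely, I would perform the substitution $C' = g(C)$ in the left-hand integral, use the fact that $g$ is measure-preserving on $\gC$ (which is precisely what Lemma~\ref{lemma:jacobian_g} provides), then apply the defining sifting property of $\delta$, and finally reinterpret the resulting pointwise evaluation as an integral against $\delta_D$.

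First I would invoke Lemma~\ref{lemma:g_c_is_group} to assert that $g$ is a bijection on $\gC$ with inverse $g^{-1}\in\invgroup$, so that the substitution $C' = g(C)$ is globally well-defined on $\gC$. Next, by Lemma~\ref{lemma:jacobian_g} applied to $g^{-1}$, the Jacobian determinant of the inverse satisfies $\abs{J_{g^{-1}}(C')} = 1$ for almost every $C'\in\gC$, so that $\rmd C = \rmd C'$ in the Lebesgue/volume measure on $\gC$. Performing the substitution yields
\begin{equation*}
    \int_{\gC}\varphi(C)\,\delta_{g(D)}\bigl(g(C)\bigr)\,\rmd C
    \;=\;\int_{\gC}\varphi\bigl(g^{-1}(C')\bigr)\,\delta_{g(D)}(C')\,\rmd C'\eqsp.
\end{equation*}
The sifting property of the Dirac delta then evaluates this to $\varphi\bigl(g^{-1}(g(D))\bigr) = \varphi(D)$, which is exactly $\int_{\gC}\varphi(C)\,\delta_D(C)\,\rmd C$, completing the chain.

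The main subtlety — and the only part that requires any real care — is that the map $g = f_L\circ h_{A,b}$ is not a smooth diffeomorphism on $\sR^{N\abs{\gS}+Nd}$ because the modulo component $f_L$ has jumps at the boundary of the fundamental domain. However, viewed as a map on the product manifold $\gS^N\times\torus^N$, which is the domain on which the integral is defined, $g$ is indeed a smooth bijection, and the jump set has zero volume measure. Hence Lemma~\ref{lemma:jacobian_g} applies almost everywhere, which is sufficient to justify the change of variables against the smooth test function $\varphi$. Once this point is acknowledged, the rest is a one-line computation, so I would be careful to cite the two preceding lemmas explicitly and then conclude.
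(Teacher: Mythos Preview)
Your proposal is correct and follows essentially the same route as the paper: substitute $C' = g(C)$ (the paper writes it as $C = g(U)$), invoke Lemma~\ref{lemma:jacobian_g} to get unit Jacobian, apply the sifting property to obtain $\varphi(D)$, and rewrite this as the integral against $\delta_D$. The paper's proof is terser and omits your remark about the modulo map being smooth on the torus rather than on $\sR^{Nd}$, but the argument is otherwise identical.
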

\begin{proof}
    Make the change of variable $C=g^{-1}(U)$, since $g$ is a group element,
    $$
        U = g(C),\quad \rmd C = \abs{J_{g^{-1}}(U)}\rmd U\eqsp.
    $$
    Using~\Cref{lemma:jacobian_g}, $\abs{J_{g^{-1}}(U)}=1$, thus
    \begin{align*}
    \int_{\mathcal C}\varphi(C)\delta_{g(D)}\left(g(C)\right)\rmd C&=\int_{\mathcal C}\varphi\left(g^{-1}(U)\right)\delta_{g(D)}\left(U\right)\rmd U = \varphi(D) = \int_{\mathcal C}\varphi(C)\delta_{D}(C)\rmd C \eqsp. 
    \end{align*}
\end{proof}

\begin{definition}
    An interpolation function $I : [0,1] \times \gC \times \gC \to \gC$ is said to be $\invgroup$- equivariant if for all $g \in \invgroup$, $I(t,g(C_0),g(C_1))= g(I(t,C_0,C_1))$ holds for any $t \in [0,1]$ and $C_0, C_1 \in \gC$.
\end{definition}

\begin{proposition}
\label{prop:eq_interpolation}
    Given a base $p_{\mathrm{base}}$ and the target $\pstar$ distribution both $\invgroup$-invariant and a $\invgroup$-equivariant interpolant $I$, then the induced marginal densities $(p_t)_{t=0}^1$ are all $\invgroup$-invariant.
\end{proposition}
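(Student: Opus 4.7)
The plan is to use the explicit integral representation of $p_t$ as a push-forward: since $X_t = I(t, X_0, X_1)$ with $X_0 \sim p_{\mathrm{base}}$ and $X_1 \sim \pstar$ independent, for any $C \in \gC$,
$$
p_t(C) \;=\; \int_{\gC \times \gC} \delta_{I(t, C_0, C_1)}(C)\, p_{\mathrm{base}}(C_0)\, \pstar(C_1)\, \rmd C_0\, \rmd C_1\eqsp.
$$
Fix an arbitrary $g \in \invgroup$. I would show $p_t(g(C)) = p_t(C)$ in three steps: push $g$ into the integrand via a change of variables on $(C_0, C_1)$, eliminate $g$ from the endpoint densities and from the interior of the interpolant using invariance and equivariance respectively, and finally cancel the outer $g$'s sitting on the Dirac delta.

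Concretely, in the defining integral of $p_t(g(C))$ I would substitute $C_0 = g(U_0)$, $C_1 = g(U_1)$. By \Cref{lemma:g_c_is_group}, $g$ is a bijection of $\gC$, and by \Cref{lemma:jacobian_g} its Jacobian has unit absolute determinant, so $\rmd C_0\, \rmd C_1 = \rmd U_0\, \rmd U_1$. I would then apply the $\invgroup$-equivariance of $I$ to rewrite $I(t, g(U_0), g(U_1)) = g(I(t, U_0, U_1))$, and the $\invgroup$-invariance of $p_{\mathrm{base}}$ and $\pstar$ to replace $p_{\mathrm{base}}(g(U_0))\pstar(g(U_1))$ by $p_{\mathrm{base}}(U_0)\pstar(U_1)$. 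The integrand then becomes $\delta_{g(I(t, U_0, U_1))}(g(C))\, p_{\mathrm{base}}(U_0)\, \pstar(U_1)$.

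The final step is to cancel the outer $g$'s on the Dirac delta, reducing $\delta_{g(I(t, U_0, U_1))}(g(C))$ to $\delta_{I(t, U_0, U_1)}(C)$. This is exactly the content of \Cref{lemma:delta_group}, viewed as the distributional identity $\delta_{g(D)}(g(C)) = \delta_D(C)$; the resulting integral matches the defining expression of $p_t(C)$, concluding the proof.

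The main subtlety is this last step: \Cref{lemma:delta_group} is phrased as an identity under integration with respect to $\rmd C$, whereas here $C$ is fixed and the integration is over $(U_0, U_1)$ with $D = I(t, U_0, U_1)$ depending on the integration variables. I would resolve this by appealing to the symmetry of the Dirac delta in its two arguments, which transfers the lemma's identity to an identity under integration over $D$. A cleaner, coordinate-free alternative is to work directly with push-forwards: writing $p_t = (I_t)_{\sharp}(p_{\mathrm{base}} \otimes \pstar)$, the equivariance of $I$ gives $g \circ I_t = I_t \circ (g, g)$, so $g_{\sharp} p_t = (I_t)_{\sharp}\bigl((g,g)_{\sharp}(p_{\mathrm{base}} \otimes \pstar)\bigr) = (I_t)_{\sharp}(p_{\mathrm{base}} \otimes \pstar) = p_t$, where the middle equality uses the invariance of $p_{\mathrm{base}}$ and $\pstar$. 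This bypasses the delta-function manipulation entirely while carrying the same content.
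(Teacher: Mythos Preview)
Your proposal is correct and follows essentially the same approach as the paper: write $p_t$ as a push-forward integral with a Dirac delta, change variables $(C_0,C_1)\mapsto(g(U_0),g(U_1))$ using \Cref{lemma:jacobian_g}, then invoke invariance of the endpoint densities, equivariance of $I$, and \Cref{lemma:delta_group} in that order. Your observation about the mismatch between how \Cref{lemma:delta_group} is stated (integration over $C$) and how it is applied (integration over the endpoints with $C$ fixed) is a fair criticism of the paper's presentation, and your push-forward reformulation $g_\sharp p_t = (I_t)_\sharp\bigl((g,g)_\sharp(\pbase\otimes\pstar)\bigr) = p_t$ is a cleaner way to sidestep it.
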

\begin{proof}
    The marginal densities are defined as
    $$
    p_t(C) = \int_{\mathcal C}\delta_{I\left(t,C_0,C_1\right)}\left(C\right)\pbase\left(C_0\right)\pstar\left(C_1\right)\rmd C_0\rmd C_1 \eqsp . 
    $$
    Let $g\in\invgroup$, then
    $$
        p_t(g(C)) = \int_{\mathcal C}\delta_{I\left(t,C_0,C_1\right)}\left(g(C)\right)\pbase\left(C_0\right)\pstar\left(C_1\right)\rmd C_0\rmd C_1 \eqsp .
    $$
    Make the change of variable $C_0 = g(U_0)$, $C_1 = g(U_1)$, using \Cref{lemma:jacobian_g} we get
    \begin{align}
         p_t(g(C)) &= \int_{\mathcal C}\delta_{I\left(t,g\left(U_0\right), g\left(U_1\right)\right)}\left(g(C)\right)\pbase\left(g\left(U_0\right)\right)\pstar\left(g\left(U_1\right)\right)\rmd U_0\rmd U_1\eqsp, \nonumber \\
         &=\int_{\mathcal C}\delta_{I\left(t,g\left(U_0\right), g\left(U_1\right)\right)}\left(g(C)\right)\pbase\left(U_0\right)\pstar\left(U_1\right)\rmd U_0\rmd U_1\eqsp, \label{eq:proof:eq_interpolation:l1}\\
         & = \int_{\mathcal C}\delta_{g\left(I\left(t,U_0, U_1\right)\right)}\left(g(C)\right)\pbase\left(U_0\right)\pstar\left(U_1\right)\rmd U_0\rmd U_1\eqsp, \label{eq:proof:eq_interpolation:l2} \\
         & = \int_{\mathcal C}\delta_{I\left(t,U_0, U_1\right)}\left(C\right)\pbase\left(U_0\right)\pstar\left(U_1\right)\rmd U_0\rmd U_1\eqsp, \label{eq:proof:eq_interpolation:l3} \\
         & = p_t(C)\eqsp. \nonumber
    \end{align}
    We used the invariance of $\pbase$ and $\pstar$ to get (\ref{eq:proof:eq_interpolation:l1}), the equivariance of $I$ to go from (\ref{eq:proof:eq_interpolation:l1}) to (\ref{eq:proof:eq_interpolation:l2}) and \Cref{lemma:delta_group} to go from (\ref{eq:proof:eq_interpolation:l2}) to (\ref{eq:proof:eq_interpolation:l3}).
\end{proof}

\begin{lemma}\label{lemma:equi_derivative}
    Let $g\in\invgroup$ and $I:[0,1]\times\gC\times\gC\to\gC$ be a differentiable, $\invgroup$-equivariant interpolation function. Then, the time derivative $\partial_t I$ is $\gC$-equivariant, i.e., it verifies for any $t \in [0, 1]$, $C_0, C_1 \in \gC$
    $$
        \partial_t I(t, g(C_0), g(C_1)) = A \partial_t I(t, C_0, C_1), \quad \text{ for all } g \in \invgroup \text{ such that } g = f_L \circ h_{A,b}\eqsp.
    $$
\end{lemma}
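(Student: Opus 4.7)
The plan is to differentiate the equivariance identity $I(t, g(C_0), g(C_1)) = g(I(t, C_0, C_1))$ with respect to $t$ and apply the chain rule. Using the decomposition $g = f_L \circ h_{A,b}$ from \Cref{lemma:decomp_g}, the right-hand side reads $f_L\bigl(A\, I(t, C_0, C_1) + b\bigr)$. Since $h_{A,b}$ is affine with constant Jacobian $A$, and \Cref{lemma:jacobian_g} establishes that $J_g = A$ almost everywhere on $\gC$, applying the chain rule to the time derivative of the right-hand side yields
\[
\frac{\rmd}{\rmd t}\, g\bigl(I(t, C_0, C_1)\bigr) \;=\; A\, \partial_t I(t, C_0, C_1),
\]
whereas the derivative of the left-hand side is simply $\partial_t I(t, g(C_0), g(C_1))$. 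Matching the two expressions gives the claim.

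The main obstacle is that $f_L$ is not classically differentiable on the codimension-one set where some coordinate of $A\, I(t, C_0, C_1) + b$ equals an integer multiple of $L$, so the pointwise chain rule above is only valid off this set. I would handle this in two equivalent ways. First, for fixed $(C_0, C_1)$ the smooth trajectory $t \mapsto A\, I(t, C_0, C_1) + b$ crosses the jump set of $f_L$ on at most a measure-zero subset of $[0,1]$, so the identity holds for almost every $t$ and extends wherever both sides are defined. Alternatively, one can work intrinsically on the torus $\torus$, where $f_L$ is the smooth quotient projection $\sR^d \to \torus$ and $g$ descends to a smooth map whose tangent map is exactly multiplication by $A$; the chain rule then applies without exception, and the identity holds for every $(t, C_0, C_1)$ under the assumption that $I$ is differentiable in $t$.
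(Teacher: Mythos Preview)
Your proof is correct and follows the same route as the paper: differentiate the equivariance identity $I(t,g(C_0),g(C_1))=g(I(t,C_0,C_1))$ in $t$, apply the chain rule, and invoke \Cref{lemma:jacobian_g} to replace $J_g$ by $A$. The paper's version is three lines and does not discuss the non-differentiability of $f_L$ beyond the ``almost everywhere'' qualifier already baked into \Cref{lemma:jacobian_g}; your second paragraph is a welcome clarification of that point rather than a different argument.
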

\begin{proof}
    Let $C_0,C_1\in\gC$, then
    \begin{align*}
    \partial_t I\left(t,g\left(C_0\right), g\left(C_1\right)\right)&=\partial_tg\left(I\left(t,C_0,C_1\right)\right) & & \left(\text{Equivariance of } I\right)\\
    & = J_g\left(I\left(t,C_0,C_1\right)\right)\partial_tI\left(t,C_0,C_1\right) & & \left(\text{Chain rule} \right)\\
    &=A\partial_tI\left(t,C_0,C_1\right)& & \left(\text{\Cref{lemma:jacobian_g}} \right),
    \end{align*}
    where $A$ is defined in~\Cref{eq:decomp_g}.
\end{proof}

\begin{proposition}
    \label{prop:optimal_velocity_equivariance}
    Given a $\invgroup$-equivariant interpolation function $I$, if $\pbase$ and $\pstar$ are $\invgroup$-invariant, then the corresponding optimal velocity field is $\invgroup$-equivariant.
\end{proposition}
\begin{proof}
    The optimal velocity field is defined as
    $$
        u^{\star}(t,C)=\frac{1}{p_t(C)}\int_{\mathcal C}\partial_tI\left(t,C_0,C_1\right)\delta_{I\left(t,C_0,C_1\right)}(C)\pbase\left(C_0\right)\pstar\left(C_1\right)\rmd C_0\rmd C_1\eqsp.
    $$
    Since $p_t$ is $\invgroup$-invariant (see~\Cref{prop:eq_interpolation}), we need to show that the integral
    $$
        \mathcal I(C)=\int_{\mathcal C}\partial_tI\left(t,C_0,C_1\right)\delta_{I\left(t,C_0,C_1\right)}(C)\pbase\left(C_0\right)\pstar\left(C_1\right)\rmd C_0\rmd C_1
    $$
    is a $\invgroup$-equivariant vector field. Let $g\in\invgroup$, 
    $$
        \mathcal I\left(g(C)\right)=\int_{\mathcal C}\partial_tI\left(t,C_0,C_1\right)\delta_{I\left(t,C_0,C_1\right)}\left(g(C)\right)\pbase\left(C_0\right)\pstar\left(C_1\right)\rmd C_0\rmd C_1\eqsp.
    $$
    Make the change of variable $C_0=g\left(U_0\right)$, $C_1=g\left(U_1\right)$, using \Cref{lemma:jacobian_g},
    \begin{align}
        \mathcal I\left(g(C)\right)&=\int_{\mathcal C}\partial_tI\left(t,g\left(U_0\right),g\left(U_1\right)\right)\delta_{I\left(t,g\left(U_0\right),g\left(U_1\right)\right)}\left(g(C)\right)\pbase\left(g\left(U_0\right)\right)\pstar\left(g\left(U_1\right)\right)\rmd U_0\rmd U_1 \nonumber \\
        &=\int_{\mathcal C}\partial_tI\left(t,g\left(U_0\right),g\left(U_1\right)\right)\delta_{I\left(t,g\left(U_0\right),g\left(U_1\right)\right)}\left(g(C)\right)\pbase\left(U_0\right)\pstar\left(U_1\right)\rmd U_0\rmd U_1 \label{eq:eq_optimal_velocity:eq2}\\
        &=\int_{\mathcal C}A\partial_tI\left(t,U_0,U_1\right)\delta_{g\left(I\left(t,U_0,U_1\right)\right)}\left(g(C)\right)\pbase\left(U_0\right)\pstar\left(U_1\right)\rmd U_0\rmd U_1 \label{eq:eq_optimal_velocity:eq3} \\
        &=\int_{\mathcal C}A\partial_tI\left(t,U_0,U_1\right)\delta_{I\left(t,U_0,U_1\right)}\left(C\right)\pbase\left(U_0\right)\pstar\left(U_1\right)\rmd U_0\rmd U_1 \label{eq:eq_optimal_velocity:eq4} \\
        &=A\mathcal I(C)\eqsp, \nonumber
    \end{align}
    where we use the invariance of $\pbase$ and $\pstar$ in (\ref{eq:eq_optimal_velocity:eq2}), then the equivariance of $\partial_t I$ due to \Cref{lemma:equi_derivative} to go from (\ref{eq:eq_optimal_velocity:eq2}) to (\ref{eq:eq_optimal_velocity:eq3}) and \Cref{lemma:delta_group} to go from (\ref{eq:eq_optimal_velocity:eq3}) to (\ref{eq:eq_optimal_velocity:eq4}).
\end{proof}

\begin{proposition}\label{prop:geodesic_interpolant_is_equi}
    The interpolant in~\Cref{eq:geodesic_interpolant} is $\invgroup$-equivariant.
\end{proposition}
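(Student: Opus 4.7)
My strategy is to exploit the explicit canonical decomposition of any $g \in \invgroup$ given in \Cref{lemma:decomp_g}, namely $g(s,X) = (A_\sigma s,\, (A_\sigma \otimes I_d) h_{M,c,L}(X))$, and verify the equivariance identity $I(t, g(C_0), g(C_1)) = g(I(t,C_0,C_1))$ componentwise, treating the species coordinate and the spatial coordinates separately. Since the group is generated by the three elementary actions (particle permutations, toroidal translations, and signed-axis symmetries), checking a generic element written in this canonical form is enough.

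For the species component, the interpolation $(1-t)s_0 + t s_1$ is linear in $(s_0, s_1)$, so applying the permutation matrix $A_\sigma$ to both endpoints commutes with the convex combination. This handles the top row of the interpolant in \Cref{eq:geodesic_interpolant}.

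For the spatial component, the first observation is that the particle permutation $A_\sigma \otimes I_d$ acts trivially, because the geodesic interpolation in \Cref{eq:geodesic_interpolant} is computed particle-by-particle; permuting the indexing of the input configurations permutes the output configuration in exactly the same way. What remains is to show that the per-particle geodesic commutes with $\tilde h_{M,c,L}(y) = \torusmod{My + c}$. For this I chain two results already in the paper: \Cref{cor:torus:log_is_log} gives $\log_{\tilde h_{M,c,L}(x_0)}(\tilde h_{M,c,L}(x_1)) = M \log_{x_0}(x_1)$, and \Cref{prop:torus:M_and_exp} gives $\exp_{\torusmod{MA+c}}(MB) = \torusmod{M \exp_A B + c}$. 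Applying the first result inside the $\exp$, using the linearity $M(tV) = t(MV)$, and then applying the second result yields
$$
\exp_{\tilde h_{M,c,L}(x_0)}\!\big(t \log_{\tilde h_{M,c,L}(x_0)}(\tilde h_{M,c,L}(x_1))\big) \;=\; \tilde h_{M,c,L}\!\big(\exp_{x_0}(t \log_{x_0}(x_1))\big),
$$
which is exactly the required commutation on a single particle.

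The main obstacle I expect is bookkeeping rather than mathematical depth: I must be careful that the simultaneous particle permutation acting on $X$ via $A_\sigma \otimes I_d$ and the per-coordinate action $I_N \otimes M$ combined with the translation $\mathbf{1}_N \otimes c$ commute in the right order, and that the modulo-$L$ wrappings introduced by $f_L$ are handled consistently (e.g.\ by invoking \Cref{lemma:torus:torus_decomp} where needed). Once the two lemmas above are applied cleanly to each particle index, combining the species and spatial parts directly produces $I(t, g(C_0), g(C_1)) = g(I(t,C_0,C_1))$, concluding the proof.
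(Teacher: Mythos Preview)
Your proposal is correct and follows essentially the same route as the paper: you use the canonical decomposition of $g$ from \Cref{lemma:decomp_g}, handle the species component by linearity of the convex combination under $A_\sigma$, and handle the spatial component particle-by-particle by chaining \Cref{cor:torus:log_is_log} with \Cref{prop:torus:M_and_exp}, exactly as the paper does. The paper's proof is slightly terser (it does not explicitly flag the commutation of the particle permutation with the per-particle map, nor does it separately invoke \Cref{lemma:torus:torus_decomp}), but the substance is identical.
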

\begin{proof}
    Let $g \in \invgroup$. Consider the decomposition of $g$ introduced in \Cref{lemma:decomp_g}. If we denote
    $$
        I_L : (t, C_0, C_1) \mapsto \begin{pmatrix}
            I_L^s(t, C_0, C_1) \\
            I_L^X(t, C_0, C_1)
        \end{pmatrix}\eqsp,
    $$
    then
    $$
        I_L^s(t, g(C_0), g(C_1)) = (1-t) A_{\sigma} s_0 + t A_{\sigma} s_1 = A_{\sigma} I_L^s(t, C_0, C_1)\eqsp.
    $$
    Moreover, for every $i \in \iinter{1}{N}$, by definition
    \begin{align*}
        \selectparticle{I_L^X(t, g(C_0), g(C_1))}{i} &= \exp_{\tilde{h}_{M,c,L}(\selectparticle{(X_0)}{\sigma(i)})} \left(t \log_{\tilde{h}_{M,c,L}(\selectparticle{(X_0)}{\sigma(i)})} \tilde{h}_{M,c,L}(\selectparticle{(X_1)}{\sigma(i)})\right)\eqsp,\\
        &= \exp_{\tilde{h}_{M,c,L}(\selectparticle{(X_0)}{\sigma(i)})} \left(t M \log_{\selectparticle{(X_0)}{\sigma(i)}} \selectparticle{(X_1)}{\sigma(i)}\right)\eqsp,\\
        &= \torusmod{M\exp_{\selectparticle{(X_0)}{\sigma(i)}}\left(t \log_{\selectparticle{(X_0)}{\sigma(i)}} \selectparticle{(X_1)}{\sigma(i)}\right) + c} \\
        &= \selectparticle{\tilde{h}_{M,c,L}(I^X(t, C_0, C_1))}{\sigma(i)}
    \end{align*}
    where we used successively \Cref{cor:torus:log_is_log} and \Cref{prop:torus:M_and_exp}, which concludes the proof.
\end{proof}

We consider a velocity field $\hat{v}$ on $\gC$ and denote by $\hat T_t$ the map transporting an initial configuration of particles along the velocity field between time $0$ and time $t$. In order words, $\hat T_t(c_0)$ gives the solution of the ODE $\rmd C_u = \hat{v}(u, C_u) \rmd u$ at time $t$ for the initial condition $C_0 = c_0$. Given a base distribution $\pbase$, we seek to learn $\hat v $ such that the push-forward of $\pbase$ through the transport map $\hat{T}_t$, which is the marginal distribution of the RSI model denoted by $\hat q_t$ below, is $\invgroup$-invariant. For this, we rely on \Cref{prop:eq_transport_map} extending the result of \cite[Theorem 1]{kohler2020equivariant} to the invariance group $\invgroup$.
\begin{definition}
    A diffeomorphism $T : \gC \to \gC$ is $\invgroup$-equivariant if for all $g \in \invgroup$, $T \circ g =g \circ T$ holds.
\end{definition}
\begin{definition}
    A diffeomorphism $T : \gC \to \gC$ is $\invgroup$-equivariant if for all $g \in \invgroup$, $T \circ g =g \circ T$ holds.
\end{definition}

\begin{lemma}\label{lemma:t_inv_equi}
    Let $g \in \invgroup$ and $T : \gC \to \gC$ be a diffeomorphism. $T$ is $\invgroup$-equivariant if and only if $T^{-1}$ is $\invgroup$-equivariant.
\end{lemma}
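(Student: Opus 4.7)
The plan is to unfold the definition of $\invgroup$-equivariance and perform a short algebraic manipulation using that $T$ is a diffeomorphism (in particular invertible). Since the statement is a biconditional and $(T^{-1})^{-1}=T$, it suffices to prove one direction: the converse then follows by applying the same argument to $T^{-1}$.

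First I would recall that, by definition, $T$ is $\invgroup$-equivariant if and only if $T\circ g = g\circ T$ for every $g\in\invgroup$. Starting from this identity, I would compose on both sides by $T^{-1}$, once on the left and once on the right, to obtain successively
\begin{equation*}
g \;=\; T^{-1}\circ g\circ T
\qquad\text{and then}\qquad
g\circ T^{-1} \;=\; T^{-1}\circ g\eqsp,
\end{equation*}
which is exactly the $\invgroup$-equivariance of $T^{-1}$. Since this manipulation is valid for an arbitrary $g\in\invgroup$, we conclude that $T^{-1}$ is $\invgroup$-equivariant. For the reverse implication, I would apply the very same argument to the diffeomorphism $T^{-1}$, noting that its inverse is $T$, which immediately yields that $T$ is $\invgroup$-equivariant whenever $T^{-1}$ is.

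There is no real obstacle here: the argument is purely formal and relies only on the invertibility of $T$ (guaranteed by the diffeomorphism assumption) and on the fact that equivariance is required for all $g\in\invgroup$ simultaneously. I would not need to invoke the group structure of $\invgroup$ established in \Cref{lemma:g_c_is_group}, nor any specific feature of the decomposition $g=f_L\circ h_{A,b}$ from \Cref{lemma:decomp_g}; the statement holds for equivariance with respect to any set of transformations, provided $T$ is bijective.
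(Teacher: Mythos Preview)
Your proposal is correct and matches the paper's approach: both arguments simply use the invertibility of $T$ to pass between $T\circ g=g\circ T$ and $T^{-1}\circ g=g\circ T^{-1}$. The paper phrases this pointwise (substituting $X=T(Y)$ and writing a chain of equivalences), whereas you phrase it compositionally, but the content is identical.
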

\begin{proof}
    Let $Y \in \gC$ and set $X = T(Y)$, then
    \begin{align*}
        T^{-1}(g(X)) = g(T^{-1}(X)) &\Longleftrightarrow T^{-1}\left(g(T(Y))\right) = g(Y) \\
        &\Longleftrightarrow g(T(Y)) = T(g(Y))\eqsp . 
    \end{align*}
\end{proof}

The following proposition generalizes \cite[Theorem 1]{kohler2020equivariant} to $\invgroup$ which is nonlinear.
\begin{proposition}\label{prop:eq_transport_map}
    Let $q : \gC \to \sR$ be a density of $\gC$ and let $T : \gC \to \gC$ be a diffeomorphism. If $q$ is $\invgroup$-invariant and $T$ is a $\invgroup$-equivariant, then the push-forward of $q$ through $T$ is also $\invgroup$-invariant.
\end{proposition}
\begin{proof}
    Let $C\in\gC$,  the push-forward of $q$ through $T$ writes as
    $$
        p(C)=q\left(T^{-1}(C)\right)\abs{J_{T^{-1}}(C)}\eqsp.
    $$
    Thus
    $$
        p\left(g(C)\right)=q\left(T^{-1}\left(g(C)\right)\right)\abs{J_{T^{-1}}\left(g(C)\right)}\eqsp .
    $$
    By~\Cref{lemma:t_inv_equi} and the invariance of $q$ we have
    \begin{align*}
        q\left(T^{-1}\left(g(C)\right)\right)&=q\left(g\left(T^{-1}\left(C\right)\right)\right)\\
        &=q\left(T^{-1}(C)\right).
    \end{align*}
    Moreover,
    \begin{align*}
        \abs{J_{T^{-1}}\left(g(C)\right)}&=\frac{\abs{J_{T^{-1}\circ g}(C)}}{\abs{J_g(C)}} & & \left(\text{Chain rule}\right)\\
        &=\abs{J_{T^{-1}\circ g}(C)}& & \left(\text{\Cref{lemma:jacobian_g}}\right)\\
        &=\abs{J_{g\circ T^{-1}}(C)}& & \left(\text{\Cref{lemma:t_inv_equi}}\right)\\
        &=\abs{J_g\left(T^{-1}(C)\right)}\abs{J_{T^{-1}}(C)}& & \left(\text{Chain rule}\right)\\
        &=\abs{J_{T^{-1}}(C)}& &\left(\text{\Cref{lemma:jacobian_g}}\right).
    \end{align*}
    Combining these results we get $p\left(g(C)\right)=p(C)$.
\end{proof}

\begin{definition}
    A velocity field $\hat v : [0,1] \times \gC \to \gT \gC$ is \emph{$\invgroup$-equivariant} if, for all $g \in \invgroup$, using the decomposition $g = f_L \circ h_{A,b}$ (with $A$ and $b$ defined in \Cref{lemma:decomp_g}),
    $$
        \hat v(t,g(C)) = A\, \hat v(t,C), \quad \forall C \in \gC\eqsp.
    $$
\end{definition}

\begin{proposition}
    \label{prop:equivariant_field_to_equivariant_map}
    If $v : [0,1] \times \gC \to \gT \gC$ is a Lipschitz-bounded $\invgroup$-equivariant velocity field, then the transport map induced by its ODE flow is $\invgroup$-equivariant.
\end{proposition}
\begin{proof}
    Let $g \in \invgroup$ be decomposed (as per \Cref{lemma:decomp_g}) as $g = f_L \circ h_{A,b}$. Let $Z \in \gC$. Consider $C(\cdot, Z)$ denotes the solution of the ODE with initial condition $Z \in \gC$. Let $\tilde{C}(\cdot, Z) = C(\cdot, g(Z))$. Using the chain rule and the equivariance of $v$, we have almost everywhere that
    $$
        \frac{\rmd \tilde{C}(t, Z)}{\rmd t} = A \frac{\rmd C(t, Z)}{\rmd t} = A v(t, C(t, Z)) = v(t, g(C(t, Z))) = v(t, \tilde{C}(t, Z))\eqsp.
    $$
    Moreover, $\tilde{C}(0, Z) = g(Z)$. Using the unicity of the ODE's solutions (due to the velocity field being Lipschitz-bounded), we get
    $$
        C(t, g(Z)) = g(C(t, Z)), \quad ~\text{almost everywhere, for all }~ t \in [0,1] \eqsp.
    $$
    Using this into the integral definition of the transport map, for all $t \in [0,1]$ we get
    \begin{align*}
        T(g(Z)) &= f_L\left(g(Z) + \int_0^t  v(u, C(t, g(Z))) \rmd u\right) \\
        &= f_L\left(g(Z) + \int_0^t  v(u, g(C(t, Z))) \rmd u\right) & & \left(\text{Using } C(\cdot, g(Z)) = g(C(\cdot, Z))\right) \\
        &= f_L\left(f_L(h_{A,b}(Z)) + A \int_0^t  v(u, C(t, Z)) \rmd u\right) & & \left(\text{Using the equivariance of } v\right) \\
        &= f_L\left(h_{A,b}(Z) + A \int_0^t  v(u, C(t, Z)) \rmd u\right) & & \left(\text{Using \Cref{lemma:torus:torus_decomp}}\right) \\
        &= f_L\left(h_{A,b}\left(Z + \int_0^t  v(u, C(t, Z)) \rmd u\right)\right) \\
        &= g(T(Z))\eqsp.
    \end{align*}
\end{proof}

\begin{proposition}\label{prop:egnn_is_equi}
    The velocity field in \Cref{eq:architecture} is $\invgroup$-equivariant.
\end{proposition}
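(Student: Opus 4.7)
The plan is to establish $\invgroup$-equivariance by induction on the GNN layer index $k$. Fix $g = f_L \circ h_{A,b} \in \invgroup$ with the decomposition of \Cref{lemma:decomp_g} (permutation $\sigma$, signed permutation matrix $M$, shift $c$), and write $\tilde{h} = \tilde{h}_{M,c,L}$. The induction hypothesis I would carry, for every layer $k$ and every particle index $i$, is
\begin{align*}
H_i^k[g(C)] &= H_{\sigma(i)}^k[C], \\
\selectparticle{X^k}{i}[g(C)] &= \tilde{h}\!\left(\selectparticle{X^k}{\sigma(i)}[C]\right),
\end{align*}
where $[C]$ and $[g(C)]$ denote quantities evaluated along the forward pass starting from $C$ and $g(C)$ respectively. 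The base case $k = 0$ is immediate: $H_i^0 = (t, s_i)$ picks up the permuted species $(A_\sigma s)_i = s_{\sigma(i)}$, and the position relation is exactly the defining formula of $g^X$ in \Cref{lemma:decomp_g}.

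For the inductive step, I would propagate equivariance through each intermediate quantity using the torus lemmas of \Cref{app:invariances}. The squared torus distance entering the edge network is $\invgroup$-invariant by \Cref{cor:torus:dist_is_invar}, which combined with the inductive hypothesis on $H$ yields $M_{ij}^k[g(C)] = M_{\sigma(i)\sigma(j)}^k[C]$ (here the subscripted $M_{ij}^k$ denotes the GNN message, not the signed permutation matrix). Reindexing $j' = \sigma(j)$ in the aggregation then gives $P_i^k[g(C)] = P_{\sigma(i)}^k[C]$, and hence $H_i^{k+1}[g(C)] = H_{\sigma(i)}^{k+1}[C]$ through $\hat{\phi}_h$. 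For the position update, \Cref{cor:torus:log_is_log} rewrites each log term as $\log_{\tilde{h}(X^k_{\sigma(j)})} \tilde{h}(X^k_{\sigma(i)}) = M\, \log_{X^k_{\sigma(j)}} X^k_{\sigma(i)}$; linearity of $M$ lets me factor it out of the sum, and after the same reindexing I obtain
$$
\selectparticle{X^{k+1}}{i}[g(C)] \;=\; \exp_{\tilde{h}(X^k_{\sigma(i)})}\!\left( M\, V_{\sigma(i)} \right),
$$
where $V_{\sigma(i)}$ is precisely the tangent vector that defines $\selectparticle{X^{k+1}}{\sigma(i)}[C]$ as $\exp_{X^k_{\sigma(i)}} V_{\sigma(i)}$. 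A final application of \Cref{prop:torus:M_and_exp}, i.e., $\exp_{\tilde{h}(A)}(MB) = \tilde{h}(\exp_A B)$, closes the induction.

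At the final layer $K$, the species block of $\hat{v}$ is the zero vector, trivially preserved by the species block $A_\sigma$ of $A$. Each position component reads
$$
\log_{\selectparticle{X}{i}[g(C)]} \selectparticle{X^K}{i}[g(C)] \;=\; \log_{\tilde{h}(X_{\sigma(i)})} \tilde{h}(X^K_{\sigma(i)}) \;=\; M\, \log_{X_{\sigma(i)}} X^K_{\sigma(i)},
$$
again by \Cref{cor:torus:log_is_log}. Reading off the block structure of $A$ in \Cref{eq:A_in_g_decomp}, this is exactly the $i$-th block of $(A_\sigma \otimes I_d)(I_N \otimes M)\, \hat{v}^X(t,C)$, i.e., of $A\, \hat{v}(t,C)$, as required. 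The main obstacle I expect is the bookkeeping: at each layer one has to track carefully how the index permutation $\sigma$ interacts with the nonlinear geometric action $\tilde{h}$, reindex every aggregation $j \mapsto \sigma(j)$ at the right moment, and invoke the $\exp$/$\log$ equivariance lemmas in the proper order so that $M$ can be pulled through the sum before being absorbed by $\exp_{\tilde{h}(\cdot)}$ via \Cref{prop:torus:M_and_exp}.
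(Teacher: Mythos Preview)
Your proposal is correct and follows essentially the same route as the paper: induction on the layer index $k$ carrying the permutation-equivariance of the node features $H^k$ and the $\tilde{h}_{M,c,L}$-equivariance of the positions $X^k$, using \Cref{cor:torus:dist_is_invar} for the message invariance, \Cref{cor:torus:log_is_log} to pull $M$ through the $\log$ terms, and \Cref{prop:torus:M_and_exp} to close the position update. The only cosmetic difference is that the paper tracks the message equivariance $M^k_{ij}[g(C)] = M^k_{\sigma(i)\sigma(j)}[C]$ explicitly as part of the induction hypothesis, whereas you derive it within the step; the content is identical.
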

\begin{proof}
    Let $t \in [0,1]$ and $C \in \gC$. We define
    \begin{align*}
        \psi_k(t, C) = (\psi^s_k(t, C), \psi^X_k(t, C)) = (s, X^k), ~~\Gamma_k(t, C) = H^k ~\text{ and }~ \Lambda_k(t, C) = M^k\eqsp.
    \end{align*}
    We start by showing by induction on $k \in \mathbb{N}$ that for all $g \in \invgroup$ and $i, j \in \iinter{1}{N}$, we have
    $$
        \psi_k(t, g(C)) = g(\psi_k(t, C)), ~ \left[\Gamma_k(t, g(C))\right]_{i} = \left[\Gamma_k(t, C)\right]_{\sigma(i)} ~\text{ and }~ \left[\Lambda_k(t, g(C))\right]_{i,j} = \left[\Lambda_k(t, C)\right]_{\sigma(i),\sigma(j)}\eqsp,
    $$
    where, building on \Cref{eq:decomp_g}, we decompose $g$ as
    $$
        g : \begin{pmatrix}
            s \\
            X
        \end{pmatrix} \mapsto \begin{pmatrix}
            g^s(C) \\
            g^X(C)
        \end{pmatrix} = \begin{pmatrix}
            A_{\sigma} s \\
            (A_{\sigma} \otimes I_d) h_{M, b, L}(X)
        \end{pmatrix}\eqsp,
    $$
    $$
        h_{M, b, L}(X) = \torusmod{(I_N \otimes M) X + (\mathbf{1}_N \otimes b)}\eqsp,
    $$
    where $h_{M, b, L}$ describes the group of invariances restricted to $\torus^N$ (denoted $\gG_{\torus^N}$), $\sigma \in S_N$ is a permutation, $A_{\sigma}$ is the associated permutation matrix of size $N \times N$, $M$ is a signed permutation matrix of size $d \times d$ and $b$ is a vector in $\sR^{d}$.

    At $k = 0$, we have $\psi_0(t, g(C)) = g(C) = g(\psi_0(t, C))$ and for all $i \in \iinter{1}{N}$
    $$
         \left[\Gamma_0(t, g(C))\right]_i = \left(t, \left[g^s(C)\right]_i\right) = \left(t, s_{\sigma(i)}\right) = \left[\Gamma_0(t, C)\right]_{\sigma(i)}\eqsp.
    $$
    Moreover, for all $i,j \in \iinter{1}{N}$
    $$
        \left[\Lambda_0(t, g(C))\right]_{i,j} = \hat{\phi}_e\left(\left[\Gamma_0(t, g(C))\right]_i, \left[\Gamma_0(t, g(C))\right]_j, \torusdist{\selectparticle{h_{M, b, L}(X)}{\sigma(i)}}{\selectparticle{h_{M, b, L}(X)}{\sigma(j)}}\right)\eqsp.
    $$
    Using the previous statement on $\Gamma_0$ as well as the invariance of $\mathrm{d}_{\torus}$ to $\gG_{\torus}$ (see \Cref{cor:torus:dist_is_invar}),
    \begin{align*}
        \left[\Lambda_0(t, g(C))\right]_{i,j} &= \hat{\phi}_e\left(\left[\Gamma_0(t, C)\right]_{\sigma(i)}, \left[\Gamma_0(t, C)\right]_{\sigma(j)}, \torusdist{\selectparticle{X}{\sigma(i)}}{\selectparticle{X}{\sigma(j)}}\right)\eqsp,\\
        &= \left[\Lambda_k(t, g(C))\right]_{\sigma(i),\sigma(j)}\eqsp.
    \end{align*}

    Let $k \in \mathbb{N}$ and assume that
    $$
        \psi_k(t, g(C)) = g(\psi_k(t, C)), ~ \left[\Gamma_k(t, g(C))\right]_{i} = \left[\Gamma_k(t, C)\right]_{\sigma(i)} ~\text{ and }~ \left[\Lambda_k(t, g(C))\right]_{i,j} = \left[\Lambda_k(t, C)\right]_{\sigma(i),\sigma(j)}\eqsp,
    $$
    Let $i, j \in \iinter{1}{N}$, using the recursion assumption and the permutation invariance of the sum, we have that
    \begin{align*}
        \left[\Gamma_{k+1}(t, g(C))\right]_i &= \hat{\phi}_h\left(\left[\Gamma_{k}(t, g(C))\right]_i, \sum_{i \neq j} \hat{\phi}_m\left(\left[\psi_k(t, g(C))\right]_{i,j}\right) \left[\psi_k(t, g(C))\right]_{i,j}\right)\eqsp, \\
        &= \hat{\phi}_h\left(\left[\Gamma_{k}(t, C)\right]_{\sigma(i)}, \sum_{i \neq j} \hat{\phi}_m\left(\left[\psi_k(t, C)\right]_{\sigma(i),\sigma(j)}\right) \left[\psi_k(t, C)\right]_{\sigma(i),\sigma(j)}\right)\eqsp, \\
        &= \hat{\phi}_h\left(\left[\Gamma_{k}(t, C)\right]_{\sigma(i)}, \sum_{i \neq j} \hat{\phi}_m\left(\left[\psi_k(t, C)\right]_{\sigma(i),j}\right) \left[\psi_k(t, C)\right]_{\sigma(i),j}\right)\eqsp, \\
        &= \left[\Gamma_{k+1}(t, C)\right]_{\sigma(i)}\eqsp.
    \end{align*}
    Similarly, using the same argument as in $k = 0$, it is easy to show that
    $$
        [\Lambda_{k+1}(t, g(C))]_{i,j} = [\Lambda_k(t, C)]_{\sigma(i),\sigma(j)}\eqsp.
    $$
    Additionally, we have
    $$
        \psi_{k+1}^s(t, g(C)) = g^s(C) = g^s(\psi_{k}^s(t, C))\eqsp.
    $$
    Moreover, using the definition and the previous statement on $\Lambda_k$
    \begin{align*}
        \selectparticle{\psi^X_{k+1}(t, g(C))}{i} &= \exp_{\selectparticle{\psi^X_k(t, g(C))}{i}} \left(\sum_{i \neq j} \frac{\log_{\selectparticle{\psi^X_k(t, g(C))}{j}} \selectparticle{\psi^X_k(t, g(C))}{i}}{\torusdist{\selectparticle{g^X(C)}{i}}{\selectparticle{g^X(C)}{j}} + 1} \phi_d([\Lambda_k(t, g(C))]_{i,j})\right)\eqsp, \\
        &= \exp_{\selectparticle{\psi^X_k(t, g(C))}{i}} \left(\sum_{i \neq j} \frac{\log_{\selectparticle{\psi^X_k(t, g(C))}{j}} \selectparticle{\psi^X_k(t, g(C))}{i}}{\torusdist{\selectparticle{g^X(C)}{i}}{\selectparticle{g^X(C)}{j}} + 1} \phi_d([\Lambda_k(t, C)]_{\sigma(i),\sigma(j)})\right)\eqsp.
    \end{align*}
    Using the recursion assumption together with \Cref{cor:torus:log_is_log} we have
    \begin{align*}
        \log_{\selectparticle{\psi^X_k(t, g(C))}{j}} \selectparticle{\psi^X_k(t, g(C))}{i} &= \log_{\selectparticle{g^X(\psi_k(t, C))}{j}} \selectparticle{g^X(\psi_k(t, C))}{i}\eqsp, \\
        &= \log_{\selectparticle{h_{M,b,L}(\psi^X_k(t, C))}{\sigma(j)}} \selectparticle{h_{M,b,L}(\psi^X_k(t, C))}{\sigma(i)}\eqsp, \\
        &= M \log_{\selectparticle{\psi^X_k(t, C)}{\sigma(j)}} \selectparticle{\psi^X_k(t, C)}{\sigma(j)}\eqsp.
    \end{align*}
    Using \Cref{prop:torus:M_and_exp} and the recursion assumption again, we get that
    $$
        \selectparticle{\psi^X_{k+1}(t, g(C))}{i} = \torusmod{M \exp_{\selectparticle{\psi^X_k(t, C)}{\sigma(i)}} \sum_{i \neq j} \frac{\log_{\selectparticle{\psi^X_k(t, C)}{\sigma(j)}} \selectparticle{\psi^X_k(t, C)}{\sigma(i)}}{\torusdist{\selectparticle{g^X(C)}{i}}{\selectparticle{g^X(C)}{j}} + 1} \phi_d([\Lambda_k(t, C)]_{\sigma(i),\sigma(j)}) + b}\eqsp.
    $$
    Together with the invariance of $\mathrm{d}_{\torus}$ and the permutation invariance of the sum, it leads to
    \begin{align*}
        \selectparticle{\psi^X_{k+1}(t, g(C))}{i} &= \torusmod{M \exp_{\selectparticle{\psi^X_k(t, C)}{\sigma(i)}} \sum_{i \neq j} \frac{\log_{\selectparticle{\psi^X_k(t, C)}{j}} \selectparticle{\psi^X_k(t, C)}{\sigma(i)}}{\torusdist{\selectparticle{X}{\sigma(i)}}{\selectparticle{X}{j}} + 1} \phi_d([\Lambda_k(t, C)]_{\sigma(i),j}) + b}\eqsp,\\
        &= \selectparticle{\left[\torusmod{M \selectparticle{\psi^X_{k+1}(t, g(C))}{i} + b}\right]}{\sigma(i)}\eqsp, \\
        &= \selectparticle{g^X(\psi^X_{k+1}(t, C))}{i}\eqsp,
    \end{align*}
    which concludes the recursive proof. We get the overall proof using \Cref{cor:torus:log_is_log} as
    \begin{align*}
         \hat{v}(t, g(C)) &= \begin{pmatrix}
            \mathbf{0}_{N \times d_s} \\
            \left\{\log_{\selectparticle{g^X(X)}{i}} \selectparticle{\psi^X_{K}(t, g(C))}{i}\right\}_{i=1}^N
        \end{pmatrix}\eqsp, \\
        &= \begin{pmatrix}
            \mathbf{0}_{N \times d_s} \\
            \left\{\log_{\selectparticle{g^X(X)}{i}} \selectparticle{g^X(\psi^X_{K}(t, C))}{i}\right\}_{i=1}^N
        \end{pmatrix}\eqsp, & \left(\text{Recursion}\right)\\
        &= \begin{pmatrix}
            \mathbf{0}_{N \times d_s} \\
            \left\{M \log_{\selectparticle{X}{\sigma(i)}} \selectparticle{\psi^X_{K}(t, C)}{\sigma(i)}\right\}_{i=1}^N
        \end{pmatrix}\eqsp, & \left(\text{\Cref{cor:torus:log_is_log}}\right)\\
        &= A \hat{v}(t, C)
    \end{align*}
\end{proof}

\section{Numerical details} 

\label{app:impl_details}

In this section, we report the implementation, architectural, and training details for all numerical simulations on the physical systems defined in \Cref{sec:methods}. For the Equivariant Riemannian Stochastic Interpolant (ERSI), the velocity field is parameterized via our Equivariant Graph Neural Network (EGNN) architecture (\Cref{eq:architecture}). In the base $K=3, \text{HF}=32$ network (containing $\approx 22\text{k}$ parameters), particle species are embedded via one-hot encoding, $\hat{\phi}_e$ and $\hat{\phi}_h$ are 3-layer MLPs of width 32, and $\hat{\phi}_m$ is a 2-layer MLP of width 32.
For scaled ERSI architectures ($4 \mid 64$, $5 \mid 128$, and $10 \mid 128$), the MLP hidden widths are set equal to $\text{HF}$, and the number of message-passing layers is set to $K$.
For the EFM baseline, we adopt the architecture of \cite{satorras2021en_gnn} with standard Euclidean operators. 
For the non-equivariant RSI baseline, particle configurations are processed directly by a 3-layer MLP with 64 hidden units per layer, projecting coordinates onto the torus following \cite[Eq.~26]{chen2024flow}; testing explicit data augmentation via random group actions of $\invgroup$ during training yielded no noticeable performance recovery. 
All models were trained on $10^5$ equilibrium configurations using the Adam optimizer.
Hyperparameters—including learning rates, batch sizes, and gradient clipping settings—were selected via validation loss and are summarized in \Cref{tab:hyperparams}.
Generative sampling and exact likelihood evaluations (\Cref{eq:ode_likelihood}) are performed using the adaptive \texttt{dopri5} Runge–Kutta solver from \texttt{torchdiffeq} \cite{torchdiffeq} with absolute and relative integration tolerances set to $10^{-5}$.

\begin{table}[h!]
	\centering
	\small
	\caption{Hyperparameter settings, network architectures $(K \mid \text{HF})$, batch sizes, training steps, learning rates, and gradient clipping thresholds across all studied glass models, state points, and baseline methods.}
	\label{tab:hyperparams}
	\begin{tabular}{lcclcccccc}
		\toprule
		System & $N$ & $T$ & Model & Architecture $(K \mid \text{HF})$ & Batch Size & Training Steps & Learning Rate & Gradient Clipping \\
		\midrule
		\multirow{6}{*}{\shortstack[c]{Binary\\Mixture}} 
               & 10 & 0.10 & ERSI & EGNN $3 \mid 32$  & 2048 & $5  \times 10^4$ & $1\times10^{-4}$ & 2.0 \\
		       & 10 & 0.10 & EFM  & EGNN $3 \mid 32$  & 2048 & $5  \times 10^4$ & $5\times10^{-3}$ & 2.0 \\
		       & 10 & 0.10 & RSI  & MLP          & 2048 & $5  \times 10^4$ & $5\times10^{-4}$ & 2.0 \\
		\cmidrule(lr){2-9}
               & 44 & 0.10 & ERSI & EGNN $3 \mid 32$  & 1024 & $10^5$ & $5\times10^{-3}$ & 2.0 \\
		       & 44 & 0.10 & EFM  & EGNN $3 \mid 32$  & 1024 & $10^5$ & $1\times10^{-3}$ & 2.0 \\
		       & 44 & 0.10 & RSI  & MLP          & 1024 & $10^5$ & $5\times10^{-5}$ & 2.0 \\
		\midrule
		\multirow{4}{*}{\shortstack[c]{Ternary\\Mixture}} 
		       & 44 & 1.00 & ERSI & EGNN $3 \mid 32$  & 1024 & $5  \times 10^4$ & $5\times10^{-3}$ & None \\
		       & 44 & 1.00 & ERSI & EGNN $4 \mid 64$  & 1024 & $5  \times 10^4$ & $5\times10^{-3}$ & None \\
		       & 44 & 1.00 & ERSI & EGNN $5 \mid 128$ & 1024 & $5  \times 10^4$ & $1\times10^{-3}$ & None \\
		\cmidrule(lr){2-9}
		       & 44 & 0.32 & ERSI & EGNN $10 \mid 128$ & 512  & $2.5 \times 10^4$ & $1\times10^{-4}$ & None \\
		\bottomrule
	\end{tabular}
\end{table}

\end{document}